\renewcommand{\cite}{\citep}
\newlength\savewidth
\newtheorem{theorem}{Theorem}[section]
\newtheorem{lemma}[theorem]{Lemma}
\theoremstyle{definition}
\newtheorem{assumption}[theorem]{Assumption}
\newtheorem{definition}[theorem]{Definition}
\theoremstyle{remark}
\newtheorem{remark}[theorem]{Remark}
\newtheorem{example}[theorem]{Example}
\title{Domain-Adjusted Regression or: \\ERM May Already Learn Features Sufficient for Out-of-Distribution Generalization}
\author{%
	\large{Elan Rosenfeld} \\
	\large{Carnegie Mellon University}\\
	{\texttt{elan@cmu.edu}} \\
	\AND
	\large{Pradeep Ravikumar} \\
	\large{Carnegie Mellon University} \\
	{\texttt{pradeepr@cs.cmu.edu }} \\
	\And
	\large{Andrej Risteski} \\
	\large{Carnegie Mellon University} \\
	{\texttt{aristesk@andrew.cmu.edu}} \\
}
\def\eqref#1{equation~\ref{#1}}
\def\1{\bm{1}}
\DeclareMathAlphabet{\mathsfit}{\encodingdefault}{\sfdefault}{m}{sl}
\SetMathAlphabet{\mathsfit}{bold}{\encodingdefault}{\sfdefault}{bx}{n}
\newcommand{\E}{\mathbb{E}}
\newcommand{\R}{\mathbb{R}}
\DeclareMathOperator*{\argmin}{arg\,min}
\DeclareMathOperator{\Tr}{Tr}
\setlist{leftmargin=4mm,itemsep=0pt,topsep=0pt}
\def\blfootnote{\gdef\@thefnmark{}\@footnotetext}
\colorlet{alternateRowColor}{magenta!10}
\newcommand{\coloredBelowRuleSep}[1]{
    \arrayrulecolor{#1}
    \specialrule{\belowrulesep}{0pt}{0pt}
    \arrayrulecolor{black}
}
\newcommand{\coloredMidrule}[2]{
    \arrayrulecolor{#1}
    \specialrule{\aboverulesep}{0pt}{0pt}
    \arrayrulecolor{black}
    \specialrule{\lightrulewidth}{0pt}{0pt}
    \coloredBelowRuleSep{#2}
}
\newcommand{\twonorm}[1]{\ensuremath{\| #1 \|}}
\newcommand{\twonormsq}[1]{\ensuremath{\| #1 \|^2}}
\newcommand{\calA}{\mathcal{A}}
\newcommand{\calE}{\mathcal{E}}
\newcommand{\calL}{\mathcal{L}}
\newcommand{\calN}{\mathcal{N}}
\newcommand{\calR}{\mathcal{R}}
\DeclareMathOperator{\rank}{rank}
\newcommand{\noise}{\epsilon}
\newcommand{\obs}{x}
\newcommand{\offdiagconst}{B}
\newcommand{\newenv}{{e'}}
\newcommand{\newsig}{\Sigma_{\newenv}}
\newcommand{\newmu}{\mu_{\newenv}}
\newcommand{\sigest}{{\bar\Sigma}}
\newcommand{\muest}{{\bar\mu}}
\newcommand{\minv}[1]{\ensuremath{#1^{-1}}}
\newcommand{\minvsqrt}[1]{\ensuremath{#1^{-1/2}}}
\newcommand{\msqrt}[1]{\ensuremath{#1^{1/2}}}
\newcommand{\Cmat}{\mathbb{B}}
\newcommand{\bproject}{{\hat\Pi}}
\newcommand{\groundtruthpi}{{\Pi}}
\newcommand{\envbsigma}{\Sigma_b}
\newcommand{\envb}{b_e}
\newcommand{\enva}{A_e}
\newcommand{\dare}{\textsc{DARE}\xspace}
\newcommand{\jituda}{JIT-UDA\xspace}
\newcommand{\enote}[1]{\textcolor{red}{[E: #1]}}
\newcommand{\anote}[1]{\textcolor{blue}{[A: #1]}}
\newcommand{\pnote}[1]{\textcolor{orange}{[P: #1]}}
\newcommand{\enote}[1]{}
\newcommand{\anote}[1]{}
\newcommand{\pnote}[1]{}
\begin{document}

\maketitle

\begin{abstract}
\small
A common explanation for the failure of deep networks to generalize out-of-distribution is that they fail to recover the ``correct'' features. We challenge this notion with a simple experiment which suggests that ERM already learns sufficient features and that the current bottleneck is not feature learning, but \emph{robust regression}. Our findings also imply that given a small amount of data from the target distribution, retraining only the last linear layer will give excellent performance. We therefore argue that devising simpler methods for learning  predictors on existing features is a promising direction for future research. Towards this end, we introduce \emph{Domain-Adjusted Regression} (\dare), a convex objective for learning a linear predictor that is provably robust under a new model of distribution shift. Rather than learning one function, \dare performs a domain-specific adjustment to unify the domains in a canonical latent space and learns to predict in this space. Under a natural model, we prove that the \dare solution is the minimax-optimal predictor for a constrained set of test distributions. Further, we provide the first finite-environment convergence guarantee to the minimax risk, improving over existing analyses which only yield minimax predictors after an environment threshold. Evaluated on finetuned features, we find that \dare compares favorably to prior methods, consistently achieving equal or better performance.
\end{abstract}

\section{Introduction}
\label{sec:intro}

The historical motivation for deep learning focuses on the ability of deep neural networks to automatically learn rich, hierarchical features of complex data \citep{lecun2015deep, GoodBengCour16}. Simple Empirical Risk Minimization (ERM), with appropriate regularization, results in high-quality representations which surpass carefully hand-selected features on a wide variety of downstream tasks. Despite these successes, or perhaps because of them, the dominant focus of late is on the shortcomings of this approach: recent work points to the failure of networks trained with ERM to generalize under even moderate distribution shift \citep{recht19imagenet, miller2020effect}. A common explanation for this phenomenon is reliance on ``spurious correlations'' or ``shortcuts'', where a network makes predictions based on structure in the data which generalizes on average in the training set but may not persist in future test distributions \citep{poliak2018hypothesis, kaushik2018how, geirhos2019imagenet, xiao2021noise}.

Many proposed solutions \emph{implicitly assume} that this problem is due to the entire neural network: they suggest an alternate objective to be minimized over a deep network in an end-to-end fashion \citep{sun2016coral, ganin2016domain, arjovsky2019invariant}. These objectives are complex, poorly understood, and difficult to optimize. Indeed, the efficacy of many such objectives was recently called into serious question \citep{zhao2019learning, rosenfeld2021risks, gulrajani2021search}. Though a neural network is often viewed as a deep feature embedder with a final linear predictor applied to the features, it is still unclear---and to our knowledge \emph{has not been directly asked or tested}---whether these issues are primarily because of (i) learning the wrong features or (ii) learning good features but failing to find the best-generalizing linear predictor on top of them.

We begin with a simple experiment (\cref{fig:cheat-accuracy}) to try to distinguish between these two possibilities: we train a deep network with ERM on several domain generalization benchmarks, where the task is to learn a predictor using a collection of distinct training domains and then perform well on a new, unseen domain. After training, we freeze the features and separately learn a linear classifier on top of them. Crucially, when training this classifier (i.e., retraining just the last linear layer), we give it an unreasonable advantage by optimizing on both the train and test domains---henceforth we refer to this as ``cheating''. Since we use just a linear classifier, this process establishes a lower bound on what performance we could plausibly achieve using standard ERM features, with access to data from the target distribution. We then separately cheat while training the full network end-to-end, simulating the idealized setting with no distribution shift. Note that in neither case do we train on the test \emph{points}; our cheating entails training on (different) samples from the test \emph{domain}, which are assumed unavailable in domain generalization.

\begin{figure}[t]
    \centering
    \includegraphics[width=0.93\linewidth]{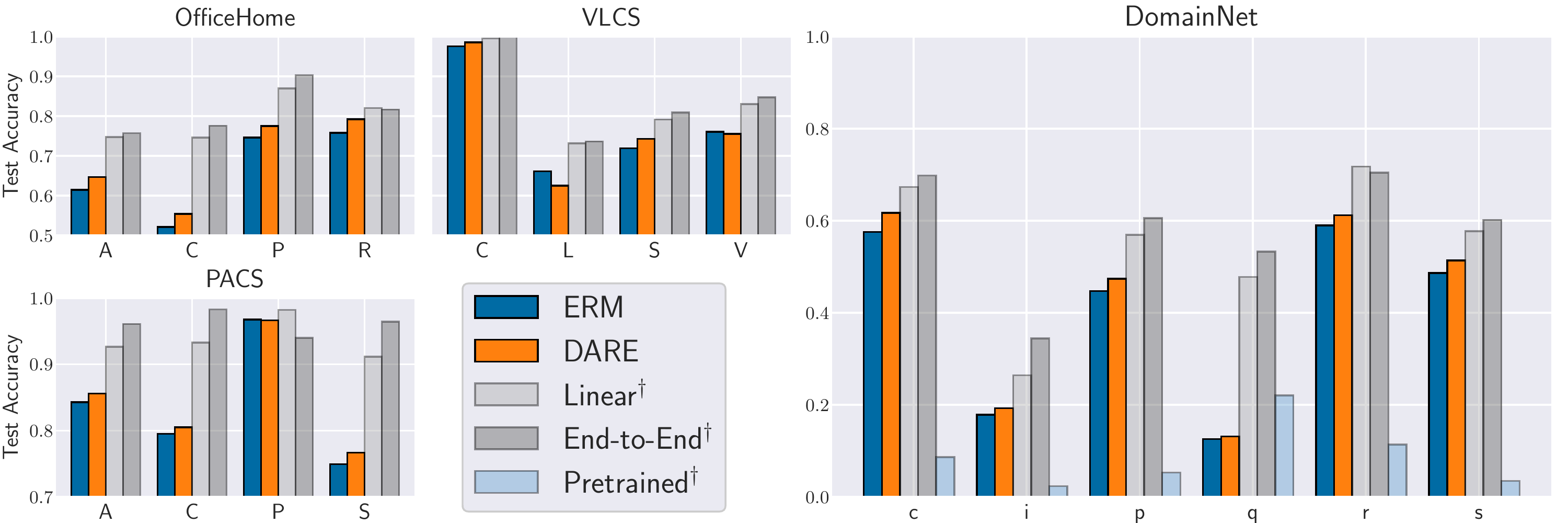}
    \caption{Accuracy via ``cheating'':
    (mean over 3 trials) dagger ($\dagger$) denotes access to test domain at train-time. Each letter is a domain. Dark blue is approximate SOTA, orange is our proposed \dare objective, light grey represents cheating while retraining the linear classifier only. All three methods use the \emph{same features}, attained without cheating. Dark grey is ``ideal'' accuracy, cheating while training the entire deep network. Surprisingly, cheating only for the linear classifier rivals cheating for the whole network. Cheating accuracy on pretrained features (light blue) makes clear that this effect is due to finetuning on the train domains, and not simply overparameterization (i.e., a very large number of features).}
    \label{fig:cheat-accuracy}
\end{figure}

Notably, we find that simple (cheating) logistic regression on frozen deep features learned via ERM results in enormous improvements over current state of the art, on the order of 10-15\%. In fact, it usually performs comparably to the full cheating method---which learns both features and classifier end-to-end with test domain access---sometimes even outperforming it. Put another way, \textbf{cheating while training the entire network rarely does significantly better than cheating while training just the last linear layer.} One possible explanation for this is that the pretrained model is so overparametrized as to effectively be a kernel with universal approximation power (such as the Neural Tangent Kernel \citep{jacot2018neural, du2018gradient}); in this case, the outstanding performance of a cheating linear classifier on top of these features would be unsurprising. However, we find that this cheating method does \emph{not} ensure good performance on pretrained features, which implies that we are not yet in such a regime and that the effect we observe is indeed due to finetuning via ERM. \textbf{Collectively, these results suggest that features learned by ERM may be ``good enough'' for out-of-distribution generalization in deep learning and that the current bottleneck lies primarily in learning a simple, robust predictor.}

Motivated by these findings, we propose a new objective, which we call Domain-Adjusted Regression (\dare). The \dare objective is convex and it learns a linear predictor on frozen features. Unlike invariant prediction \citep{peters2016causal}, which projects out feature variation such that a single predictor performs acceptably on very different domains, \dare performs a domain-specific adjustment to unify the environmental features in a canonical latent space. Based on the presumption that standard ERM features are good enough (made formal in \cref{sec:model}), \dare enjoys strong theoretical guarantees: under a new model of distribution shift which captures ideas from invariant/non-invariant latent variable models, we precisely characterize the adversarial risk of the \dare solution against a natural perturbation set, and we prove that this risk is \emph{minimax}. The perturbation set consists of all test distributions where the adversary can introduce any change in directions in which the training domains already vary but is allowed only limited variation in new directions.
We further provide the first \emph{finite-environment} convergence guarantee to the minimax risk, improving over existing results which merely demonstrate a threshold in the number of observed environments at which the solution is discovered \citep{rosenfeld2021risks, chen2021iterative, wang2022provable}. Finally, we show how our objective can be modified to leverage access to unlabeled samples at test-time. We use this to derive a method for provably effective ``just-in-time'' unsupervised domain adaptation, for which we provide a finite-sample excess risk bound.

Evaluated on finetuned features, we find that \dare compares favorably to existing methods, consistently achieving equal or better performance. We also find that \textbf{methods which previously underperformed on these benchmarks do much better in this frozen feature setting, often besting ERM. This suggests that these approaches \emph{are} beneficial for linear prediction, and that when they do work, it is primarily due to learning a better linear classifier.} We hope these results will encourage going ``back to basics'', with future work focusing on simpler, easier to understand methods for robust prediction.

\section{ERM Learns Surprisingly Useful Features}
\label{sec:erm-conjecture}
Our experiments are motivated by a simple question: \textbf{is the observed failure of deep neural networks to generalize out-of-distribution more appropriately attributed to inadequate \emph{feature learning} or inadequate \emph{robust prediction}?} Both could undoubtedly be improved, but we are concerned with which currently serves as the primary bottleneck. It's typical to train the entire network end-to-end and then evaluate on a test distribution; in reality, this measures the quality of the interaction of the features and the classifier, not of either one individually.

Using datasets and methodology from \textsc{DomainBed} \citep{gulrajani2021search}, we finetune a ResNet-50 \citep{he2016deep} with ERM on the training domains to extract features. Next, we cheat while learning a linear classifier on top of these frozen features by optimizing on both the train and test domains. We compare this cheating linear classifier to a full cheating network trained on all domains end-to-end. If it is the case that ERM learns features which do not generalize,
we should expect that the cheating linear classifier will not substantially improve over the current state of the art and will perform significantly worse than cheating end-to-end, since the latter method can adapt the features to better suit the test domain.

Instead, we find that simply giving the linear predictor access to all domains while training makes up the vast majority of the gap between current state of the art (ERM with heavy regularization) and the ideal setting where we train a network on all domains. In other words, ERM produces features which are informative enough that a linear classifier on top of these frozen features is---in principle---capable of generalizing \emph{almost as well as if we had access to the test domain when training the entire network.}\footnote{On a few domains the linear method sees a gap of $\sim$5\% accuracy from the idealized setting. We emphasize that our use of a simple linear predictor serves as a \emph{lower bound} on the achievable error using ERM features. The end-to-end result represents ideal performance that would a priori seem totally out of reach with current methods.
} \cref{fig:eval-pipeline} in the Appendix depicts the evaluation methodology described above, along with a more detailed explanation.

We conjecture that this phenomenon occurs more broadly: our findings suggest that for many applications, existing features learned via ERM may be ``good enough'' for out-of-distribution generalization in deep learning. That is, ERM may already ``undo'' a large component of the non-linearities we observe in complex data, even for unseen distributions. \textbf{This observation also implies that in settings where we have access to a small amount of data from the target distribution, simply retraining the last linear layer on this data will suffice for excellent performance.}

Two immediate questions are in which other settings this holds and if, instead of using more elaborate approaches such as complex regularization, the remaining gap could be closed using simpler methods. Based on this idea, we posit that future work would benefit from modularity, working to improve representation learning and robust classification/regression separately.\footnote{We are not suggesting an abandonment of the end-to-end framework; we believe both methods have merit. For example, the fact that the representation suffices does not imply that it corresponds to meaningful semantic features, and encouraging such a correspondence may depend on learning a system end-to-end.} There are several distinct advantages to this approach:
\begin{itemize}
    \item \textbf{More robust comparisons and better reproducibility:} Current methods have myriad degrees of freedom which makes informative comparisons difficult; evaluating feature learning and robust regression separately eliminates many of these sources of experimental variation.
    \item \textbf{Less compute overhead:} 
    Training large networks to learn both features and a classifier is expensive. Benchmarks could include files with the weights for ready-to-use deep feature embedders trained with various objectives---these models can be much larger and trained on much more data than would be feasible for many. Using these features, academic researchers could thus make faster progress on better classifiers with less compute.
   \item \textbf{Modular theoretical guarantees:} 
    Conditioning on the frozen features, we can use more classical analyses to provide guarantees for the simpler parametric classifiers learned on top of these features. For example, \citet{bansal2021rationality} derive a generalization bound for simpler classifiers which is agnostic to the complexity of the features on which they are trained.
\end{itemize}

We conclude by emphasizing that while the predictor in our experiments is linear, future methods need not be. Rather, we are highlighting that \textbf{there may be no need for complex, expensive, highly variable regularization of a deep network when much simpler approaches suffice.} 

\section{The Domain-Adjusted Regression Objective}
\label{sec:dare-objective}
The goal of many prior methods in deep domain adaptation or generalization is to learn a single network which does well on all environments simultaneously, often by throwing away non-invariant components \citep{peng2019Domain, arjovsky2019invariant}. While invariance is a powerful framework, there are some clear drawbacks
, such as the need to throw away possibly informative features---in addition to explicitly ignoring possibly informative features, \citet{zhao2019learning} show that marginal feature invariance cannot ensure generalization, and \citet{stojanov2021domain} develop a toy setting where no single feature embedder can be sufficient.

Instead, we reframe the problem by thinking of each training domain as a distinct transformation from a shared canonical representation space. In this framing, we can ``adjust'' each domain in order to undo these transformations, aligning the representations to learn a single robust predictor in this unified space. Specifically, we propose to whiten each domain's features to have zero mean and identity covariance, which can be thought of as a ``domain-specific batchnorm'' but using the full feature covariance. This idea is not totally new: some prior works align the moments between domains to improve generalization. The difference is that \dare does not learn a single featurizer which aligns the moments, but rather \emph{aligns the moments of already learned features}; the latter approach maintains useful variation between domains which would be eliminated by the former.

\dare is closer to methods which 
learn separate batchnorm parameters per domain over a deep network, possibly adjusting at test-time \citep{seo2019learning, chang2019domain, segu2020batch}---these methods perform well, but they are entirely heuristic based, difficult to optimize, and come with no formal guarantees. Our theoretical analysis thus serves as preliminary justification for the observed benefits of such methods, which have so far lacked serious grounding.

To begin, define $\calE$ as the set of observed training environments, each of which is defined by a distribution $p^e(\obs,y)$ over features $\obs\in\R^d$ and class labels $y\in[k]$. For each $e\in\calE$, denote the mean and covariance of the features as $\mu_e, \Sigma_e$. Our first step is to adjust the features via the whitening transformation $\minvsqrt{\Sigma_e}(\obs - \mu_e)$. If we believe that ERM is sufficient to recover a linear transformation of a ground-truth representation, whitening the data will then ``undo'' each domain's unique transformation.
Unfortunately, we cannot undo the test transformation because we have no way of knowing what the test mean will be. Interestingly, this is not a problem provided the predictor satisfies a simple constraint. Suppose that the predictor's output on the mean representation of each environment is a multiple of the all-ones vector. As softmax is invariant to a constant offset, this enforces that the environment mean has \emph{no effect} on the final probability distribution, and thus there is no need to adjust for the test-time mean. We therefore enforce this constraint during training, with the hope that the same invariance will approximately hold at test-time. Formally, the \dare objective finds a matrix $\beta\in\R^{d\times k}$ which solves
\begin{align}
        \label{eq:objective-logistic}
        \min_{\beta} \sum_{e\in\calE} \E_{p^e}[\ell(\beta^T \minvsqrt{\Sigma_e} \obs,\; y)]
        \quad \text{subject to } \textrm{softmax}\left(\beta^T \minvsqrt{\Sigma_e} \mu_e \right) = \frac{1}{k}\mathbf{1}. \;\;\forall e\in\calE,
\end{align}
where $\ell$ is the multinomial logistic loss. Thus, the \dare objective explicitly regresses on the adjusted features, while the constraint enforces that each environment mean has no effect on the output distribution to encourage predictions to also be invariant to test-time transformations. The astute reader will point out that we also do not know the correct whitening matrix for the test data---instead, we adjust using our best guess for the test covariance: denoting this estimate as $\sigest$, our prediction on a new sample $\obs$ is $f(x;\beta) = \text{softmax}\left( \beta^T\minvsqrt{\sigest} \obs \right)$. We prove that this prediction is minimax so long as our guess is ``sufficiently close'', and in practice we find that simply averaging the training domain adjustments performs well. \cref{table:covariance-similarity} in the Appendix shows this average is actually quite close to the sample covariance of the (unseen) test domain, explaining the good performance.

Note that unlike many prior methods, \dare does not enforce invariance of the features themselves. Rather, it \emph{aligns the representations} such that different domains share similar optimal predictors. To support this claim, \cref{table:classifier-alignment} displays the cosine similarity between optimal linear classifiers for individual domains---we observe a large increase in average similarity as a result of the feature adjustment. Further, in \cref{sec:theory} we demonstrate that the \dare objective is in fact minimizing the worst-case risk under a constrained set of possible distribution shifts, and it is therefore less conservative than methods which require complete feature invariance.


\paragraph{Implementation in practice.}
Due to its convexity, the \dare objective is extremely simple to optimize. In practice, we finetune a deep network over the training data with ERM and then extract the features. Next, treating the frozen features of the training data as direct observations $x$, we minimize the empirical Lagrangian form:
\begin{align*}
    \hat\calL^\lambda_\text{cls}(\beta) := \frac{1}{|\calE|} \sum_{e\in\calE} \biggl[ \frac{1}{n_e} \sum_{i=1}^{n_e} \ell(\beta^T \minvsqrt{\hat\Sigma_e} x_i,\; y_i) + \lambda \ell(\beta^T \minvsqrt{\hat\Sigma_e} \hat\mu_e, k^{-1}\mathbf{1}) \biggr],
\end{align*}
where $\hat\mu_e, \hat\Sigma_e$ are the usual sample estimates and $n_e$ is the number of samples in environment $e$. We find that the solution is incredibly robust to the choice of $\lambda$, but it is natural to wonder whether each of the components above is necessary for the performance gains we observe. We ablate both the whitening operation and the use of the constraint (\cref{app:addl-exps}) and see performance drops in both cases. We also consider estimating the test-time mean rather than enforcing invariance, but this results in substantially worse accuracy---the environment feature means vary quite a bit, so the estimate is usually inaccurate.

For regression, we consider the same setup but minimize mean squared error on targets $y\in\R$. Here, the \dare solution is constrained such that the mean output has no effect on the prediction, meaning each domain's mean prediction should be zero. We discuss this in more detail in \cref{app:anchor}, along with an interesting connection to anchor regression \citep{rothenhausler2018anchor}.

\paragraph{Leveraging unlabeled test samples.} 
Another benefit to our approach is that the adjustments for each domain do not depend on labels, so given unlabeled samples from the test domain we can often do even better. In this case, it is preferable to solve \eqref{eq:objective-logistic} without the constraint and use the empirical test covariance for $\sigest$. Such access occurs in the setting of unsupervised domain adaptation, but
unlike methods which use those samples while training (or even for test-time training), this adjustment can be done \emph{just-in-time}, without any retraining! We name this task \emph{Just-in-Time Unsupervised Domain Adaptation} (\jituda), and we provide finite-sample risk bounds for the \dare objective in this setting. We believe \jituda presents a promising direction for future theoretical research: it is much weaker---and arguably more realistic---to assume access to unlabeled samples only at test-time. \jituda is also amenable to analyses beyond worst-case, such as minimizing regret when observing test data sequentially \citep{rosenfeld2022online}.

\section{A New Model of Distribution Shift}
\label{sec:model}
The \dare objective is based on the intuition that all domains jointly share a representation space and that they arise as unique transformations from this space. To capture this notion mathematically, we model the joint distribution $p^e(\noise, y)$ over latents $\noise\in\R^d$ and label $y\in\{0,1\}$, along with an environment-specific transformation to observations $x\in\R^d$ for each domain. In the fully general case, this transformation can take an arbitrary form and can be written as $\obs = T_e(\noise, y)$.

Our primary assumption is that $p^e(y \mid \noise)$ is constant for all domains. Our goal is thus to invert each transformation $T_e$ such that we are learning an invariant conditional $p(y \mid T_e^{-1}(\obs))$ (throughout we assume $T_e$ is invertible). One can also view this model as a generalization of covariate shift: where the usual assumption is constancy of $p(y\mid x)$, the inverse transformation gives a richer model which can more realistically capture real-world variation across domains. It is important to note that this model generalizes (and has \emph{strictly weaker} requirements than) both IRM and domain-invariant representation learning, which can be recovered by assuming $T_e$ is the same for all environments. For example, the $T_e$ could correspond to different image ``styles''; we then would hope to learn to eliminate individual style variations.

For a typical deep learning analysis we would model $T_e$ as a non-linear generative map from latents to high-dimensional observations. However, our finding that ERM features are good enough suggests that modeling the learned features as a simple function of the ``true latents'' $\noise$ is not unreasonable. In other words, we now consider $x$ to represent the frozen features of the trained network, and we expect this network to have already ``undone'' the majority of the complexity, resulting in observations which are a simple function of the ground truth. Accordingly, we consider the following model:
\begin{align}
\label{eq:linear-model}
    \noise = \noise_0 + \envb,\;y = \mathbf{1}\{\beta^{*T}\noise + \eta \geq 0\},\;\obs = \enva \noise.
\end{align}
Here, $\noise_0\sim p^e(\noise_0)$, which we allow to be \emph{any domain-specific distribution}; we assume only that its mean is zero (such that $\E[\noise] = \envb$) and that its covariance exists. We fix $\beta^*\in\R^d$ for all domains and model $\eta$ as logistic noise. Finally, $\enva\in\R^{d\times d}, \envb\in\R^d$ are domain-specific. For regression, we model the same generative process for $x,\noise$, while for the response, we have $y\in\R$ and $\eta$ is zero-mean independent noise: $y = \beta^{*T}\noise + \eta$. We remark that the reason for separate definitions of $p^e(\noise_0)$ and $\envb$ is that our robustness guarantees are agnostic to the distribution of latent residuals $p^e(\noise_0)$---the \emph{only} aspect of the latent distribution $p(\noise)$ which affects our bounds is the environment mean $\envb$.

\paragraph{Connection to Invariant Prediction.} Statistical models of varying and invariant (latent) features have recently become a popular tool for analyzing the behavior of deep representation learning algorithms \citep{rosenfeld2021risks, chen2021iterative, wald2021calibration}. We see that \cref{eq:linear-model} can model such a setting by assuming, e.g., that a subspace of the columnspan of $\enva$ is constant for all $e$, while the remaining subspace can vary. In such a case, the features $x$ are expressed as the sum of a varying and an invariant component, and any minimax representation must remove the varying component, throwing away potentially useful information. Instead, \dare \emph{realigns} these components so that we can use them at test-time. We illustrate this with the following running example:

\begin{example}[Invariant Latent Subspace Recovery]
\label{example}
Consider the model (\ref{eq:linear-model}) with $\noise\sim\calN(0, I_{d_1+d_2})$. Define $\Pi :=
\begin{bmatrix}I_{d_1} & \mathbf{0} \\ \mathbf{0} & \mathbf{0}_{d_2}\end{bmatrix}$
and assume $\enva = \begin{bmatrix} \msqrt{\Sigma} & \mathbf{0} \\ \mathbf{0} & \msqrt{\Sigma_e} \end{bmatrix}$
for all $e$, where $\Sigma \in \R^{d_1\times d_1}$ is constant for all domains but $\Sigma_e \in \R^{d_2\times d_2}$ varies.
\end{example}

Here we have a simple latent variable model: the features have the decomposition $x = \enva\epsilon = \Sigma^{1/2} \Pi \epsilon +  \Sigma_{e}^{1/2} (I - \Pi) \epsilon$, where the first component is constant across environments and the second varies. It will be instructive at this point to analyze what an invariant prediction algorithm such as IRM would do in this setting. Here, the IRM constraint would enforce learning a featurizer $\Phi$ such that $\Phi(x)$ has an invariant relationship with the target. Under the above model, the solution is $\Phi(\obs) = \Pi \minv{\enva} \obs = \Sigma^{1/2} \Pi \epsilon$, retainining only the component lying in $\text{span}(\Pi)$. Crucially, removing these features actually results in \emph{worse} performance on the training environments---IRM only does so because using these features could harm test performance in the worst case. However, projecting out the varying component in this manner is unnecessarily conservative, as we may expect that for a future distribution, $\Sigma_\newenv$ will not be \emph{too} different from what we have seen before. Instead of projecting out this component, \dare performs a more nuanced alignment of environment subspaces, and it is thus able to take advantage of this additional informaton. We will see shortly the resulting benefits. This would imply that the minimax-optimal prediction must throw away the varying subspace. Instead, \dare \emph{realigns} the subspaces so that we can use them at test-time.

\section{Theoretical Analysis}
\label{sec:theory}
Before we can analyze the \dare objective, we observe that there is a possible degeneracy
in \cref{eq:linear-model}, since two identical observational distributions $p(x,y)$ can have different regression vectors $\beta^*$. We therefore begin with a simple assumption:
\begin{assumption}
\label{assumption:svd}
Write the SVD of $\enva$ as $U_eS_eV_e^T$. We assume $V_e = V\;\forall e$.
\end{assumption}

One special case where this holds is when $\enva$ is constant for all domains; this is very similar to the ``additive intervention'' setting of \citet{rothenhausler2018anchor}, since only $p^e, \envb$ can vary. We assume WLOG that $V=I$, as any other value can be subsumed by the other parameters. We further let $\E[\noise_0 \noise_0^T] = I$ WLOG by the same reasoning. With \cref{assumption:svd}, we can uniquely recover $A_e = U_e S_e$ via the eigendecomposition of the covariance $\Sigma_e = A_e A_e^T = U_e S_e^2 U_e^T$. We therefore use the notation $\msqrt{\Sigma_e}$ to refer to $A_e$ recovered in this way. We allow covariances to have zero eigenvalues, in which case we write the matrix inverse to implicitly refer to the pseudoinverse. As is standard in domain generalization analysis, unless stated otherwise we assume full distribution access to the training domains, though standard concentration inequalities could easily be applied.

\paragraph{A remark on our assumptions.} \cref{assumption:svd} and the constraint in \cref{eq:objective-logistic} are not trivial. Domain generalization is an exceptionally difficult problem, and showing anything meaningful requires \emph{some} assumption of consistency between train and test. Our assumptions are only as strong as necessary to prove our results, but future work could relax them, resulting in weaker but possibly still reasonable performance guarantees. Experiments in \cref{app:addl-exps} demonstrate that our covariance estimation is indeed accurate, and the fact that our method exceeds state of the art even with this strong constraint (and does worse without the constraint, see \cref{fig:ablate-lambda}) is further evidence that our assumptions are reasonable.

We begin by deriving the solution to Equations (\ref{eq:objective-logistic}) and (\ref{eq:objective-linear}). Recall that the \dare constraint requires that the mean representation of each domain has no effect on our prediction. To enforce this, the \dare solution must project out the subspace in which the means vary. Given a set of $E$ training environments, define $\Cmat$ as the $d\times E$ matrix whose columns are the environmental mean parameters $\envb$. Throughout this section, we make use of the matrix $\bproject$, which is defined as the orthogonal projection onto the nullspace of $\Cmat^T$: $\bproject := I - \Cmat \Cmat^\dagger = U_{\bproject} S_{\bproject} U_{\bproject}^T \in \R^{d\times d}$. This matrix projects onto the \dare constraint set, and it turns out to be all that is necessary to state the solution:
\begin{restatable}{theorem}{closedform}
\label{thm:closed-form}
\emph{(Closed-form solution to the \dare population objective).}
Under model (\ref{eq:linear-model}), the solution to the \dare population objective (\ref{eq:objective-linear}) for linear regression is $\bproject \beta^*$. If $\noise$ is Gaussian, then the solution for logistic regression (\ref{eq:objective-logistic}) is $\alpha \bproject \beta^*$ for some $\alpha\in(0,1]$.
\end{restatable}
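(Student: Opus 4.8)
The plan is to first use the model structure to turn both objectives into transparent finite-dimensional problems, then dispatch the regression case by a least-squares optimality check and the logistic case by a Gaussian symmetry argument. The key preliminary observation is that whitening \emph{exactly recovers the latent}: since $\obs=\enva\noise$ and, by \cref{assumption:svd} with $V=I$, $\msqrt{\Sigma_e}=\enva$, we have $\minvsqrt{\Sigma_e}\obs=\enva^{-1}\enva\noise=\noise$. Likewise $\minvsqrt{\Sigma_e}\mu_e=\enva^{-1}\enva\envb=\envb$, so in the binary/regression case the \dare constraint becomes $\beta^{T}\envb=0$ for all $e$, i.e. $\beta\in\mathcal{V}:=\mathrm{nullspace}(\Cmat^{T})=\mathrm{colspace}(\bproject)$. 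Because the columns of $\Cmat$ are precisely the $\envb$, every $\envb\in\mathcal{V}^{\perp}$; combined with $\mathrm{Cov}(\noise)=I$ this means the $\mathcal{V}$-component of $\noise$ is distributed as $\calN(0,I)$ identically across all environments, and all environmental variation sits in the $\mathcal{V}^{\perp}$-component.

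\emph{Regression.} Substituting $\minvsqrt{\Sigma_e}\obs=\noise$ and $y=\beta^{*T}\noise+\eta$ with $\eta$ independent and mean zero, the population least-squares objective reduces, up to an additive constant, to $(\beta-\beta^{*})^{T}\bigl(E\,I+\Cmat\Cmat^{T}\bigr)(\beta-\beta^{*})$, using $\E_{p^e}[\noise\noise^{T}]=I+\envb\envb^{T}$ and $\sum_e\envb\envb^{T}=\Cmat\Cmat^{T}$. I would then verify first-order optimality of $\bproject\beta^{*}$ over the feasible set $\mathcal{V}$: the residual $\bproject\beta^{*}-\beta^{*}=-(I-\bproject)\beta^{*}$ lies in $\mathrm{colspace}(\Cmat)=\mathcal{V}^{\perp}$, the matrix $E\,I+\Cmat\Cmat^{T}$ maps $\mathcal{V}^{\perp}$ into itself, and hence the gradient is orthogonal to every feasible direction in $\mathcal{V}$. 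Since the quadratic is strictly convex ($E\,I+\Cmat\Cmat^{T}\succ 0$), $\bproject\beta^{*}$ is the unique minimizer.

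\emph{Logistic.} Writing $w:=\bproject\beta^{*}$ and $u:=(I-\bproject)\beta^{*}$, I would marginalize the independent $\mathcal{V}^{\perp}$-component: since $\beta\in\mathcal{V}$ sees $\noise$ only through $\noise_V\sim\calN(0,I)$, the effective target in environment $e$ is $\E[y\mid\noise_V]=g(w^{T}\noise+m_e)$, where $g$ is the logistic sigmoid convolved with $\calN(0,\|u\|^{2})$ and $m_e=u^{T}\envb$. The objective $L(\beta)=\E\sum_e\mathrm{CE}\bigl(g(w^{T}\noise+m_e),\,\sigma(\beta^{T}\noise)\bigr)$ is strictly convex (its Hessian is positive definite since $\sigma'>0$ and $\E[\noise_V\noise_V^{T}]=I$), so it suffices to exhibit a stationary point of the claimed form. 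Evaluating the gradient at $\beta=\alpha w$ and using that $r:=w^{T}\noise$ is independent of the part of $\noise_V$ orthogonal to $w$, the gradient is parallel to $w$ with scalar coefficient $\phi(\alpha)=E\,\E_r[r\sigma(\alpha r)]-\sum_e\E_r[r\,g(r+m_e)]$; by Stein's lemma $\phi$ is strictly increasing with $\phi(0)<0$, so there is a unique root $\alpha^{*}$ and the minimizer is $\alpha^{*}w=\alpha^{*}\bproject\beta^{*}$.

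\emph{The main obstacle} — and where I expect the real work — is showing $\alpha^{*}\le 1$, i.e. $\phi(1)\ge 0$. After applying Stein's lemma this reduces to $\E_{\calN(0,\|w\|^{2})}[\sigma']\ge\frac1E\sum_e\E_{\calN(m_e,\,\|w\|^{2}+\|u\|^{2})}[\sigma']$. I would prove this in two steps using that $\sigma'(s)=\sigma(s)(1-\sigma(s))$ is symmetric, unimodal, and log-concave: (i) for fixed variance $\E_{\calN(m,\tau^2)}[\sigma']$ is maximized at $m=0$, so each right-hand term is at most its centered counterpart; and (ii) $\E_{\calN(0,\tau^2)}[\sigma']=(\sigma'*\varphi_\tau)(0)$ is nonincreasing in $\tau$ because convolving a symmetric log-concave density with a wider Gaussian lowers its modal value. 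Together these give the inequality, with strictness unless $u=0$, pinning $\alpha^{*}\in(0,1]$ (the degenerate case $w=0$ forces $\bproject\beta^{*}=0$ and holds trivially).
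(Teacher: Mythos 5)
Your proposal is correct, and while the regression half is essentially the paper's argument, the logistic half takes a genuinely different route. For regression, the paper also reduces to the quadratic $\twonormsq{\beta-\beta^*}$ on the constraint set (using the constraint to equate centered and uncentered objectives) and reads off the answer as the $\ell_2$-projection onto $\mathrm{null}(\Cmat^T)$; your first-order-stationarity check on $(\beta-\beta^*)^T(E\,I+\Cmat\Cmat^T)(\beta-\beta^*)$ is the same computation in KKT form. For logistic regression, the paper instead rotates by the left singular vectors of $\Cmat$ and invokes a standalone result (\cref{lemma:logistic-with-noise}): for $z\sim\calN(0,I)$ and a coordinate-subset constraint, the projected-out signal $\tau=\beta^{*T}_{S^c}z_{S^c}$ is treated as \emph{symmetric, zero-mean} noise, and $\alpha\in(0,1]$ is established by sign analysis of the directional derivative along $\beta^*$ at $\alpha=1$ and $\alpha=0$ plus convexity, with orthogonal components killed by the same Gaussian conditional-mean-zero argument you use. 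Your route marginalizes to the per-environment link $g(\cdot+m_e)$, reduces everything to the scalar equation $\phi(\alpha)=0$ via Stein's lemma, and proves $\phi(1)\geq 0$ by two convolution inequalities (centering maximizes $\sigma'*\varphi_\tau$; widening the Gaussian lowers its value at $0$), both of which follow from symmetry and log-concavity/unimodality of $\sigma'$, so the argument is sound. Comparing what each buys: the paper's lemma is more general in one direction---it allows arbitrary symmetric non-Gaussian $\tau$, which is why it is flagged as of independent interest---but its hypotheses ($z\sim\calN(0,I)$, $\tau$ zero-mean symmetric) only literally match the theorem's setting when all $\envb=\mathbf{0}$; with nonzero environment means the pooled latent is a Gaussian mixture and the omitted-coordinate term acquires environment-dependent means $m_e=\beta^{*T}\envb$, which the lemma's symmetry assumption does not cover. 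Your proof keeps the $m_e$ explicit and absorbs them through the centering inequality in step (i), so on this point your argument is more complete and self-contained than the paper's application of its own lemma; it also yields a sharper characterization of $\alpha$ as the unique root of the strictly increasing $\phi$, with $\alpha=1$ exactly when $\beta^*$ already lies in the constraint set. One stylistic note: your regression computation is for the uncentered objective, while the \dare regression objective is centered; you implicitly use (as the paper does explicitly) that the constraint $\beta^T\envb=0$ makes the two objectives coincide on the feasible set, and it would be worth stating that equivalence in one line.
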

The intuition behind the proof is as follows: due to the constraint, the centered \dare objective is equivalent to the uncentered objective---for the latter, the excess risk of a vector $\hat\beta$ is $\E[(\hat\beta^T\noise - \beta^{*T}\noise)^2] = \twonormsq{\hat\beta - \beta^*}$. Therefore, the solution is the $\ell_2$-projection of $\beta^*$ onto the constraint set, which is precisely $\bproject \beta^*$.

In \cref{example}, we saw how invariant prediction will discard a large subspace of the representation and why this is undesirable. Instead, \dare undoes the environment transformation and regresses directly on $\noise = T_e^{-1}(x)$. Because we are performing a separate transformation for each environment, we are aligning the varying subspaces rather than throwing them away, allowing us to make use of additional information. Though the \dare solution also recovers $\beta^*$ up to a projection, it is using the adjusted features; \dare therefore only removes what cannot be aligned. In particular, whereas $\Pi$ has rank $d_1$ in \cref{example}, $\bproject=I$ would have full rank---this retains strictly more information: if we expect worst-case distribution shift we can still project to the invariant subspace, but if not then we can often perform much better.
Indeed, in the ideal setting where we have a good estimate of $\Sigma_\newenv$ (e.g., under mild distribution shift or when solving \jituda), we can make the Bayes-optimal prediction as $\E[y\mid x] = \beta^{*T} \minv{A_\newenv} \obs$. Thus we see a clear advantage that \dare enjoys over invariant prediction.

The second result of \cref{thm:closed-form} depends on a key lemma (\cref{lemma:logistic-with-noise} in the Appendix) about the closed-form solution to a projection-constrained logistic regression problem, which we expect could be of independent interest. The result is somewhat general and we expect it could be of independent interest, yet we found surprisingly few related results in the literature. Though we prove this lemma only for Gaussian $z$, we found empirically that the result approximately holds whenever $z$ is dimension-wise independent and symmetric about the origin., likely as a consequence of the Central Limit Theorem (see discussion in the Appendix).

\subsection{The Adversarial Risk of \dare}

Moving forward, we denote the \dare solution $\beta_{\bproject}^* := \bproject \beta^*$, with $\beta_{I-\bproject}^*$ defined analogously. We next study the behavior of the \dare solution under worst-case distribution shift. We consider the setting where an adversary directly observes our choices of $\sigest, \hat\beta$ and chooses new environmental parameters $A_{\newenv}, b_{\newenv}$ so as to cause the greatest possible loss. Specifically, we study the square loss, defining the \emph{excess} test risk of a predictor as $\calR_{\newenv}(\hat\beta) := \E_{p_{\newenv}}[(\hat\beta^T \minvsqrt{\sigest}x - \beta^{*T} \noise)^2]$ (we leave the dependence on $\sigest$ implicit). For logistic regression we therefore analyze the squared error with respect to the log-odds. With some abuse of notation, we also reference excess risk when only using a particular subspace, i.e. $\calR_{\newenv}^{\Pi}(\hat\beta) := \E_{p_{\newenv}}[(\hat\beta_{\Pi}^T \minvsqrt{\sigest} x - \beta_{\Pi}^{*T} \noise)^2]$.

Because we guess $\sigest$ before observing any data from this new distribution, ensuring success is impossible in the general case. Instead, we consider a set of restrictions on the adversary which will make the problem tractable.
Define the error in our test domain adjustment as $\Delta := \msqrt{\newsig}\minvsqrt{\sigest} - I$; observe that if $\sigest = \newsig$, then $\Delta = \mathbf{0}$.
Our first assumption\footnote{Though we label these as assumptions, they are properly interpreted as \emph{restrictions} on an adversary---we consider an ``uncertainty set'' comprising all possible domains subject to these requirements.}
says that the effect of our adjustment error with respect to the \emph{interaction between
subspaces} $\bproject$ and $(I-\bproject)$ is bounded:
\begin{assumption}[Approximate recovery of subspaces]
\label{assumption:block-diag}
    For a fixed constant $\offdiagconst \geq 0$, $\twonorm{(I-\bproject) \Delta \bproject \hat\beta} \leq \offdiagconst \twonorm{\bproject \beta^*}$.
\end{assumption}
\begin{remark}
To see specific cases when this would hold, consider the decomposition of $\Delta$ according to its components in the subspaces $\bproject$ and $I-\bproject$: $U_{\bproject}^T \Delta U_{\bproject} = 
        \begin{bmatrix}
        \Delta_1 & \Delta_{12} \\
        \Delta_{21} & \Delta_2
        \end{bmatrix},$
where $\Delta_1\in\R^{\rank(\bproject)\times\rank(\bproject)}$. A few settings automatically satisfy \cref{assumption:block-diag} with $\offdiagconst=0$, due to the fact that $U_{\bproject}^T \Delta U_{\bproject}$ will be block-diagonal. In particular, this will be the case if all domains share an invariant subspace---e.g., if $\Pi \enva \Pi$ is constant as in \cref{example}. Below, we show that in this setting, exact recovery of this subspace occurs once we observe $\rank(I-\Pi)$ environments---this matches (actually, it is one less than) the linear environment complexity of most invariant predictors \citep{rosenfeld2021risks}, and therefore \cref{assumption:block-diag} with $B=0$ is no stronger than assuming a linear number of environments. This will similarly occur if $U_e$ is shared across all environments (e.g., under fixed $A_e$ as described above) and we use any sort of ``averaging'' guess of $\sigest$.
\end{remark}

Our second assumption concerns the magnitude of our error in the non-varying subspace:
\begin{assumption}[Bound on adjustment error in non-varying subspace]
\label{assumption:trivial-risk}
    Using only covariates in the non-varying subspace, the risk of the ground truth regressor $\beta^*$ is less than that of the trivial zero predictor: $\calR^{\bproject}_{p_{\newenv}}(\beta^*) < \calR^{\bproject}_{p_{\newenv}}(\mathbf{0})$.
\end{assumption}
The need for this restriction should be immediate---if our adjustment error were so large that this did not hold, even the oracle regression vector would do worse than simply always predicting $\hat y=0$. \cref{assumption:trivial-risk} is satisfied for example if $\|\Delta \bproject\| < 1$, which again is guaranteed if there is an invariant subspace. Note that we make \emph{no restriction} on the risk in the subspace $I-\bproject$---the adversary is allowed any amount of variation in directions where we have \emph{already seen} variation in the mean terms $\envb$, but introduction of \emph{new} variation is assumed bounded. \textbf{This is a no-free-lunch necessity:} if we have never seen a particular type of variation, we cannot possibly know how to use it at test-time.

With these restrictions on the adversary, our main result derives the supremum of the excess test risk of the \dare solution under adversarial distribution shift. Furthermore, we prove that this risk is \emph{minimax}: making no more restrictions on the adversary other than a global bound on the mean, the \dare solution achieves the best performance we could possibly hope for at test-time:
\begin{theorem}[\dare risk and minimaxity]
\label{thm:minimax}
For any $\rho \geq 0$, denote the set of possible test environments $\calA_\rho$ which contains all parameters $(A_\newenv, b_\newenv)$ subject to Assumptions~\ref{assumption:block-diag} and \ref{assumption:trivial-risk} and a bound on the mean: $\twonorm{b_\newenv} \leq \rho$. For logistic or linear regression, let $\hat\beta$ be the minimizer of the corresponding \dare objective as in \cref{thm:closed-form}. Then,
\begin{align*}
    \sup_{(A_\newenv,b_\newenv) \in \calA_\rho}\!\!\!\!\calR_{\newenv}(\hat\beta) = (1 + \rho^2)(\twonormsq{\beta^*} + 2\offdiagconst \twonorm{\beta_\bproject^*} \twonorm{\beta_{I-\bproject}^*} ).
\end{align*}
Furthermore, the \dare solution is \emph{minimax}:
\begin{align*}
    \hat\beta \in \argmin_{\beta\in\R^d} \sup_{(A_\newenv,b_\newenv) \in \calA_\rho} \calR_{\newenv}(\beta)
\end{align*}
\end{theorem}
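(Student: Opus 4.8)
The plan is to turn the excess risk into an explicit quadratic form in the predictor and then solve the adversary's maximization subspace-by-subspace. Substituting the model $x=A_\newenv\noise$ into $\calR_\newenv(\beta)=\E_{p_\newenv}[(\beta^T\minvsqrt{\sigest}x-\beta^{*T}\noise)^2]$, the integrand becomes $(v_\beta^T\noise)^2$ with the \emph{mismatch vector} $v_\beta:=(\minvsqrt{\sigest}A_\newenv)^T\beta-\beta^*$, which measures how far our adjusted predictor sits from $\beta^*$ in latent coordinates. Because $\noise_0$ is zero-mean with identity covariance and $\E[\noise]=b_\newenv$, we have $\E[\noise\noise^T]=I+b_\newenv b_\newenv^T$, so $\calR_\newenv(\beta)=\twonormsq{v_\beta}+(v_\beta^Tb_\newenv)^2$. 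This is the central identity; everything downstream is optimization of this expression over the admissible $(A_\newenv,b_\newenv)$.

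For the supremum I would proceed in two stages. First maximize over the mean: for any fixed $A_\newenv$ the term $(v_\beta^Tb_\newenv)^2$ is maximized by taking $b_\newenv$ parallel to $v_\beta$ at the budget $\twonorm{b_\newenv}=\rho$, which is exactly where the scalar factor $(1+\rho^2)$ comes from. Second, maximize over $A_\newenv$ (equivalently over the adjustment error $\Delta=\msqrt{\newsig}\minvsqrt{\sigest}-I$) by decomposing $v_{\hat\beta}$ along the orthogonal subspaces $\bproject$ and $I-\bproject$. Writing $\hat\beta=\beta_{\bproject}^*$ and $\beta^*=\beta_{\bproject}^*+\beta_{I-\bproject}^*$, the $I-\bproject$ component of $v_{\hat\beta}$ contains the irreducible term $-\beta_{I-\bproject}^*$ together with the cross-subspace adjustment error controlled by Assumption~\ref{assumption:block-diag} ($\twonorm{(I-\bproject)\Delta\bproject\hat\beta}\le\offdiagconst\twonorm{\beta_{\bproject}^*}$), while its $\bproject$ component is the in-subspace adjustment error whose length Assumption~\ref{assumption:trivial-risk} caps by $\twonorm{\beta_{\bproject}^*}$. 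The adversary's optimal play is to point the cross term anti-parallel to $\beta_{I-\bproject}^*$ at the Assumption~\ref{assumption:block-diag} budget and to spend the remaining allowed length on the $\bproject$ component; summing the two squared lengths and using $\twonormsq{\beta^*}=\twonormsq{\beta_{\bproject}^*}+\twonormsq{\beta_{I-\bproject}^*}$ reproduces $\twonormsq{\beta^*}+2\offdiagconst\twonorm{\beta_{\bproject}^*}\twonorm{\beta_{I-\bproject}^*}$. I would close this half by exhibiting a PSD $\newsig$ (hence a valid $A_\newenv$) and a mean $b_\newenv$ realizing this configuration, so the bound is tight; since Assumption~\ref{assumption:trivial-risk} is a strict inequality the value is a supremum approached along such a sequence. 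The logistic case follows verbatim after inserting the scalar $\alpha\in(0,1]$ from Theorem~\ref{thm:closed-form}.

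For minimaxity I would establish the matching lower bound $\inf_{\beta}\sup_{\calA_\rho}\calR_\newenv(\beta)\ge\sup_{\calA_\rho}\calR_\newenv(\hat\beta)$. The first step is a reduction of the candidate set: Assumptions~\ref{assumption:block-diag}--\ref{assumption:trivial-risk} restrict only how the adjustment error acts on $\beta_{\bproject}^*$, leaving $\Delta$ unconstrained on its orthogonal complement; since the PSD freedom in $\newsig$ lets $\Delta$ be arbitrarily large there, any $\beta$ with a component outside $\mathrm{span}(\beta_{\bproject}^*)$ has unbounded worst-case risk. It therefore suffices to compare $\hat\beta$ against the one-parameter family $\{c\,\hat\beta\}$. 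To pin down $c=1$ I would use a least-favorable-prior (saddle-point) argument: construct a \emph{mixed} adversary $\nu^*$, supported on the extremal configurations identified above but symmetrized over the sign of $b_\newenv$ and over the orientation of the $\bproject$-component of $\Delta$, so that the first-order-in-$(\beta-\hat\beta)$ terms of $\E_{\nu^*}\calR_\newenv(\beta)$ vanish and $\hat\beta$ is the stationary point of the resulting convex quadratic, while $\E_{\nu^*}\calR_\newenv(\hat\beta)$ still equals the value from the first half. Weak duality then gives $\inf_\beta\sup_{\calA_\rho}\calR_\newenv(\beta)\ge\inf_\beta\E_{\nu^*}\calR_\newenv(\beta)=\E_{\nu^*}\calR_\newenv(\hat\beta)$, which combined with the first half squeezes the minimax value to the stated quantity and certifies $\hat\beta$.

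I expect the minimax lower bound, and specifically the construction of $\nu^*$, to be the main obstacle. A single pure worst-case adversary does \emph{not} make $\hat\beta$ a best response---against a fixed error $\Delta$ the unconstrained minimizer of the quadratic is a full-rank adjustment of $\beta^*$, not $\beta_{\bproject}^*$---so no pure saddle point exists and one genuinely must average several adversaries to cancel the linear deviation terms. The delicate point is doing this \emph{within} the admissible set: Assumption~\ref{assumption:trivial-risk} couples the permitted magnitude of the adjustment error to the mean through the same inequality that produces the $(1+\rho^2)$ factor, so the symmetrized mixture must respect this coupling while still attaining the first-half value, and the strictness of the inequality forces the whole argument to be phrased with an approaching sequence rather than an attained maximizer. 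By contrast the quadratic reduction and the two-stage maximization are routine once the $\bproject$/$(I-\bproject)$ bookkeeping is in place.
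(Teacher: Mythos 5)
Your first half coincides with the paper's proof: the identity $\calR_\newenv(\beta)=\twonormsq{v_\beta}+(v_\beta^Tb_\newenv)^2$ with $v_\beta=(\Delta+I)\beta-\beta^*$ is exactly the paper's decomposition into a residual term and a mean term, aligning $b_\newenv$ with $v_\beta$ at radius $\rho$ gives the $(1+\rho^2)$ factor, and your subspace bookkeeping---cross term at the \cref{assumption:block-diag} budget, in-subspace error capped via the equivalence of \cref{assumption:trivial-risk} with $\twonorm{\Delta\beta^*_{\bproject}}<\twonorm{\beta^*_{\bproject}}$ (the paper's \cref{lemma:assumption-implication}), value approached but not attained, logistic case absorbed into the scalar $\alpha\in(0,1]$---matches the paper step for step; your ``exhibit a PSD $\newsig$'' realizability step is handled there by \cref{lemma:choose-deltas-directly}.

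The genuine gap is in the minimax half, and it is conceptual. Your premise that no pure saddle point exists and therefore ``one genuinely must average several adversaries'' is a non sequitur: in the claim $\hat\beta\in\argmin_\beta\sup_{\calA_\rho}\calR_\newenv(\beta)$ the supremum sits inside, so the adversary moves second and may depend on the candidate $\beta$; no single environment, pure or mixed, needs to be simultaneously bad for all deviations. The paper's proof is entirely pure-strategy: after the reduction you also perform (rank-one choices of $\Delta$ scaled by $\lambda\to\infty$ force unbounded risk unless $\beta\in\mathrm{span}(\beta^*_{\bproject})$), it disposes of $\beta=c\,\beta^*_{\bproject}$ with $c\notin(0,1]$ by exhibiting explicit finite adversaries---$\Delta=\mathbf{0}$ when $c<0$ or $c>2$, and $\Delta=c^{-1}I$ on the relevant block when $1<c\leq 2$---each forcing risk strictly above the \dare value. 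That finish is two lines; your least-favorable-prior machinery replaces it with the hardest part of your outline.

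Moreover, the mixed adversary $\nu^*$ you posit cannot exist with the properties you assign it. You require $\E_{\nu^*}[\calR_\newenv(\hat\beta)]$ to equal the first-half value $V$ while $\hat\beta$ is the stationary point of the Bayes quadratic. But no admissible environment attains the supremum (\cref{assumption:trivial-risk} is strict), so $\E_{\nu^*}[\calR_\newenv(\hat\beta)]<V$ for every prior supported on $\calA_\rho$. Worse, the worst-case risk is \emph{flat} on the segment $\{c\,\beta^*_{\bproject}:c\in(0,1]\}$---the paper's computation yields the same supremum $V$ for every $\alpha\in(0,1]$, so every such point is minimax and there is no distinguished stationary point at $c=1$ to recover. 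Concretely, $q(c):=\E_{\nu}[\calR_\newenv(c\,\beta^*_{\bproject})]$ is a quadratic in $c$ with strictly positive leading coefficient (since $(\Delta+I)\beta^*_{\bproject}\neq\mathbf{0}$ almost surely under \cref{assumption:trivial-risk}); if $c=1$ were its minimizer with $q(1)=V$, then the pointwise bound $q(c)\leq\sup_{\calA_\rho}\calR_\newenv(c\,\beta^*_{\bproject})=V$ on $(0,1]$ would force $q\equiv V$ there, contradicting strict convexity. Passing to a sequence of priors, as you anticipate, only relocates the difficulty: each Bayes value is strictly below $V$, and showing $\inf_\beta\E_{\nu_k}[\calR_\newenv(\beta)]\to V$ is a minimax-equality statement over a non-compact adversary class---essentially the theorem itself---which your outline does not establish. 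The step you flag as the main obstacle is thus not merely delicate; as specified it is unrealizable, and given the per-candidate pure-strategy construction it is also unnecessary.
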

\begin{proof}[Proof sketch]
We decompose the excess risk into error on the the target mean $\beta^{*T}b_\newenv$ and error on the residuals $\beta^{*T}\noise_0$, bounding the two separately. To show $\hat\beta$ is minimax, we construct a series of adversarial choices of $A_\newenv, b_\newenv$ which force larger risk for any predictor which does not satisfy certain properties. These properties progressively eliminate possible predictors until all those which remain have the same adversarial risk as $\hat\beta$.
\end{proof}
A special case when our assumptions hold is when all domains share an invariant subspace and we only predict using that subspace, but this is often too conservative. There are settings where allowing for (limited) new variation can improve our predictions, and \cref{thm:minimax} shows that \dare should outperform invariant prediction in such settings.

\subsection{The Environment Complexity of \dare}
An important new measure of domain generalization algorithms is their \emph{environment complexity}, which describes how the test risk behaves as a function of the number of (possibly random) domains we observe. In contrast to \cref{example}, for this analysis we assume an invariant subspace $\groundtruthpi$ outside of which \emph{both} $\enva$ and $\envb$ can vary arbitrarily---we formalize this as a prior over the $\envb$ whose covariance has the same span as $I-\groundtruthpi$.
Thus the minimax vector is $\groundtruthpi \beta^*$ even after adjustment, so we can directly compare \dare to existing invariant prediction methods.
Our next result demonstrates that \dare achieves the same threshold as prior methods, but we also prove the first \emph{finite-environment convergence guarantee}, quantifying how quickly the risk of the \dare predictor approaches that of the minimax-optimal predictor.
We begin by defining two quantities which appear in our bound.

\begin{definition}
The \emph{effective rank} of a matrix $\Sigma$ is defined as $r(\Sigma) := \frac{\Tr(\Sigma)}{\twonorm{\Sigma}}$ and satisfies $1 \leq r(\Sigma) \leq \rank(\Sigma)$.
\end{definition}

\begin{definition}
Denote the eigenvalues in descending order as $\lambda_1 \geq \ldots \geq \lambda_d$.
We define the smallest gap between consecutive eigenvalues: $\xi(\Sigma) := \min_{i\in[d-1]} \lambda_i - \lambda_{i+1}$.
\end{definition}

\begin{theorem}[Environment complexity of \dare]
\label{thm:envcomplexity}
Fix test parameters
$A_\newenv, b_\newenv$ and guess $\sigest$. Suppose we minimize the \dare regression objective (\ref{eq:objective-linear}) on environments whose means $\envb$ are Gaussian vectors with covariance $\envbsigma$, with $\textrm{span}(\envbsigma) = \textrm{span}(I-\groundtruthpi)$. After seeing $E$ training domains:
\begin{enumerate}
    \item If $E \geq \rank(\envbsigma)$ then \dare recovers the minimax-optimal predictor almost surely: $\hat\beta = \beta_{\groundtruthpi}^*$.
    \item Otherwise, if $E \geq r(\envbsigma)$ then with probability $\geq 1-\delta$,
    \begin{align*}
    \mathcal{R}_\newenv(\hat\beta) &\leq \mathcal{R}_\newenv( \beta_{\groundtruthpi}^*) +
    \mathcal{O}\left(\frac{\twonorm{\envbsigma}}{\xi(\envbsigma)} \left(\sqrt{\frac{r(\envbsigma)}{E}} + \max\left\{ \sqrt{\frac{\log 1/\delta}{E}}, \frac{\log 1/\delta}{E} \right\} \right) \right),
    \end{align*}
    where $\mathcal{O}(\cdot)$ hides dependence on $\twonorm{\Delta}$.
\end{enumerate}
\end{theorem}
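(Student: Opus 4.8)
The plan is to split along the two regimes, in both cases reducing everything to the single random vector $w := (\bproject - \groundtruthpi)\beta^*$. Writing $\hat\Sigma_b := \tfrac1E\Cmat\Cmat^T = \tfrac1E\sum_e \envb \envb^T$, note that $\Cmat\Cmat^\dagger$ is exactly the orthogonal projection onto $\mathrm{range}(\hat\Sigma_b)=\mathrm{range}(\Cmat)$, while $I-\groundtruthpi$ projects onto $\mathrm{span}(\envbsigma)$. By \cref{thm:closed-form} the regression solution is $\hat\beta=\bproject\beta^*$ and the minimax target is $\beta_{\groundtruthpi}^*=\groundtruthpi\beta^*$, so $w = \big((I-\groundtruthpi) - \Cmat\Cmat^\dagger\big)\beta^*$ is precisely the part of $\beta^*$ that lies in $\mathrm{span}(\envbsigma)$ but is not yet captured by the observed means. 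Part~1 is then a pure general-position fact: $E$ i.i.d.\ draws $\envb\sim\calN(0,\envbsigma)$ almost surely span all of $\mathrm{span}(\envbsigma)=\mathrm{span}(I-\groundtruthpi)$ once $E\geq\rank(\envbsigma)$, whence $\Cmat\Cmat^\dagger = I-\groundtruthpi$, $\bproject=\groundtruthpi$, and $\hat\beta=\beta_{\groundtruthpi}^*$ exactly.

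For Part~2 I would first carry out a deterministic risk-reduction step. Using $\obs=A_\newenv\noise$, $\noise=\noise_0+b_\newenv$ with $\E[\noise_0]=0$, $\E[\noise_0\noise_0^T]=I$, and $\msqrt{\newsig}\minvsqrt{\sigest}=I+\Delta$, the risk of any $\beta$ factors as $\calR_\newenv(\beta)=\twonormsq{(I+\Delta)\beta-\beta^*}+\big(b_\newenv^T((I+\Delta)\beta-\beta^*)\big)^2$. Substituting $\hat\beta=\beta_{\groundtruthpi}^*+w$ and subtracting $\calR_\newenv(\beta_{\groundtruthpi}^*)$, every surviving term is linear or quadratic in $(I+\Delta)w$; Cauchy--Schwarz then bounds the excess by a constant multiple of $(1+\twonormsq{b_\newenv})(1+\twonorm{\Delta})\,\twonorm{w}$, with a prefactor depending only on $\twonorm{\beta^*}$ and $\twonorm{\Delta}$ (exactly the quantities $\mathcal{O}(\cdot)$ may hide). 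It thus remains to control $\twonorm{w}$ with high probability.

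The heart of the argument is a subspace-perturbation bound. I would decompose $(I-\groundtruthpi)-\Cmat\Cmat^\dagger = \big(P_{\mathrm{span}(\envbsigma)}-P_{\mathcal{V}_E}\big)+\big(P_{\mathcal{V}_E}-P_{\mathrm{range}(\hat\Sigma_b)}\big)$, where $\mathcal{V}_E$ is the span of the top $E$ eigenvectors of $\envbsigma$. The second bracket is a Davis--Kahan $\sin\Theta$ term: since $\mathrm{range}(\hat\Sigma_b)$ is the span of the top $E$ eigenvectors of $\hat\Sigma_b$, one gets $\| P_{\mathrm{range}(\hat\Sigma_b)}-P_{\mathcal{V}_E}\| \lesssim \twonorm{\hat\Sigma_b-\envbsigma}/\xi(\envbsigma)$, the minimum gap $\xi(\envbsigma)$ appearing because the bound must hold uniformly over the unknown cut index $E$. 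The spectral error is then the standard Gaussian sample-covariance concentration (Koltchinskii--Lounici / matrix Bernstein), giving $\twonorm{\hat\Sigma_b-\envbsigma}\lesssim \twonorm{\envbsigma}\big(\sqrt{r(\envbsigma)/E}+\max\{\sqrt{\log(1/\delta)/E},\,\log(1/\delta)/E\}\big)$ once $E\gtrsim r(\envbsigma)$. Assembling these yields the advertised rate $\tfrac{\twonorm{\envbsigma}}{\xi(\envbsigma)}(\sqrt{r(\envbsigma)/E}+\cdots)$.

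I expect the main obstacle to be the first bracket, $P_{\mathrm{span}(\envbsigma)}-P_{\mathcal{V}_E}$, which projects onto the bottom $\rank(\envbsigma)-E$ eigendirections of $\envbsigma$ and applied to $\beta^*$ is a \emph{deterministic bias} that does not shrink with $E$ for worst-case $\beta^*$. The key observation I would exploit is that this bias enters the excess risk only favorably: because $(I-\groundtruthpi)\beta^*$ already contains this component, the corresponding contribution completes a square of the form $-\twonormsq{w_{\mathrm{bias}}}+2(\Delta\groundtruthpi\beta^*)^Tw_{\mathrm{bias}}\leq \twonormsq{\Delta\groundtruthpi\beta^*}$, so it is dominated by $\twonorm{\Delta}$ rather than surviving as an uncontrolled term — at a benign test environment ($\Delta$ small) retaining extra directions can only help, which is precisely why hiding $\twonorm{\Delta}$ in $\mathcal{O}(\cdot)$ is legitimate. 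Making this cancellation rigorous (together with the analogous bookkeeping for the mean terms), while also handling the rank mismatch between the rank-$E$ empirical range and the higher-rank population range and justifying the use of the minimum gap in place of the possibly-vanishing individual gap $\lambda_E-\lambda_{E+1}$, is the delicate part of the proof.
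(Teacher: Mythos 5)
Your proposal follows, in skeleton, exactly the paper's route: Part~1 is the same general-position argument ($E\geq\rank(\envbsigma)$ i.i.d.\ Gaussian means a.s.\ span $\textrm{span}(\envbsigma)$, forcing $\bproject=\groundtruthpi$), and Part~2 uses the identical three-step chain --- the closed form $\hat\beta=\bproject\beta^*$ from \cref{thm:closed-form}, a deterministic reduction bounding $\mathcal{R}_\newenv(\hat\beta)-\mathcal{R}_\newenv(\beta^*_{\groundtruthpi})$ by a multiple (with constants depending on $\twonorm{\Delta}$, $\twonorm{b_\newenv}$, $\twonorm{\beta^*}$) of the projection error $\twonorm{\bproject-\groundtruthpi}$, and then Koltchinskii--Lounici concentration for $\twonorm{\hat\Sigma_b-\envbsigma}$ combined with the Yu--Wang--Samworth variant of Davis--Kahan and the minimum gap $\xi(\envbsigma)$. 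Your risk factorization into a residual term and a mean term is the same $T_1+T_2$ split the paper uses in \cref{thm:minimax}, and your identification of $w=(\bproject-\groundtruthpi)\beta^*$ as the only random quantity matches the paper's bound $\mathcal{R}(\hat\beta)-\mathcal{R}(\beta^*_{\groundtruthpi})\lesssim\twonorm{\bproject-\groundtruthpi}$.

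The one genuine divergence is your two-bracket decomposition of the projection difference, and here you are being \emph{more} careful than the paper. The paper applies Davis--Kahan directly to $\twonorm{\bproject-\groundtruthpi}$, silently passing over exactly the issue you flag: when $r(\envbsigma)\leq E<\rank(\envbsigma)$, the two orthogonal projections have different ranks ($d-E$ versus $d-\rank(\envbsigma)$), so in fact $\twonorm{\bproject-\groundtruthpi}=1$ identically, and no equal-dimension $\sin\Theta$ theorem applies as stated. Your split into a same-dimension Davis--Kahan term plus a deterministic bias is the honest version, and your sign analysis of the bias is correct: since $\textrm{range}(\groundtruthpi)\subseteq\textrm{range}(\bproject)$, the quantity $\bproject-\groundtruthpi$ is itself an orthogonal projection, the bias enters the excess risk through $-\twonormsq{w}$ plus $\Delta$-weighted cross terms, and completing the square leaves an $O(\twonormsq{\Delta}\twonormsq{\beta^*})$ remainder. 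What you stopped short of is the observation that reconciles everything with the stated theorem: because $\hat\Sigma_b$ has rank at most $E$, Weyl's inequality gives $\xi(\envbsigma)\leq\lambda_{E+1}(\envbsigma)\leq\twonorm{\hat\Sigma_b-\envbsigma}$ (interpreting $\xi$ on the nonzero spectrum, as one must for the bound to be finite at all), so in the rank-mismatch regime the theorem's right-hand side is automatically $\Omega(1)$ up to the $\twonorm{\Delta}$-dependent constant, and your $E$-independent bias remainder is absorbed into the hidden $\mathcal{O}(\cdot)$ --- this is also the only reading under which the paper's own middle inequality $\twonorm{\bproject-\groundtruthpi}\lesssim\twonorm{\hat\Sigma_b-\envbsigma}/\xi(\envbsigma)$ is true. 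With that absorption step added your sketch closes its self-identified gap and yields a more defensible proof than the paper's; as written, the bias term is left uncontrolled at precisely the point you flagged as delicate.
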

For coherence we present the first item as a probabilistic statement, but it holds deterministically so long as there are $\rank(\envbsigma)$ linearly independent observations of $\envb$.

\begin{proof}[Proof sketch]
The proof analyzes the error in our recovery of the correct subspace $\twonorm{\groundtruthpi - \hat\Pi}$. Item 1 is immediate, as under this condition we have $\twonorm{\groundtruthpi - \hat\Pi}=0$. For item 2, we show how to bound $\mathcal{R}_\newenv(\hat\beta) - \mathcal{R}_\newenv(\beta_{\groundtruthpi}^*)$ as $\mathcal{O}(\twonorm{\groundtruthpi - \hat\Pi})$. We then invoke a variant of the Davis-Kahan theorem plus a spectral concentration inequality to derive the result.
\end{proof}
\begin{remark}
Prior analyses of invariant prediction methods only show a discontinuous threshold where the minimax predictor is discovered after seeing a fixed number of environments---usually linear in the non-invariant latent dimension---and \dare achieves a slightly better threshold.
But one should expect that if the variation of environmental parameters  is not \emph{too} large then we can do better, and indeed \cref{thm:envcomplexity} shows that if the \emph{effective rank} of $\envbsigma$ is sufficiently small, the risk of the \dare predictor will approach that of the minimax predictor as $\mathcal{O}(E^{-1/2})$.
\end{remark}

\subsection{Applying \dare to \jituda}
So far, we have only considered a setting with no knowledge of the test domain. As discussed in \cref{sec:dare-objective}, we'd expect that estimating the adjustment via unlabeled samples will improve performance. Prior works have extensively explored how to leverage access to unlabeled test samples for improved generalization---but while some suggest ways of using unlabeled samples ``just-in time'' (i.e., only at test-time), the advantages have not been formally quantified.
Our final theorem investigates the provable benefits of using the empirical moments instead of enforcing invariance, giving a finite-sample convergence guarantee for the unconstrained \dare objective in the \jituda setting:

\newcommand{\sigsource}{{\Sigma_S}}
\newcommand{\musource}{{\mu_S}}
\newcommand{\sigestsource}{{\hat\Sigma_S}}
\newcommand{\muestsource}{{\hat\mu_S}}
\newcommand{\sigtarg}{{\Sigma_T}}
\newcommand{\mutarg}{{\mu_T}}
\newcommand{\sigesttarg}{{\hat\Sigma_T}}
\newcommand{\muesttarg}{{\hat\mu_T}}
\begin{theorem}[\jituda, shortened]
\label{thm:jituda}
Assume the data follows the model~\eqref{eq:linear-model} and that we observe $n_s = \Omega(m(\sigsource) d^2)$ samples from source distribution $\calN(\musource, \sigsource)$ and $n_T = \Omega(m(\sigtarg) d^2)$ samples from target distribution $\calN(\mutarg, \sigtarg)$. Suppose we solve for $\hat\beta$ which minimizes the unconstrained, \emph{uncentered} objective $\hat\calL^0_\text{reg}$ over the source data and predict $\hat\beta^T\minvsqrt{\sigesttarg} x$ on the target data. Then with probability at least $1-3d^{-1}$, the excess squared risk of our predictor on the new environment is bounded as
\begin{align*}
    \calR_T(\hat\beta) = \mathcal{O}\left( d^2 \twonormsq{\mutarg} \left( \frac{m(\sigsource)}{n_S} + \frac{m(\sigtarg)}{n_T} \right) \right).
\end{align*}
\end{theorem}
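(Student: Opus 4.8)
The plan is to bound the excess test risk by reducing it to the squared error of the single ``effective predictor'' that the method applies to the raw target covariates, and then to split that error into a source-estimation piece and a target-whitening piece controlled by $n_S$ and $n_T$ respectively. Writing the target features as $x = A_T\noise$ with $\noise$ Gaussian of mean $b_T$ and identity covariance, the prediction is $\hat\beta^T\minvsqrt{\sigesttarg}x = (\minvsqrt{\sigesttarg}^T\hat\beta)^T x$ while the oracle response is $\beta^{*T}\noise = (A_T^{-T}\beta^*)^T x$. Setting $v := \minvsqrt{\sigesttarg}^T\hat\beta - A_T^{-T}\beta^*$, the excess risk is exactly $v^T\,\E_{p_T}[x x^T]\, v = v^T(\sigtarg + \mutarg\mutarg^T)v \le (\twonorm{\sigtarg} + \twonormsq{\mutarg})\twonormsq{v}$. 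This is where the $\twonormsq{\mutarg}$ factor of the bound originates: the \emph{uncentered} second moment of the target covariates carries a rank-one mean term. It then remains to show $\twonormsq{v} = \mathcal{O}\!\left(d^2\bigl(m(\sigsource)/n_S + m(\sigtarg)/n_T\bigr)\right)$.

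Next I would triangle-split $v$. Since $A_T^{-T}\beta^* = \minvsqrt{\sigtarg}^T\beta^*$, we have $v = \minvsqrt{\sigesttarg}^T(\hat\beta - \beta^*) + (\minvsqrt{\sigesttarg} - \minvsqrt{\sigtarg})^T\beta^*$, so $\twonorm{v} \le \twonorm{\minvsqrt{\sigesttarg}}\,\twonorm{\hat\beta - \beta^*} + \twonorm{\minvsqrt{\sigesttarg} - \minvsqrt{\sigtarg}}\,\twonorm{\beta^*}$, with the first term the source-regression error and the second the target-whitening error. For the source term I would exploit that the uncentered objective with $\lambda = 0$ admits the closed form $\hat\beta = \minvsqrt{\sigestsource}\bigl(\tfrac{1}{n_S}\sum_i x_i y_i\bigr)$, once one observes that the whitened Gram matrix $\minvsqrt{\sigestsource}\bigl(\tfrac{1}{n_S}\sum_i x_i x_i^T\bigr)\minvsqrt{\sigestsource}^T$ equals the identity plus a rank-one term in $\minvsqrt{\sigestsource}\hat\mu_S$ (invertible by Sherman--Morrison). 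Because in population $\minvsqrt{\sigsource}x = \noise$ and $\E[\noise y]$ recovers $\beta^*$ exactly (as in \cref{thm:closed-form} without the projection), $\hat\beta - \beta^*$ decomposes into the covariance-induced whitening error $\minvsqrt{\sigestsource} - \minvsqrt{\sigsource}$ and a zero-mean label-noise term $\tfrac{1}{n_S}\sum_i \noise_i \eta_i$, the latter handled by a standard Gaussian/sub-exponential tail bound.

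Both remaining pieces therefore hinge on the same analytic ingredient, which is the main obstacle: a perturbation bound for the whitening map $\Sigma \mapsto \minvsqrt{\Sigma}$, i.e.\ control of $\twonorm{\minvsqrt{\hat\Sigma} - \minvsqrt{\Sigma}}$ in terms of $\twonorm{\hat\Sigma - \Sigma}$. This is delicate because $\minvsqrt{\Sigma}$ is the \emph{asymmetric} root $A^{-1} = S^{-1}U^T$ recovered from the eigendecomposition of $\Sigma$, so its perturbation mixes a spectral (eigenvalue) part with a rotational (eigenvector) part; the latter is a Davis--Kahan-type quantity sensitive to small eigenvalue gaps, and the inversion amplifies errors by the reciprocal of the smallest eigenvalue. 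These spectral sensitivities (smallest eigenvalue, eigenvalue gaps, condition number) are precisely what the complexity measure $m(\cdot)$ is meant to encode, and they — together with passing through the Frobenius norm when bounding the matrix function — are what produce the $d^2$ and $m(\Sigma)$ dependence.

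Finally I would close the argument with operator-norm covariance concentration for Gaussians: with $n = \Omega(m(\Sigma)d^2)$ samples, $\twonorm{\hat\Sigma - \Sigma}$ is small relative to $\twonorm{\Sigma}$ on an event of probability $\ge 1 - d^{-1}$, which also keeps $\twonorm{\minvsqrt{\sigesttarg}}$ bounded. A union bound over the three events — source covariance, target covariance, and source label noise — yields the overall probability $1 - 3d^{-1}$. Substituting the whitening-perturbation and concentration estimates into the split for $v$, squaring, and multiplying by $\twonorm{\sigtarg} + \twonormsq{\mutarg}$ gives the claimed rate. The routine part is the concentration bookkeeping; the part demanding care is making the asymmetric-root perturbation bound clean enough that all spectral dependence is absorbed into a single $m(\cdot)$ factor per domain.
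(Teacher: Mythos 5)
Your overall architecture matches the paper's proof almost step for step: the paper likewise passes to the effective predictor, writes the excess risk as a quadratic form in $\Delta_T\beta-\beta^*$ weighted by the uncentered second moment $I+\mu\mu^T$ (the source of the $\twonormsq{\mutarg}$ factor; your observation-space prefactor $\twonorm{\sigtarg}+\twonormsq{\mutarg}$ is slightly looser than the paper's latent-space $1+\twonormsq{\mutarg}$ but washes into $m(\sigtarg)$), triangle-splits into a source-regression error and a target-whitening error, reduces the source error to whitened OLS about the modified population solution $\msqrt{\sigestsource}\minvsqrt{\sigsource}\beta^*$ plus a label-noise fluctuation, and closes with covariance concentration and a union bound over three events giving $1-3d^{-1}$. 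One structural difference: the paper \emph{sample-splits} the source data (half to estimate $\sigestsource$, half for the regression), so that conditionally on $\sigestsource$ the OLS estimator is exactly Gaussian with covariance $\frac{\sigma_y^2}{n_S}M^{-1}$, $M=\Delta_S^T\Delta_S$, and only $\lambda_{\min}(M)$ needs controlling (via \cref{lemma:sample-precision}). Your same-sample scheme makes $\sigestsource$ dependent on the regression data; it is salvageable via your Sherman--Morrison observation, but note your displayed closed form $\hat\beta=\minvsqrt{\sigestsource}\bigl(\tfrac{1}{n_S}\sum_i x_iy_i\bigr)$ drops the $(I+\hat u\hat u^T)^{-1}$ factor you yourself identified, and that rank-one term does not cancel at finite sample size, so it must be carried through the bound.

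The genuine gap is in your plan for the pivotal ingredient, the perturbation of the whitening map $\Sigma\mapsto\minvsqrt{\Sigma}$. You propose a Davis--Kahan decomposition into a spectral part and a rotational part, and assert that eigenvalue gaps are ``precisely what $m(\cdot)$ is meant to encode.'' They are not: $m(\Sigma)=\lambda_{\max}(\Sigma)/\lambda_{\min}^3(\Sigma)$ contains no gap term, and a Davis--Kahan route necessarily introduces factors of $1/\xi(\Sigma)$ (the smallest consecutive eigengap, which the paper \emph{does} pay in \cref{thm:envcomplexity}, where eigenvector recovery is genuinely needed). A covariance with nearly degenerate spectrum is perfectly benign for \cref{thm:jituda} but would blow up your proposed bound, so as stated your route cannot recover the theorem. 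The paper avoids eigenvector analysis entirely: \cref{lemma:delta-diff} applies the Daleckii--Krein theorem to the matrix function directly, bounding $\twonorm{\minvsqrt{\sigest}-\minvsqrt{\Sigma}}$ by the Hadamard product of the Loewner (divided-difference) matrix---whose entries are controlled by $\lambda_{\min}(\Sigma)^{-1/2}$-type quantities, with the extra powers of $\twonorm{\minv{\Sigma}}$ coming from first converting $\twonorm{\sigest-\Sigma}$ into $\twonorm{E}$ for $E=\minv{\sigest}-\minv{\Sigma}$ via \cref{lemma:sample-precision}---with the error matrix; the Schur-multiplier step costs a factor $\sqrt{d\twonorm{\minv{\Sigma}}}$, which is exactly where the $d^2$ in the final rate comes from. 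This works because $t\mapsto t^{-1/2}$ is operator smooth on $[\lambda_{\min},\infty)$, so no gap enters; an equivalent gap-free alternative is the resolvent representation $\Sigma^{-1/2}=\frac{2}{\pi}\int_0^\infty(\Sigma+t^2I)^{-1}\,dt$. Replace your Davis--Kahan step with either of these and the rest of your outline goes through essentially as the paper's does.
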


The full theorem statement can be found in the Appendix. Experimentally, we found that \dare does not outperform methods specifically intended for UDA, possibly due to the fact that $n\ll d^2$---but we believe this is a promising direction for future theoretical research, since it doesn't require unlabeled samples at train-time and it can incorporate new data in a streaming fashion.

\section{Experiments}
\label{sec:experiments}

\begin{table*}[ht!]
    \adjustbox{max width=\textwidth}{
    \begin{tabular}{lcccc}
        \toprule
        \textbf{Dataset / Algorithm} & \multicolumn{4}{c}{\textbf{Mean Accuracy by Domain ($\pm$ 90\% CI)}} \\
        \coloredMidrule{white}{magenta!10}
        \rowcolor{magenta!10}
        Office-Home & A & C & P & R\\
        \coloredBelowRuleSep{white}
        \quad ERM        & 61.4 $\pm$ 1.9       & 52.1 $\pm$ 1.6        & 74.6 $\pm$ 0.7       & 75.8 $\pm$ 1.9   \\
        \quad IRM                   & 62.1 $\pm$ 1.5       & 53.2 $\pm$ 1.7       & 75.2 $\pm$ 0.9        & 77.3 $\pm$ 0.8     \\
        \quad GroupDRO                   & 62.5 $\pm$ 0.5       & 53.1 $\pm$ 1.7       & \underline{75.7 $\pm$ 0.5}       & \underline{77.7 $\pm$ 1.1}   \\
        \quad Reweighted ERM                   & 62.5 $\pm$ 0.8       & 52.5 $\pm$ 2.3       & \underline{75.7 $\pm$ 0.5}       & \underline{77.4 $\pm$ 0.8}    \\
        \midrule
        \quad DARE                 & \textbf{64.7 $\pm$ 0.8}      & \textbf{55.4 $\pm$ 1.1}       & \textbf{77.5 $\pm$ 0.3} & \textbf{79.2 $\pm$ 0.5}    \\
        \coloredMidrule{white}{magenta!10}
        \rowcolor{magenta!10}
        PACS & A & C & P & S \\
        \coloredBelowRuleSep{white}
        \quad ERM        & 84.3 $\pm$ 1.5       & 79.5 $\pm$ 1.9       & 96.7 $\pm$ 0.5       & 74.9 $\pm$ 3.7   \\
        \quad IRM                   & 83.6 $\pm$ 0.4       & 79.2 $\pm$ 0.5       & 97.1 $\pm$ 0.2       & 76.4 $\pm$ 2.3     \\
        \quad GroupDRO                   & 83.6 $\pm$ 1.0       & 79.1 $\pm$ 0.2       & 96.9 $\pm$ 0.3       & \underline{76.8 $\pm$ 2.8}   \\
        \quad Reweighted ERM                   & 83.6 $\pm$ 1.0       & 78.8 $\pm$ 0.2       & 97.1 $\pm$ 0.3       & 76.5 $\pm$ 2.3     \\
        \midrule
        \quad DARE                  & \textbf{85.6 $\pm$ 0.6}       & 80.5 $\pm$ 1.0       & 96.6 $\pm$ 0.4       & 76.7 $\pm$ 2.0      \\
        \bottomrule
    \end{tabular}
    \;
    \begin{tabular}{lcccccc}
        \toprule
        \textbf{Dataset / Algorithm} & \multicolumn{6}{c}{\textbf{Mean Accuracy by Domain ($\pm$ 90\% CI)}} \\
        \coloredMidrule{white}{magenta!10}
        \rowcolor{magenta!10}
        VLCS & C & L & S & V & & \\
        \coloredBelowRuleSep{white}
        \quad ERM        & 97.6 $\pm$ 0.7      & \textbf{66.1 $\pm$ 0.6}       & 71.9 $\pm$ 1.9       & 76.1 $\pm$ 2.3   & &    \\
        \quad IRM                   & \underline{98.6 $\pm$ 0.6}     & 64.9 $\pm$ 0.9       & 72.9 $\pm$ 0.7      & 74.1 $\pm$ 2.1     & &  \\
        \quad GroupDRO                   & \underline{98.5 $\pm$ 0.7}      & 65.0 $\pm$ 0.5       & 73.9 $\pm$ 1.0       & 75.1 $\pm$ 1.8     & &  \\
        \quad Reweighted ERM                   & \underline{98.6 $\pm$ 0.7}       & 65.5 $\pm$ 0.5       & 73.4 $\pm$ 0.3       & 74.9 $\pm$ 1.7     \\
        \midrule
        \quad DARE                  & \underline{98.5 $\pm$ 0.2}       & 62.5 $\pm$ 1.6       & 74.3 $\pm$ 1.4       & 75.5 $\pm$ 1.4      & &  \\
        \coloredMidrule{white}{magenta!10}
        \rowcolor{magenta!10}
        DomainNet & c & i & p & q & r & s \\
        \coloredBelowRuleSep{white}
        \quad ERM        & 57.6 $\pm$ 0.6      & 17.9 $\pm$ 0.6      & 44.7 $\pm$ 0.7       & 12.6 $\pm$ 0.9  & 59.0 $\pm$ 0.7       & 48.6 $\pm$ 0.3     \\
        \quad IRM                   & \underline{60.9 $\pm$ 0.4}       & \underline{19.3 $\pm$ 0.2}       & \underline{47.6 $\pm$ 0.4}       & 12.4 $\pm$ 0.4     & \underline{61.9 $\pm$ 1.0}       & \underline{49.8 $\pm$ 0.6} \\
        \quad GroupDRO                   & 57.8 $\pm$ 0.7       & \underline{18.8 $\pm$ 0.4}       & \underline{45.3 $\pm$ 0.4}       & 11.9 $\pm$ 0.8   & 59.5 $\pm$ 1.0       & 47.9 $\pm$ 0.9     \\
        \quad Reweighted ERM                   & \underline{61.0 $\pm$ 0.4}       & 19.1 $\pm$ 0.4       & \underline{47.4 $\pm$ 0.4}       & 12.3 $\pm$ 0.6   & \underline{61.8 $\pm$ 1.1}       & \underline{49.9 $\pm$ 0.5}     \\
        \midrule
        \quad DARE                  & \textbf{61.7 $\pm$ 0.2}       & \underline{19.3 $\pm$ 0.1}       & \underline{47.4 $\pm$ 0.5}       & \textbf{13.1 $\pm$ 0.7}   & \underline{61.2 $\pm$ 1.3}       & \textbf{51.4 $\pm$ 0.3}    \\
        \bottomrule
    \end{tabular}}
    \caption{Performance of \emph{linear} predictors on top of fixed features learned via ERM. Each letter is a domain. Because all algorithms use the same set of features for each trial, results are not independent. Therefore, \textbf{bold} indicates highest mean according to one-sided paired t-tests at $p=0.1$ significance. If not the overall highest, \underline{underline} indicates higher mean than ERM under the same test.}
    \label{table:dare-comparison}
\end{table*}

Most algorithms implemented in the \textsc{DomainBed} benchmark are only applicable to deep networks; many apply complex regularization to either the learned features or the network itself. We instead compare to three popular algorithms which work for linear classifiers: ERM \citep{vapnik1999overview}, IRM \citep{arjovsky2019invariant}, and GroupDRO \citep{sagawa2020distributionally}. We also discovered that \textbf{simply rescaling ERM by downweighting each domain proportionally to its size serves as a strong baseline.} Concurrent work makes a similar observation, exploring this baseline when training the whole network rather than just the last layer \citep{idrissi2021simple}. We denote this method ``Reweighted ERM'' and we include it in our reported results in \cref{table:dare-comparison}.
We evaluate all approaches on four datasets: Office-Home \citep{venkateswara2017Office}, PACS \citep{li2017PACS}, VLCS \citep{fang2013VLCS}, and DomainNet \citep{peng2019Domain}. The features for each trial are the default hyperparameter sweep of \textsc{DomainBed}; for computational reasons, we used fewer random hyperparameter choices per trial, meaning the reported accuracies are not directly comparable. Nevertheless, our results are significant because they are consistently evaluated according to the same methodology. Additional comparisons to other end-to-end methods can be found in \cref{app:addl-exps}.

We find that \dare consistently matches or surpasses prior methods. All algorithms use the same set of features for each trial, so their performances are highly dependent. To account for this, we perform a one-sided paired t-test between algorithms to determine the best performers. The fact that \dare consistently bests ERM is somewhat surprising: it is intended to guard against a worst-case distribution shift and depends on the quality of our guess $\sigest$, so we would in general expect worse performance on some datasets.

\paragraph{Prior methods now consistently outperform ERM.} It is interesting that the previously observed gap between ERM and alternatives disappears in this setting with a linear predictor. For example, \citet{gulrajani2021search} report IRM and GroupDRO performing much worse on DomainNet---5-10\% lower accuracy on some domains---but they surpass ERM when using frozen features. This could be because they learn worse features, or perhaps they are just more difficult to optimize over a deep network. This also implies that when these methods \emph{do} work, it is most likely due to learning a better linear classifier. This further motivates work on methods for learning simpler robust predictors.

\section{Related Work}
\label{sec:related-work}
A popular approach to domain generalization matches the domains in feature space, either by aligning moments \citep{sun2016coral} or with an adversarial loss \citep{ganin2016domain}, though these methods are known to be inadequate in general \citep{zhao2019learning}. \dare differs from these approaches in that the constraint requires only that the \emph{feature mean projection onto the vector $\beta$} be invariant. Domain-invariant projections \citep{baktashmotlagh2013unsupervised} were recently analyzed by \citet{chen2021domain}, though notably under fully observed features and only for domain adaptation. They analyze other invariances as well, and we expect combining these methods with domain adjustment can serve as a direction for future study.

There has been intense recent focus on invariant prediction, based on ideas from causality \citep{peters2016causal} and catalyzed by IRM \citep{arjovsky2019invariant}. Though the goal of such methods is minimax-optimality under major distribution shift, later work identifies critical failure modes of this approach \citep{rosenfeld2021risks, kamath2021does}. As discussed in \cref{example}, these methods eliminate features whose information is not invariant, which is often overly conservative. \dare instead allows for \emph{limited} new variation by aligning the non-invariant subspaces, enabling stronger theoretical guarantees.

Some prior works ``normalize'' each domain by learning separate batchnorm parameters but sharing the rest of the network. Initially suggested for UDA \citep{li2016revisiting, bousmalis2016domain, chang2019domain}, this idea has also been applied to domain generalization \citep{seo2019learning, segu2020batch} but in a somewhat ad-hoc manner---this is problematic because deep domain generalization methods were recently called into question when \citet{gulrajani2021search} gave convincing evidence that no method beats ERM when evaluated fairly. Our analysis provides an initial justification for these methods, suggesting that this idea is worth exploring further.

\subsection*{Acknowledgements}
We thank Arun Kumar Kuchibhotla for identifying an error in an earlier version of \cref{lemma:logistic-with-noise}, as well as pointing us to the work of \citet{li1989regression}. We are grateful to Saurabh Garg, Jeremy Cohen, Zack Lipton, and Siva Balakrishnan for helpful comments.

\bibliography{main}

\begin{thebibliography}{48}
\providecommand{\natexlab}[1]{#1}
\providecommand{\url}[1]{\texttt{#1}}
\expandafter\ifx\csname urlstyle\endcsname\relax
  \providecommand{\doi}[1]{doi: #1}\else
  \providecommand{\doi}{doi: \begingroup \urlstyle{rm}\Url}\fi

\bibitem[Arjovsky et~al.(2019)Arjovsky, Bottou, Gulrajani, and
  Lopez-Paz]{arjovsky2019invariant}
Martin Arjovsky, L{\'e}on Bottou, Ishaan Gulrajani, and David Lopez-Paz.
\newblock Invariant risk minimization.
\newblock \emph{arXiv preprint arXiv:1907.02893}, 2019.

\bibitem[Baktashmotlagh et~al.(2013)Baktashmotlagh, Harandi, Lovell, and
  Salzmann]{baktashmotlagh2013unsupervised}
Mahsa Baktashmotlagh, Mehrtash~T. Harandi, Brian~C. Lovell, and Mathieu
  Salzmann.
\newblock Unsupervised domain adaptation by domain invariant projection.
\newblock In \emph{2013 IEEE International Conference on Computer Vision},
  pages 769--776, 2013.
\newblock \doi{10.1109/ICCV.2013.100}.

\bibitem[Bansal et~al.(2021)Bansal, Kaplun, and Barak]{bansal2021rationality}
Yamini Bansal, Gal Kaplun, and Boaz Barak.
\newblock For self-supervised learning, rationality implies generalization,
  provably.
\newblock In \emph{International Conference on Learning Representations}, 2021.

\bibitem[Bousmalis et~al.(2016)Bousmalis, Trigeorgis, Silberman, Krishnan, and
  Erhan]{bousmalis2016domain}
Konstantinos Bousmalis, George Trigeorgis, Nathan Silberman, Dilip Krishnan,
  and Dumitru Erhan.
\newblock Domain separation networks.
\newblock In \emph{Proceedings of the 30th International Conference on Neural
  Information Processing Systems}, NIPS'16, page 343–351, Red Hook, NY, USA,
  2016. Curran Associates Inc.
\newblock ISBN 9781510838819.

\bibitem[Carlsson(2018)]{carlsson2018perturbation}
Marcus Carlsson.
\newblock Perturbation theory for the matrix square root and matrix modulus.
\newblock 10 2018.

\bibitem[Chang et~al.(2019)Chang, You, Seo, Kwak, and Han]{chang2019domain}
Woong-Gi Chang, Tackgeun You, Seonguk Seo, Suha Kwak, and Bohyung Han.
\newblock Domain-specific batch normalization for unsupervised domain
  adaptation.
\newblock \emph{arXiv preprint arXiv:1906.03950}, 2019.

\bibitem[Chen et~al.(2021)Chen, Rosenfeld, Sellke, Ma, and
  Risteski]{chen2021iterative}
Yining Chen, Elan Rosenfeld, Mark Sellke, Tengyu Ma, and Andrej Risteski.
\newblock Iterative feature matching: Toward provable domain generalization
  with logarithmic environments.
\newblock \emph{arXiv preprint arXiv:2106.09913}, 2021.

\bibitem[Chen and Bühlmann(2021)]{chen2021domain}
Yuansi Chen and Peter Bühlmann.
\newblock Domain adaptation under structural causal models.
\newblock \emph{Journal of Machine Learning Research}, 22\penalty0
  (261):\penalty0 1--80, 2021.

\bibitem[Daleckii and Krein(1965)]{daleckii1965integration}
Ju~L. Daleckii and S.G. Krein.
\newblock Integration and differentiation of functions of hermitian operators
  and applications to the theory of perturbations.
\newblock \emph{AMS Translations (2), 47(1-30)}, 1965.

\bibitem[Du et~al.(2019)Du, Zhai, Poczos, and Singh]{du2018gradient}
Simon~S. Du, Xiyu Zhai, Barnabas Poczos, and Aarti Singh.
\newblock Gradient descent provably optimizes over-parameterized neural
  networks.
\newblock In \emph{International Conference on Learning Representations}, 2019.
\newblock URL \url{https://openreview.net/forum?id=S1eK3i09YQ}.

\bibitem[Fang et~al.(2013)Fang, Xu, and Rockmore]{fang2013VLCS}
Chen Fang, Ye~Xu, and Daniel~N. Rockmore.
\newblock Unbiased metric learning: On the utilization of multiple datasets and
  web images for softening bias.
\newblock In \emph{2013 IEEE International Conference on Computer Vision},
  pages 1657--1664, 2013.
\newblock \doi{10.1109/ICCV.2013.208}.

\bibitem[Ganin et~al.(2016)Ganin, Ustinova, Ajakan, Germain, Larochelle,
  Laviolette, Marchand, and Lempitsky]{ganin2016domain}
Yaroslav Ganin, Evgeniya Ustinova, Hana Ajakan, Pascal Germain, Hugo
  Larochelle, Fran\c{c}ois Laviolette, Mario Marchand, and Victor Lempitsky.
\newblock Domain-adversarial training of neural networks.
\newblock \emph{J. Mach. Learn. Res.}, 17\penalty0 (1):\penalty0 2096–2030,
  jan 2016.
\newblock ISSN 1532-4435.

\bibitem[Geirhos et~al.(2019)Geirhos, Rubisch, Michaelis, Bethge, Wichmann, and
  Brendel]{geirhos2019imagenet}
Robert Geirhos, Patricia Rubisch, Claudio Michaelis, Matthias Bethge, Felix~A.
  Wichmann, and Wieland Brendel.
\newblock Imagenet-trained {CNN}s are biased towards texture; increasing shape
  bias improves accuracy and robustness.
\newblock In \emph{International Conference on Learning Representations}, 2019.

\bibitem[Goodfellow et~al.(2016)Goodfellow, Bengio, and
  Courville]{GoodBengCour16}
Ian~J. Goodfellow, Yoshua Bengio, and Aaron Courville.
\newblock \emph{Deep Learning}.
\newblock MIT Press, Cambridge, MA, USA, 2016.
\newblock \url{http://www.deeplearningbook.org}.

\bibitem[Gulrajani and Lopez-Paz(2021)]{gulrajani2021search}
Ishaan Gulrajani and David Lopez-Paz.
\newblock In search of lost domain generalization.
\newblock In \emph{International Conference on Learning Representations}, 2021.

\bibitem[He et~al.(2016)He, Zhang, Ren, and Sun]{he2016deep}
Kaiming He, Xiangyu Zhang, Shaoqing Ren, and Jian Sun.
\newblock Deep residual learning for image recognition.
\newblock In \emph{2016 IEEE Conference on Computer Vision and Pattern
  Recognition (CVPR)}, pages 770--778, 2016.
\newblock \doi{10.1109/CVPR.2016.90}.

\bibitem[Heagerty and Zeger(2000)]{heagerty2000marginalized}
Patrick~J. Heagerty and Scott~L. Zeger.
\newblock {Marginalized multilevel models and likelihood inference (with
  comments and a rejoinder by the authors)}.
\newblock \emph{Statistical Science}, 15\penalty0 (1):\penalty0 1 -- 26, 2000.
\newblock \doi{10.1214/ss/1009212671}.

\bibitem[Idrissi et~al.(2021)Idrissi, Arjovsky, Pezeshki, and
  Lopez-Paz]{idrissi2021simple}
Badr~Youbi Idrissi, Martin Arjovsky, Mohammad Pezeshki, and David Lopez-Paz.
\newblock Simple data balancing achieves competitive worst-group-accuracy.
\newblock \emph{arXiv preprint arXiv:2110.14503}, 2021.

\bibitem[Jacot et~al.(2018)Jacot, Gabriel, and Hongler]{jacot2018neural}
Arthur Jacot, Franck Gabriel, and Clement Hongler.
\newblock Neural tangent kernel: Convergence and generalization in neural
  networks.
\newblock In S.~Bengio, H.~Wallach, H.~Larochelle, K.~Grauman, N.~Cesa-Bianchi,
  and R.~Garnett, editors, \emph{Advances in Neural Information Processing
  Systems}, volume~31. Curran Associates, Inc., 2018.

\bibitem[Kamath et~al.(2021)Kamath, Tangella, Sutherland, and
  Srebro]{kamath2021does}
Pritish Kamath, Akilesh Tangella, Danica Sutherland, and Nathan Srebro.
\newblock Does invariant risk minimization capture invariance?
\newblock In Arindam Banerjee and Kenji Fukumizu, editors, \emph{Proceedings of
  The 24th International Conference on Artificial Intelligence and Statistics},
  volume 130 of \emph{Proceedings of Machine Learning Research}, pages
  4069--4077. PMLR, 13--15 Apr 2021.

\bibitem[Kaushik and Lipton(2018)]{kaushik2018how}
Divyansh Kaushik and Zachary~C. Lipton.
\newblock How much reading does reading comprehension require? a critical
  investigation of popular benchmarks.
\newblock In \emph{EMNLP}, pages 5010--5015, 2018.

\bibitem[Kereta and Klock(2021)]{kereta2021estimating}
Zeljko Kereta and Timo Klock.
\newblock Estimating covariance and precision matrices along subspaces.
\newblock \emph{Electronic Journal of Statistics}, 15:\penalty0 554--588, 01
  2021.
\newblock \doi{10.1214/20-EJS1782}.

\bibitem[Koltchinskii and Lounici(2017)]{koltchinskii2017concentration}
Vladimir Koltchinskii and Karim Lounici.
\newblock {Concentration inequalities and moment bounds for sample covariance
  operators}.
\newblock \emph{Bernoulli}, 23\penalty0 (1):\penalty0 110 -- 133, 2017.
\newblock \doi{10.3150/15-BEJ730}.

\bibitem[LeCun et~al.(2015)LeCun, Bengio, and Hinton]{lecun2015deep}
Yann LeCun, Yoshua Bengio, and Geoffrey Hinton.
\newblock Deep learning.
\newblock \emph{Nature}, 521\penalty0 (7553):\penalty0 436--444, May 2015.
\newblock ISSN 1476-4687.
\newblock \doi{10.1038/nature14539}.

\bibitem[Li et~al.(2017)Li, Yang, Song, and Hospedales]{li2017PACS}
D.~Li, Y.~Yang, Y.~Song, and T.~M. Hospedales.
\newblock Deeper, broader and artier domain generalization.
\newblock In \emph{2017 IEEE International Conference on Computer Vision
  (ICCV)}, pages 5543--5551, Los Alamitos, CA, USA, oct 2017. IEEE Computer
  Society.
\newblock \doi{10.1109/ICCV.2017.591}.

\bibitem[Li and Duan(1989)]{li1989regression}
Ker-Chau Li and Naihua Duan.
\newblock {Regression Analysis Under Link Violation}.
\newblock \emph{The Annals of Statistics}, 17\penalty0 (3):\penalty0 1009 --
  1052, 1989.
\newblock \doi{10.1214/aos/1176347254}.

\bibitem[Li et~al.(2016)Li, Wang, Shi, Liu, and Hou]{li2016revisiting}
Yanghao Li, Naiyan Wang, Jianping Shi, Jiaying Liu, and Xiaodi Hou.
\newblock Revisiting batch normalization for practical domain adaptation.
\newblock \emph{Pattern Recognition}, 80, 03 2016.
\newblock \doi{10.1016/j.patcog.2018.03.005}.

\bibitem[Liu and Nocedal(1989)]{Liu1989lbfgs}
Dong~C. Liu and Jorge Nocedal.
\newblock On the limited memory bfgs method for large scale optimization.
\newblock \emph{Mathematical Programming}, 45\penalty0 (1):\penalty0 503--528,
  Aug 1989.
\newblock ISSN 1436-4646.
\newblock \doi{10.1007/BF01589116}.

\bibitem[Miller et~al.(2020)Miller, Krauth, Recht, and
  Schmidt]{miller2020effect}
John Miller, Karl Krauth, Benjamin Recht, and Ludwig Schmidt.
\newblock The effect of natural distribution shift on question answering
  models.
\newblock In Hal~Daumé III and Aarti Singh, editors, \emph{Proceedings of the
  37th International Conference on Machine Learning}, volume 119 of
  \emph{Proceedings of Machine Learning Research}, pages 6905--6916. PMLR,
  13--18 Jul 2020.

\bibitem[Peng et~al.(2019)Peng, Bai, Xia, Huang, Saenko, and
  Wang]{peng2019Domain}
Xingchao Peng, Qinxun Bai, Xide Xia, Zijun Huang, Kate Saenko, and Bo~Wang.
\newblock Moment matching for multi-source domain adaptation.
\newblock In \emph{Proceedings of the IEEE International Conference on Computer
  Vision}, pages 1406--1415, 2019.

\bibitem[Peters et~al.(2016)Peters, B{\"u}hlmann, and
  Meinshausen]{peters2016causal}
Jonas Peters, Peter B{\"u}hlmann, and Nicolai Meinshausen.
\newblock Causal inference by using invariant prediction: identification and
  confidence intervals.
\newblock \emph{Journal of the Royal Statistical Society}, 2016.

\bibitem[Poliak et~al.(2018)Poliak, Naradowsky, Haldar, Rudinger, and
  Van~Durme]{poliak2018hypothesis}
Adam Poliak, Jason Naradowsky, Aparajita Haldar, Rachel Rudinger, and Benjamin
  Van~Durme.
\newblock Hypothesis only baselines in natural language inference.
\newblock In \emph{Proceedings of the Seventh Joint Conference on Lexical and
  Computational Semantics}, pages 180--191, New Orleans, Louisiana, June 2018.
  Association for Computational Linguistics.
\newblock \doi{10.18653/v1/S18-2023}.

\bibitem[Recht et~al.(2019)Recht, Roelofs, Schmidt, and
  Shankar]{recht19imagenet}
Benjamin Recht, Rebecca Roelofs, Ludwig Schmidt, and Vaishaal Shankar.
\newblock Do {I}mage{N}et classifiers generalize to {I}mage{N}et?
\newblock In Kamalika Chaudhuri and Ruslan Salakhutdinov, editors,
  \emph{Proceedings of the 36th International Conference on Machine Learning},
  volume~97 of \emph{Proceedings of Machine Learning Research}, pages
  5389--5400. PMLR, 09--15 Jun 2019.

\bibitem[Rosenfeld et~al.(2021)Rosenfeld, Ravikumar, and
  Risteski]{rosenfeld2021risks}
Elan Rosenfeld, Pradeep Ravikumar, and Andrej Risteski.
\newblock The risks of invariant risk minimization.
\newblock In \emph{International Conference on Learning Representations}, 2021.

\bibitem[Rosenfeld et~al.(2022)Rosenfeld, Ravikumar, and
  Risteski]{rosenfeld2022online}
Elan Rosenfeld, Pradeep Ravikumar, and Andrej Risteski.
\newblock An online learning approach to interpolation and extrapolation in
  domain generalization.
\newblock In \emph{Proceedings of The 25th International Conference on
  Artificial Intelligence and Statistics}, Proceedings of Machine Learning
  Research. PMLR, 2022.

\bibitem[Rothenhäusler et~al.(2021)Rothenhäusler, Meinshausen, Bühlmann, and
  Peters]{rothenhausler2018anchor}
Dominik Rothenhäusler, Nicolai Meinshausen, Peter Bühlmann, and Jonas Peters.
\newblock Anchor regression: Heterogeneous data meet causality.
\newblock \emph{Journal of the Royal Statistical Society Series B}, 83\penalty0
  (2):\penalty0 215--246, 2021.

\bibitem[Sagawa et~al.(2020)Sagawa, Koh, Hashimoto, and
  Liang]{sagawa2020distributionally}
Shiori Sagawa, Pang~Wei Koh, Tatsunori~B. Hashimoto, and Percy Liang.
\newblock Distributionally robust neural networks.
\newblock In \emph{International Conference on Learning Representations}, 2020.

\bibitem[Seg{\`{u}} et~al.(2020)Seg{\`{u}}, Tonioni, and
  Tombari]{segu2020batch}
Mattia Seg{\`{u}}, Alessio Tonioni, and Federico Tombari.
\newblock Batch normalization embeddings for deep domain generalization.
\newblock \emph{arXiv preprint arXiv:2011.12672}, 2020.

\bibitem[Seo et~al.(2019)Seo, Suh, Kim, Kim, Han, and Han]{seo2019learning}
Seonguk Seo, Yumin Suh, Dongwan Kim, Geeho Kim, Jongwoo Han, and Bohyung Han.
\newblock Learning to optimize domain specific normalization for domain
  generalization.
\newblock \emph{arXiv preprint arXiv:1907.04275}, 2019.

\bibitem[Stojanov et~al.(2021)Stojanov, Li, Gong, Cai, Carbonell, and
  Zhang]{stojanov2021domain}
Petar Stojanov, Zijian Li, Mingming Gong, Ruichu Cai, Jaime~G. Carbonell, and
  Kun Zhang.
\newblock Domain adaptation with invariant representation learning: What
  transformations to learn?
\newblock In A.~Beygelzimer, Y.~Dauphin, P.~Liang, and J.~Wortman Vaughan,
  editors, \emph{Advances in Neural Information Processing Systems}, 2021.

\bibitem[Sun and Saenko(2016)]{sun2016coral}
Baochen Sun and Kate Saenko.
\newblock Deep coral: Correlation alignment for deep domain adaptation.
\newblock In Gang Hua and Herv{\'e} J{\'e}gou, editors, \emph{Computer Vision
  -- ECCV 2016 Workshops}, pages 443--450, Cham, 2016. Springer International
  Publishing.
\newblock ISBN 978-3-319-49409-8.

\bibitem[Vapnik(1999)]{vapnik1999overview}
V.N. Vapnik.
\newblock An overview of statistical learning theory.
\newblock \emph{IEEE Transactions on Neural Networks}, 10\penalty0
  (5):\penalty0 988--999, 1999.
\newblock \doi{10.1109/72.788640}.

\bibitem[Venkateswara et~al.(2017)Venkateswara, Eusebio, Chakraborty, and
  Panchanathan]{venkateswara2017Office}
Hemanth Venkateswara, Jose Eusebio, Shayok Chakraborty, and Sethuraman
  Panchanathan.
\newblock Deep hashing network for unsupervised domain adaptation.
\newblock \emph{2017 IEEE Conference on Computer Vision and Pattern Recognition
  (CVPR)}, pages 5385--5394, 2017.

\bibitem[Wald et~al.(2021)Wald, Feder, Greenfeld, and
  Shalit]{wald2021calibration}
Yoav Wald, Amir Feder, Daniel Greenfeld, and Uri Shalit.
\newblock On calibration and out-of-domain generalization.
\newblock In A.~Beygelzimer, Y.~Dauphin, P.~Liang, and J.~Wortman Vaughan,
  editors, \emph{Advances in Neural Information Processing Systems}, 2021.

\bibitem[Wang et~al.(2022)Wang, Si, Li, and Zhao]{wang2022provable}
Haoxiang Wang, Haozhe Si, Bo~Li, and Han Zhao.
\newblock Provable domain generalization via invariant-feature subspace
  recovery.
\newblock In Kamalika Chaudhuri, Stefanie Jegelka, Le~Song, Csaba Szepesvari,
  Gang Niu, and Sivan Sabato, editors, \emph{Proceedings of the 39th
  International Conference on Machine Learning}, volume 162 of
  \emph{Proceedings of Machine Learning Research}, pages 23018--23033. PMLR,
  17--23 Jul 2022.

\bibitem[Xiao et~al.(2021)Xiao, Engstrom, Ilyas, and Madry]{xiao2021noise}
Kai~Yuanqing Xiao, Logan Engstrom, Andrew Ilyas, and Aleksander Madry.
\newblock Noise or signal: The role of image backgrounds in object recognition.
\newblock In \emph{International Conference on Learning Representations}, 2021.

\bibitem[Yu et~al.(2014)Yu, Wang, and Samworth]{yu2014useful}
Y.~Yu, T.~Wang, and R.~J. Samworth.
\newblock {A useful variant of the Davis–Kahan theorem for statisticians}.
\newblock \emph{Biometrika}, 102\penalty0 (2):\penalty0 315--323, 04 2014.
\newblock ISSN 0006-3444.
\newblock \doi{10.1093/biomet/asv008}.

\bibitem[Zhao et~al.(2019)Zhao, Combes, Zhang, and Gordon]{zhao2019learning}
Han Zhao, Remi Tachet~Des Combes, Kun Zhang, and Geoffrey Gordon.
\newblock On learning invariant representations for domain adaptation.
\newblock In Kamalika Chaudhuri and Ruslan Salakhutdinov, editors,
  \emph{Proceedings of the 36th International Conference on Machine Learning},
  volume~97 of \emph{Proceedings of Machine Learning Research}, pages
  7523--7532. PMLR, 09--15 Jun 2019.

\end{thebibliography}
\bibliographystyle{plainnat}


\newpage
\appendix

\section{Connection to Anchor Regression}
\label{app:anchor}
The \dare objective for linear regression is written
\begin{equation}
    \begin{alignedat}{3}
        \label{eq:objective-linear}
        &\min_{\beta} &\sum_{e\in\calE} \E_{p^e}[(\beta^T \minvsqrt{\Sigma_e}(\obs - \mu_e) - y)^2]
        \qquad\text{s.t.} \quad \beta^T \minvsqrt{\Sigma_e} \mu_e = 0. \;\;\forall e\in\calE.
    \end{alignedat}
\end{equation}
The idea of adjusting for domain projections has similarities to Anchor Regression \citep{rothenhausler2018anchor}, an objective which linearly regresses separately on the projection and rejection of the data onto the span of a set of \emph{anchor variables}. These variables represent some (known) measure of variability across the data, and the resulting solution enjoys robustness to pointwise-additive shifts in the underlying SCM. If we define the anchor variable to be a one-hot vector indicating a sample's environment, the Anchor Regression objective minimizes
\begin{align*}
    \frac{1}{|\calE|} \sum_{e\in\calE} \E_{p^e}\left[ \ell(\beta^T (\obs - \mu_e),\; y - \mu_{y,e}) + \gamma \ell(\beta^T \mu_e,\; \mu_{y,e})) \right],
\end{align*}
where $\ell$ is the squared loss and $\mu_{y,e} = \E_{p^e}[y]$. Here we see that Anchor Regression is ``adjusting'' in a sense, by regressing on the residuals, though the objective also regresses the mean prediction onto the target mean. Unfortunately, this requires access to the target mean, which is unavailable in logistic regression due to the lack of a good estimator for $\E\left[\log \frac{p(y=1\mid \obs)}{p(y=-1 \mid \obs)}\right]$. Nevertheless, if we (i) assume the feature covariance is fixed for all environments and (ii) assume the target mean is zero for all environments, we observe that the above objective becomes equivalent to the Lagrangian form of \dare for linear regression (\ref{eq:objective-linear}). Alternatively, we could imagine combining the two by keeping Anchor Regression's use of separate target means while adding \dare's feature covariance whitening which would give
\begin{align*}
    \frac{1}{|\calE|} \sum_{e\in\calE} \E_{p^e}\left[ \ell(\beta^T \minvsqrt{\Sigma_e} (\obs - \mu_e),\; y - \mu_{y,e}) + \gamma \ell(\beta^T \minvsqrt{\Sigma_e} \mu_e,\; \mu_{y,e}) \right],
\end{align*}
However, this still requires us to estimate the target mean, so it is unclear if or how this objective could be applied to the task of classification.

\section{Proofs of Main Results}
\subsection{Notation}
We use capital letters to denote matrices and lowercase to denote vectors or scalars, where the latter should be clear from context. $\twonorm{\cdot}$ refers to the usual vector norm, or spectral norm for matrices. For a matrix $M$, we use $\lambda_{\max}(M)$ to mean its maximum eigenvalue---the minimum is defined analogously. We write the pseudo-inverse as $M^\dagger$. For a collection of samples $\{x_i\}_{i=1}^n$, we frequently make use of the sample mean, $\hat\mu := \frac{1}{n} \sum x_i$, and the sample covariance, $\hat\Sigma := \frac{1}{n} \sum (x_i - \muest) (x_i - \muest)^T$. The notation $\lesssim$ means less than or equal to up to constant factors.

\subsection{Statement and proof of \cref{lemma:logistic-with-noise}, and discussion of related results}

\begin{lemma}
\label{lemma:logistic-with-noise}
Assume our data follows a logistic regression model with regression vector $\beta^*$ and covariates $z\sim\calN(0, I)$: $\log \frac{p(y=1\mid z)}{p(y=-1\mid z)} = \beta^{*T}z$. Then the solution to the dimension-constrained logistic regression problem
\begin{align*}
    \argmin_\beta \;\E_{z,y}[-\log \sigma(y \beta^Tz)] \qquad \text{\emph{s.t.}} \; \beta_{S^c} = \mathbf{0},
\end{align*}
where $S \subseteq [d]$ indexes a subset of the dimensions, is equal to $\alpha \beta^*_S$ for some $\alpha\in (0, 1]$.
\end{lemma}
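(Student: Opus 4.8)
The plan is to characterize the constrained minimizer through its first-order optimality condition and verify that the ansatz $\alpha\beta^*_S$ satisfies it for a suitable $\alpha\in(0,1]$. Since the covariates are standard Gaussian and the labels are logistic, the population objective restricted to $\{\beta:\beta_{S^c}=\mathbf 0\}$ is strictly convex (the Hessian $\E[\sigma'(\beta^Tz)z_Sz_S^T]$ is positive definite) and coercive, so it has a unique minimizer determined by the vanishing of the gradient in the free coordinates $S$. First I would compute the gradient: integrating out $y\mid z$ collapses the logistic loss gradient to $\nabla\ell(\beta)=\E_z[(\sigma(\beta^Tz)-\sigma(\beta^{*T}z))\,z]$, so the optimality condition is $\E_z[(\sigma(\beta^Tz)-\sigma(\beta^{*T}z))\,z_i]=0$ for every $i\in S$.

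Next I would substitute the ansatz $\beta=\alpha\beta^*_S$ and exploit the coordinate independence of $z$. Writing $u:=\beta^{*T}_S z$ and $v:=\beta^{*T}_{S^c}z$, these are independent Gaussians with variances $s^2=\|\beta^*_S\|^2$ and $\|\beta^*_{S^c}\|^2$, and $\beta^{*T}z=u+v$ while $\beta^Tz=\alpha u$. Because $v$ does not depend on $z_i$ for $i\in S$, Gaussian integration by parts (Stein's lemma) turns each $S$-component of the gradient into $\beta^*_i\big(\alpha\,\E[\sigma'(\alpha u)]-\E[\sigma'(u+v)]\big)$. Hence the vector optimality condition reduces to the single scalar equation $F(\alpha)=c$, where $F(\alpha):=\alpha\,\E[\sigma'(\alpha u)]$ and $c:=\E[\sigma'(u+v)]>0$; in particular the gradient at the ansatz is automatically parallel to $\beta^*_S$, which is the crux of why this one-parameter family contains the optimum.

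It then remains to produce a root of $F(\alpha)=c$ in $(0,1]$. Since $F$ is continuous with $F(0)=0<c$, the intermediate value theorem gives such a root as soon as $F(1)\ge c$. Observing that $F(1)=g(\|\beta^*_S\|)$ and $c=g(\|\beta^*\|)$ for $g(\tau):=\E_{w\sim\calN(0,\tau^2)}[\sigma'(w)]$, and that $\|\beta^*_S\|\le\|\beta^*\|$, the inequality $F(1)\ge c$ follows once I show $g$ is non-increasing in $\tau$. I expect this monotonicity to be the main obstacle, and I would handle it by viewing $g(\tau)=(\phi_\tau*\psi)(0)$ as the value at the origin of the density of a $\calN(0,\tau^2)$ variable plus an independent logistic variable with density $\psi=\sigma'$: this density is symmetric and unimodal with its peak at $0$, and convolving with a further Gaussian can only decrease that peak value (for any symmetric unimodal $h$ and Gaussian $\phi$, $(h*\phi)(0)=\int h(w)\phi(w)\,dw\le h(0)$). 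This yields $g$ non-increasing, hence $F(1)\ge c$, hence a root $\alpha^*\in(0,1]$.

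Finally I would close the loop: $\alpha^*\beta^*_S$ is a stationary point of a strictly convex, coercive objective, so it is the unique constrained minimizer, giving the claimed form with $\alpha=\alpha^*\in(0,1]$ (the degenerate case $\beta^*_S=\mathbf 0$ is trivial, with minimizer $\mathbf 0$, and uniqueness of the minimizer forces the root $\alpha^*$ to be unique as well). I would emphasize that Gaussianity enters in exactly two places—independence of $z_S$ and $z_{S^c}$, and the Stein identity—which is why the clean reduction holds for Gaussian $z$ and, as the paper notes, only approximately for general dimension-wise independent symmetric distributions.
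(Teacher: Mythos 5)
Your proof is correct, and it takes a genuinely different technical route from the paper's, despite sharing the same skeleton (restrict to the ray $\alpha\beta^*_S$, show the gradient there has no off-ray component, and locate $\alpha$ in $(0,1]$). The paper folds $\beta^{*T}_{S^c}z_{S^c}$ into a symmetric noise term with convolved CDF $F$, handles off-ray directions via the isotropic-Gaussian identity $\E[\delta^Tz\mid\beta^{*T}z]=0$ for $\delta\perp\beta^*$, and pins down $\alpha$ through a pointwise sign analysis of $\sigma(\beta^{*T}z)-\tfrac12\left(\sigma(\beta^{*T}z-\tau)+\sigma(\beta^{*T}z+\tau)\right)$ at the endpoints $\alpha=1$ and $\alpha=0$, invoking convexity in between. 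You instead accomplish both steps at once with Stein's lemma: the gradient at $\alpha\beta^*_S$ equals $(F(\alpha)-c)\,\beta^*_S$ with $F(\alpha)=\alpha\,\E[\sigma'(\alpha u)]$, $c=\E[\sigma'(u+v)]$, so off-ray vanishing is automatic and everything collapses to the scalar equation $F(\alpha)=c$. Your inequality $F(1)=g(\twonorm{\beta^*_S})\geq g(\twonorm{\beta^*})=c$, proved via the peak-decreasing property of Gaussian smoothing, is exactly the same inequality the paper establishes with its midpoint-symmetrization trick---just proved in density space after integration by parts rather than pointwise in $(z,\tau)$. Your route buys a cleaner, more quantitative characterization of $\alpha$ as the root of an explicit scalar equation, plus an honest existence/uniqueness argument (strict convexity and coercivity, which the paper leaves implicit); the paper's route avoids needing unimodality of the convolution. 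On that point, the one step you assert rather than prove---that $h=\phi_{\tau_1}*\psi$ is symmetric unimodal with peak at the origin---should be backed by Wintner's theorem (convolution of symmetric unimodal densities on $\R$ is symmetric unimodal) or by noting that the Gaussian and the logistic density $\psi=\sigma'$ are log-concave and log-concavity is closed under convolution. Finally, your closing observation is consistent with the paper's own discussion: Gaussianity enters only through coordinate independence and a Gaussian-specific identity (Stein's lemma for you, the linear conditional expectation property for the paper), which is why the result holds only approximately for general coordinate-wise independent symmetric $z$.
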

\begin{proof}
The logistic regression model can be rewritten:
\begin{align*}
    y \mid z &= \mathbf{1}\{\beta^{*T}z + \epsilon > 0\},
\end{align*}
where $\epsilon$ is drawn from a standard logistic distribution.  If we are restricted to not use $z_{S^c}$, we can see that these can be modeled simply as an additional noise term. Thus, our new model is
\begin{align*}
    y \mid z &= \mathbf{1}\{\beta_S^{*T} z_S + \epsilon + \tau > 0\},
\end{align*}
where $\tau:=\beta^{*T}_{S^c}z_{S^c}\sim p$ is symmetric zero-mean noise, independent of $z_S$. Because we are now modeling the other dimensions as noise, moving forward we will drop the $S$ subscript, writing simply $\beta^{*T}z$. Define $F, f$ as the CDF and PDF of the distribution of $\epsilon' := \epsilon + \tau$. Then the MLE population objective can be written
\begin{align*}
    \calL(\beta) &= -\E_z [ \E_{\epsilon'\sim f(\epsilon')} [ \mathbf{1}\{\beta^{*T} z + \epsilon' > 0\} \log \sigma(\beta^T z) + \mathbf{1}\{-(\beta^{*T} z + \epsilon') > 0\} \log \sigma(-\beta^T z)]].
\end{align*}
For a fixed $z$, note that $\E_{\epsilon'}[\mathbf{1}\{\beta^{*T} z + \epsilon' > 0\}] = \mathbb{P}(\epsilon' \geq -\beta^{*T} z) = F(\beta^{*T} z)$ (since $f$ is symmetric), and therefore taking the derivative of this objective we get
\begin{align*}
    \nabla_\beta \calL(\beta) &= -\nabla_\beta \E_z [F(\beta^{*T} z) \log \sigma(\beta^T z) + F(-\beta^{*T} z) \log \sigma(-\beta^T z)] \\
    &= \E_z[ z \cdot \bigl( F(-\beta^{*T} z) \sigma(\beta^T z) - F(\beta^{*T} z) \sigma(-\beta^T z) \bigr)]
\end{align*}
Because $f$ is symmetric, we have $F(z) = 1-F(-z)$, giving
\begin{align*}
    \nabla_\beta \calL(\beta) &= \E_z\left[ z \cdot \left( \sigma(\beta^T z) - F(\beta^{*T} z) \right) \right].
\end{align*}
Consider the directional derivative of the loss in the direction $\beta^*$, at the point $\beta=\alpha\beta^*$:
\begin{align*}
    \beta^{*T}\nabla_{\beta}\calL(\alpha\beta^*) &= \E_z\left[ \beta^{*T} z \cdot \left( \sigma(\alpha \beta^{*T} z) - F(\beta^{*T} z) \right) \right].
\end{align*}
Because $F$ is the CDF of a logistic distribution convolved with $p$, by Fubini's theorem we have
\begin{align*}
    F(z) &= \int_{-\infty}^{z} f(z)\; dz \\
    &= \int_{-\infty}^{z} \left[\int_{-\infty}^{\infty} p(\tau) \sigma'(z-\tau)\; d\tau \right] dz \\
    &= \int_{-\infty}^{\infty} p(\tau) \left[\int_{-\infty}^{z-\tau}  \sigma'(\omega)\; d\omega\right] d\tau \\
    &= \int_{-\infty}^{\infty} p(\tau) \sigma(z-\tau) \\
    &= \E_{\tau\sim p}[\sigma(z-\tau)].
\end{align*}
Further, because $p$ is symmetric, this is equal to $\frac{1}{2} \left( \E_{\tau\sim p}[\sigma(z-\tau)] + \E_{\tau\sim p}[\sigma(z+\tau)] \right)$. Thus, we have
\begin{align*}
    \beta^{*T}\nabla_{\beta}\calL(\alpha\beta^*) &= \E_z\left[ \beta^{*T} z \cdot \E_{\tau\sim p}\left[\sigma(\alpha\beta^{*T} z) - \frac{1}{2} \left( \sigma(\beta^{*T} z - \tau) + \sigma(\beta^{*T} z + \tau) \right) \right] \right].
\end{align*}
We first consider the case where $\alpha = 1$. When $\beta^{*T}z > 0$, the term inside the expectation is positive for all $\tau \neq 0$, and vice-versa when $\beta^{*T}z < 0$ (this can be verified by writing the difference as a function of $\beta^{*T}z, \tau$, and observing that all the terms are non-negative except for a factor of $e^{\beta^{*T}z}-1$). It follows that at the point $\beta = \beta^*$, $-\beta^*$ is a descent direction. Furthermore, since the objective is continuous in $\alpha$, we can follow this direction by reducing $\alpha$ (that is, moving in the direction $-\beta^*$) until the directional derivative vanishes.

Next, consider $\alpha = 0$. Then the directional derivative is
\begin{align*}
    \beta^{*T}\nabla_{\beta}\calL(0) &= \frac{1}{2} \E_z\left[ \beta^{*T} z \cdot \E_{\tau\sim p}\left[1 - (\sigma(\beta^{*T} z - \tau) + \sigma(\beta^{*T} z + \tau)) \right] \right].
\end{align*}
Here, when $\beta^{*T}z > 0$ the inner term is negative, and vice-versa for $\beta^{*T}z < 0$, implying that the directional derivative is now negative. Because the objective is convex, it follows that the optimal choice for $\alpha$ lies in $(0, 1]$, being equal to 1 when $\tau=0$ almost surely.

It remains to show that the optimal vector has no other component orthogonal to $\beta^*$---in other words, that the solution is precisely $\alpha\beta^*$.
For isotropic Gaussian $z$, we have for any $\delta$ perpendicular to $\beta^*$ that $\E[\delta^Tz | \beta^{*T}z] = 0$. Therefore, the gradient in the direction $\delta$ is
\begin{align*}
    \E_z[\delta^Tz \cdot (\sigma(\alpha\beta^{*T}z) - F(\beta^{*T}z))] &= \E_{\beta^{*T}z}[\E_{\delta^Tz\mid \beta^{*T}z}[\delta^Tz] (\sigma(\alpha\beta^{*T}z) - F(\beta^{*T}z))] \\
    &= 0.
\end{align*}
Since $\beta^*$ and all orthogonal directions form a complete basis, it follows that $\nabla\calL(\alpha\beta^*) = 0$ and therefore that $\alpha\beta^*$ is the optimal solution. \qedhere
\end{proof}

Though we prove this lemma only for Gaussian $z$, we found empirically that the result approximately holds whenever $z$ is dimension-wise independent and symmetric about the origin. We believe this is a consequence of the Central Limit Theorem: our proof relies on the conditional expectation of inner products with $z$ which converge to Gaussian in distribution as the dimensionality of $z$ grows.
 
We observe that \citet{li1989regression} prove a much simpler result under a general ``linear conditional expectation'' condition which is similar to the property we exploit regarding zero-mean conditional expectation of orthogonal inner products with isotropic Gaussians. Their result is more general, but it allows for any value of $\alpha$, including negative. In this case, we would actually be recovering the \emph{opposite} effects of the ground truth, which is clearly insufficient for test-time prediction. \citet{heagerty2000marginalized} give an analytical closed-form for the solution under a probit model with Gaussian noise; since this is not a logistic model, the Gaussian noise represents significantly less model misspecification, which explains why the exact closed-form is recoverable.

\subsection{Proof of \cref{thm:closed-form}}
\closedform*
\begin{proof}
Observe that under the constraint, regressing on the centered observations is equivalent to regressing on the non-centered observations (since the mean must have no effect on the output), so the solutions to these two objectives must be the same and have the same minimizers. We therefore consider the solution to the \dare objective but on non-centered observations.

It is immediate that the unconstrained solution to the \dare population objective on non-centered data is $\beta^*$ for both linear and logistic regression. For linear regression, we observe that because the adjusted covariates in each environment have identity covariance, the excess training risk of a predictor $\beta$ is exactly $\twonormsq{\beta - \beta^*}$. Therefore, the solution can be rewritten
\begin{alignat*}{3}
    &\min_{\beta} \quad&&\twonormsq{\beta - \beta^*} \\
    &\;\text{s.t.} \quad&& \beta^T \minvsqrt{\Sigma_e} \mu_e = 0. \;\;\forall e\in\calE.
\end{alignat*}
Recalling that $\minvsqrt{\Sigma_e} \mu_e = \envb$, the constraint can be written in matrix form as $\Cmat^T \beta = \mathbf{0}$, and thus we see that the solution is the $\ell_2$-norm projection of $\beta^*$ onto the nullspace of $\Cmat^T$ (i.e., the intersection of the nullspaces of the $\envb$). By definition, this is given by $(I-\Cmat \Cmat^\dagger)\beta^* = \bproject\beta^*$.

To derive the closed-form for logistic regression, write the spectral decomposition $\Cmat = UDV^T$, and consider regressing on $U^T\noise$ instead of $\noise$. As the predictor only affects the objective through its linear projection, the solution to this objective will be $U^T$ times the solution to the original objective (that is, for all vectors $v$, $(U^T\beta)^TU^Tv = \beta^T v$). We will denote parameters for the rotated objective with a tilde, e.g. $\tilde v := U^Tv$.
 
The constraint in \cref{eq:objective-logistic} is equivalent to requiring that the mean projection is a constant vector $c\mathbf{1}$ and, with the inclusion of a bias term, we can WLOG consider $c=0$. Thus, the constraint can be written $\tilde \Cmat^T \tilde \beta = VD \tilde \beta = \mathbf{0} \iff D \tilde \beta = \mathbf{0}$. We can therefore see that this constraint is the same as requiring that the dimensions of $\tilde\beta$ corresponding to the non-zero dimensions of $D$ are 0.

Noting that $U^T\noise\sim\calN(0, I)$, we now apply \cref{lemma:logistic-with-noise} to see that the solution will be $\tilde\beta = \alpha (I-DD^\dagger) \tilde\beta^*$ for some $\alpha \in (0,1]$. Finally, as argued above we can recover the solution to the original objective by rotating back, giving the solution $\beta = U\tilde\beta = \alpha U (I-DD^\dagger) U^T \beta^* = \alpha \bproject \beta^*$.
\end{proof}

\subsection{Proof of \cref{thm:minimax}}
\begin{theorem}[\cref{thm:minimax}, restated]
For any $\rho \geq 0$, denote the set of possible test environments $\calA_\rho$ which contains all parameters $(A_\newenv, b_\newenv)$ subject to Assumptions~\ref{assumption:block-diag} and \ref{assumption:trivial-risk} and a bound on the mean: $\twonorm{b_\newenv} \leq \rho$. For logistic or linear regression, let $\hat\beta$ be the minimizer of the corresponding \dare objective (\ref{eq:objective-logistic}) or (\ref{eq:objective-linear}). Then,
\begin{align*}
    \sup_{(A_\newenv,b_\newenv) \in \calA_\rho}\!\!\!\!\calR_{\newenv}(\hat\beta) = (1 + \rho^2)(\twonormsq{\beta^*} + 2\offdiagconst \twonorm{\beta_\bproject^*} \twonorm{\beta_{I-\bproject}^*} ).
\end{align*}
Furthermore, the \dare solution is \emph{minimax}:
\begin{align*}
    \hat\beta \in \argmin_{\beta\in\R^d} \sup_{(A_\newenv,b_\newenv) \in \calA_\rho} \calR_{\newenv}(\beta).
\end{align*}
\end{theorem}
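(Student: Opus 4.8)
The plan is to reduce the adversarial risk to a quadratic form in a single \emph{error vector}, read off its supremum using the two restrictions on the adversary, and then establish minimaxity by an elimination argument over candidate predictors.

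First I would rewrite the prediction in latent coordinates. Since $x = \enva\noise$ and, by the definition of $\Delta$, the composed adjustment $\minvsqrt{\sigest}\msqrt{\newsig}$ equals $I+\Delta$ (up to the convention fixing $\msqrt{\cdot}$), the predicted output becomes $\beta^{T}\minvsqrt{\sigest}x = \bigl((I+\Delta)\beta\bigr)^{T}\noise$, so the pointwise error is $v^{T}\noise$ with $v := (I+\Delta)\beta - \beta^{*}$. Using $\noise = \noise + \envb$ together with $\E[\noise]=\mathbf{0}$ for the residual and unit residual covariance, the excess risk separates exactly into the residual and mean contributions flagged in the sketch:
\begin{align*}
\calR_{\newenv}(\beta) = \twonormsq{v} + (v^{T}\envb)^2 .
\end{align*}
For fixed $v$ the mean term is maximized at $\envb = \rho\, v/\twonorm{v}$, giving the overall factor $(1+\rho^2)$, so the whole problem reduces to maximizing $\twonormsq{v}$ over the admissible $\enva$ (equivalently $\Delta$), with the one caveat that \cref{assumption:trivial-risk} couples $\Delta$ and $\envb$ and so must be checked to remain satisfiable at the maximizing $\envb$.

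Next I would compute the supremum at $\hat\beta = \bproject\beta^{*}$, where $v = \Delta\beta_{\bproject}^{*} - \beta_{I-\bproject}^{*}$. Splitting $\twonormsq{v}$ along the ranges of $\bproject$ and $I-\bproject$: the in-subspace magnitude $\twonormsq{\Delta\beta_{\bproject}^{*}}$ is capped by \cref{assumption:trivial-risk} (which in the tight limit permits it to reach $\twonormsq{\beta_{\bproject}^{*}}$), while the cross term is bounded by Cauchy--Schwarz and \cref{assumption:block-diag} as $-2\langle\Delta\beta_{\bproject}^{*},\beta_{I-\bproject}^{*}\rangle \le 2\offdiagconst\twonorm{\beta_{\bproject}^{*}}\twonorm{\beta_{I-\bproject}^{*}}$. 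Adding the constant $\twonormsq{\beta_{I-\bproject}^{*}}$ and reassembling $\twonormsq{\beta^{*}} = \twonormsq{\beta_{\bproject}^{*}} + \twonormsq{\beta_{I-\bproject}^{*}}$ recovers the stated value after the $(1+\rho^2)$ factor; the same derivation applies verbatim to the logistic solution $\alpha\bproject\beta^{*}$, since it only rescales the $\bproject$-component.

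For minimaxity I would follow the progressive-elimination route. The key structural observation is that $\calA_\rho$ constrains $\Delta$ \emph{only} through its action on $\beta_{\bproject}^{*}$; on every complementary direction $\Delta$ is free. Hence any candidate $\beta$ with a component along which $\Delta$ acts unconstrained (anything outside $\mathrm{span}(\beta_{\bproject}^{*})$, in particular any nonzero part in the range of $I-\bproject$) can be driven to unbounded worst-case risk by sending $\Delta$ to infinity in that direction, which eliminates it and confines the minimizer to the line $\{c\,\beta_{\bproject}^{*}\}$. On this line I would re-express the worst-case risk as a function of $c$ and show it is minimized over an interval of scales that contains both $c=1$ (the linear solution) and $c=\alpha$ (the logistic solution), placing $\hat\beta$ in the argmin. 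The main obstacle is exactly this lower-bound half: for each surviving competitor one must produce an explicit $(\enva,\envb)\in\calA_\rho$ certifying risk at least $\calR_{\newenv}(\hat\beta)$ while respecting \cref{assumption:block-diag} and \cref{assumption:trivial-risk} \emph{simultaneously}. Because \cref{assumption:trivial-risk} couples the adjustment error with the mean, the adversary cannot independently saturate the cross-subspace and mean terms, and reconciling this coupling is what necessitates the staged construction rather than a single closed-form maximization.
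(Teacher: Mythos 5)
Your proposal is correct and follows essentially the same route as the paper's proof: the same reduction to $\calR_{\newenv}(\beta)=(1+\rho^2)\sup\twonormsq{v}$ with $v=(\Delta+I)\beta-\beta^*$, the same use of \cref{assumption:trivial-risk} to cap the in-subspace term at $\twonormsq{\beta_\bproject^*}$ and of \cref{assumption:block-diag} plus Cauchy--Schwarz for the cross term, and the same blow-up eliminations confining minimax candidates to the line $\{c\,\beta_\bproject^*\}$ followed by a scale analysis covering both $c=1$ and the logistic $c=\alpha\in(0,1]$. The one inaccuracy is your claimed $\Delta$--$b_\newenv$ coupling in \cref{assumption:trivial-risk}: the paper's \cref{lemma:assumption-implication} shows that restriction is equivalent to $\twonormsq{\Delta\beta_\bproject^*}<\twonormsq{\beta_\bproject^*}$, which is independent of the mean, so the adversary maximizes the mean term separately (at $b_\newenv=\rho\, v/\twonorm{v}$) and the staged construction is needed only to eliminate competing predictors, not to reconcile any such coupling.
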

\begin{proof}
Recall that in an environment $e$, $\E_{e}[y\mid \obs] = \beta^{*T}\minvsqrt{\Sigma_e} \obs$. So, for an environment $\newenv$ and predictor $\hat\beta$, we have the following excess risk decomposition:
\begin{align*}
    \calR_{\newenv}(\hat\beta) &= \E_{\newenv}[(\hat\beta^T\minvsqrt{\sigest} \obs' - \beta^{*T}\minvsqrt{\newsig} \obs')^2]\\
    &= \overbrace{\E_{\newenv}[(\hat\beta^T\minvsqrt{\sigest} (\obs'- \newmu) - \beta^{*T}\minvsqrt{\newsig} (\obs'- \newmu))^2]}^{T_1} + \overbrace{\E_{\newenv}[(\hat\beta^T\minvsqrt{\sigest} \newmu - \beta^{*T}\minvsqrt{\newsig} \newmu)^2]}^{T_2}.
\end{align*}
Observe that term $T_1$ does not depend on the mean $b_\newenv$.

Term $T_2$ simplifies to
\begin{align*}
    \E_{\newenv}[(\hat\beta^T\minvsqrt{\sigest} \newmu - \beta^{*T}\minvsqrt{\newsig} \newmu)^2]
    &= \left((\overbrace{\msqrt{\newsig} \minvsqrt{\sigest} \hat\beta - \beta^*}^{v})^T b_\newenv\right)^2,
\end{align*}
and so we can write a supremum over $T_2$ as
\begin{align*}
    \sup_{\calA_\rho}\ T_2 &= \sup_{\calA_\rho}\;(v^T b_\newenv)^2 \\
    &= \rho^2 \sup_{\calA_\rho} \twonormsq{v}.
\end{align*}

Next, observe that $T_1$ simplifies to
\begin{align*}
    \hat\beta^T \minvsqrt{\sigest} \newsig  \minvsqrt{\sigest} \hat\beta + \twonormsq{\beta^*} - 2\hat\beta^T\minvsqrt{\sigest}\msqrt{\newsig}\beta^* &= \twonormsq{\msqrt{\newsig} \minvsqrt{\sigest}\hat\beta - \beta^*} \\
    &= \twonormsq{v}.
\end{align*}
So, returning to the full loss and recalling that $T_1$ is independent of $b_\newenv$, we have
\begin{align*}
     \sup_{\calA_\rho}\  \calR_{\newenv}(\hat\beta)
    &= \sup_{\calA_\rho}\ T_1 + T_2 \\
    &= (1+\rho^2) \sup_{\calA_\rho} \twonormsq{v}.
\end{align*}
Of course, the ideal would be for a given environment $\newenv$ to set $\hat\beta := \msqrt{\sigest} \minvsqrt{\newsig} \beta^* \implies v = 0$, but we have to choose a single $\hat\beta$ for all possible environments $\newenv$ parameterized by $(A_\newenv, b_\newenv)\in\calA_\rho$. We will show that the choice of $\hat\beta := \alpha \bproject \beta^*$ is minimax-optimal under this set for any $\alpha\in(0,1]$.

Leaving the supremum over adversary choices implicit, we can rewrite the squared norm of $v$ as
\begin{align*}
    v^Tv &= ((\Delta + I) \hat\beta - \beta^*)^T((\Delta + I) \hat\beta - \beta^*) \\
    &= \hat\beta^T\Delta^T\Delta\hat\beta + \twonormsq{\hat\beta - \beta^*} + 2\hat\beta^T\Delta^T(\hat\beta-\beta^*).
\end{align*}
By \cref{lemma:choose-deltas-directly}, we can define an environment by defining the block terms of $U_{\bproject} \Delta U_{\bproject}^T$ directly. Consider the choice of $\Delta_1 = \lambda \frac{\hat\beta_{I-\bproject}\hat\beta_{I-\bproject}^T}{\twonormsq{\hat\beta_{I-\bproject}}}, \Delta_2 = \Delta_{12} = \Delta_{21} = \mathbf{0}$. This choice is in $\calA_\rho$ for any $\lambda\in\R$ since it is block-diagonal and $\twonorm{\Delta_2} = 0$. Now we can write
\begin{align*}
    v^Tv &= \lambda^2 \twonormsq{\hat\beta_{I-\bproject}} + \twonormsq{\hat\beta-\beta^*} + 2\lambda \hat\beta_{I-\bproject}^T (\hat\beta_{I-\bproject} - \beta^*_{I-\bproject}) \\
    &\geq \lambda^2 \twonormsq{\hat\beta_{I-\bproject}} - 2\lambda\twonorm{\hat\beta_{I-\bproject}}\twonorm{\hat\beta-\beta^*}),
\end{align*}
via Cauchy-Schwarz and the triangle inequality. So, taking $\lambda\to\infty$ means that the minimax risk is unbounded unless $\hat\beta_{I-\bproject} = 0 \iff \hat\beta = \hat\beta_\bproject$. For the remainder of the proof we consider only this case.

With this restriction, we have
\begin{align*}
    \twonormsq{v} &= \twonormsq{(\Delta+I)\hat\beta_{\bproject} - \beta^*} \\
    &= \twonormsq{(\Delta+I)\hat\beta_{\bproject} - \beta^*_{\bproject} - \beta^*_{I-\bproject}} \\
    &= \twonormsq{(\Delta+I)\hat\beta_{\bproject} - \beta^*_{\bproject}} + \twonormsq{\beta^*_{I-\bproject}} - 2\beta_{I-\bproject}^{*T} \Delta \hat\beta_{\bproject}.
\end{align*}
\cref{assumption:block-diag} implies that
\begin{align*}
    |\beta^{*T}_{I-\bproject} \Delta \hat\beta_{\bproject}| &= | \beta^{*T} U_\bproject (I-S_\bproject) U_\bproject^T \Delta U_\bproject S_\bproject U_\bproject^T \hat\beta| \\
    &= \left| \beta^{*T} U_\bproject (I-S_\bproject)
    \begin{bmatrix}
        \Delta_1 & \Delta_{12} \\
        \Delta_{21} & \Delta_2
    \end{bmatrix} S_\bproject U_\bproject^T \hat\beta \right| \\
    &\leq \offdiagconst \twonorm{\beta^*_{\bproject}} \twonorm{\beta^*_{I-\bproject}},
\end{align*}
Consider the \dare solution $\hat\beta = \alpha \beta^*_{\bproject}$ for $\alpha\in(0,1]$. Then using the equivalent supremized set from \cref{assumption:trivial-risk} and \cref{lemma:assumption-implication}, the worst-case risk of this choice is
\begin{align}
    \label{eq:supremized-error}
    \sup_{\calA_\rho}\  \calR_{\newenv}(\hat\beta)
    &= (1+\rho^2) \sup_{\twonormsq{\Delta\beta_{\bproject}^*} < \twonormsq{\beta_{\bproject}^*}} (\twonormsq{(\alpha\Delta+(\alpha-1)I)\beta^*_{\bproject}} + \twonormsq{\beta^*_{I-\bproject}} + 2\offdiagconst \twonorm{\beta^*_{\bproject}} \twonorm{\beta^*_{I-\bproject}}) \\
    \nonumber
    &= (1+\rho^2) (\twonormsq{\beta^*_{\bproject}} + \twonormsq{\beta^*_{I-\bproject}} + 2\offdiagconst \twonorm{\beta^*_{\bproject}} \twonorm{\beta^*_{I-\bproject}}).
\end{align}
It remains to show that any other choice of $\hat\beta$ results in greater risk. Observe that the second two terms of (\ref{eq:supremized-error}) are constant with respect to $\Delta$, so we focus on the first term. That is, we show that any other choice results in $\sup_{\twonormsq{\Delta\beta_{\bproject}^*} < \twonormsq{\beta_{\bproject}^*}} \twonormsq{(\Delta+I) \hat\beta_{\bproject} - \beta_{\bproject}^*} > \twonormsq{\beta_{\bproject}^*}$ (except for $\hat\beta = \mathbf{0}$, which we address at the end).

For any choice of $\hat\beta_{\bproject}$, decompose the vector into its projection and rejection onto $\beta^*_{\bproject}$ as $\hat\beta_{\bproject} = \alpha \beta^*_{\bproject} + \delta$, with $\delta^T \beta^*_{\bproject} = 0$.
The adversary can choose $\Delta_2 = \lambda\delta\delta^T$, which lies in $\calA_\rho$ for any $\lambda$. Then taking $\lambda\to\infty$ causes risk to grow without bound---it follows that we must have $\delta = 0$.

Next, consider any choice $\alpha \not\in (0,1]$. If $\alpha > 2$ or $\alpha < 0$, choosing $\Delta_2 = 0 $ makes this term $(\alpha-1)^2 \twonormsq{\beta^*_{\bproject}} > \twonormsq{\beta^*_{\bproject}}$. If $2 \geq \alpha > 1$, choosing $\Delta_2 = \frac{1}{\alpha} I$ makes this term $\alpha^2 \twonormsq{\beta^*_{\bproject}} > \twonormsq{\beta^*_{\bproject}}$.

Finally, if $\alpha = 0$ then this term is equal to $\twonormsq{\beta_{\bproject}^*}$---however, this value is the \emph{supremum} of the adversarial risk of the \dare solution and it cannot actually be attained.
\end{proof}

\subsection{Proof of \cref{thm:envcomplexity}}
\begin{theorem}[\cref{thm:envcomplexity}, restated]
Fix test environment parameters $A_\newenv, b_\newenv$ and our guess $\sigest$. Suppose we minimize the \dare regression objective (\ref{eq:objective-linear}) on environments whose means $\envb$ are Gaussian vectors with covariance $\envbsigma$, with $\textrm{span}(\envbsigma) = \textrm{span}(I-\groundtruthpi)$. After seeing $E$ training domains:
\begin{enumerate}
    \item If $E \geq \rank(\envbsigma)$ then \dare recovers the minimax-optimal predictor almost surely: $\hat\beta = \beta_{\groundtruthpi}^*$.
    \item Otherwise, if $E \geq r(\envbsigma)$ then with probability $\geq 1-\delta$,
    \begin{align*}
    \mathcal{R}_\newenv(\hat\beta) &\leq \mathcal{R}_\newenv( \beta_{\groundtruthpi}^*) +
    \mathcal{O}\left(\frac{\twonorm{\envbsigma}}{\xi(\envbsigma)} \left(\sqrt{\frac{r(\envbsigma)}{E}} + \max\left\{ \sqrt{\frac{\log 1/\delta}{E}}, \frac{\log 1/\delta}{E} \right\} \right) \right),
    \end{align*}
    where $\mathcal{O}(\cdot)$ hides dependence on $\twonorm{\Delta}$.
\end{enumerate}
\end{theorem}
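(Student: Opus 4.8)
The plan is to reduce the excess risk entirely to the quality with which $\bproject$ (the projection defining the \dare solution) recovers the true invariant projection $\groundtruthpi$, and then to control that recovery by treating $\frac{1}{E}\Cmat\Cmat^T$ as a sample covariance of the Gaussian means and applying spectral perturbation. Concretely, I would first reuse the excess-risk decomposition from the proof of \cref{thm:minimax}: for fixed $A_\newenv, b_\newenv, \sigest$, writing $v(\beta) := (I+\Delta)^T\beta - \beta^*$, one has $\calR_\newenv(\beta) = \twonormsq{v(\beta)} + (v(\beta)^T b_\newenv)^2$. Since $\hat\beta = \bproject\beta^*$ and the target is $\beta_{\groundtruthpi}^* = \groundtruthpi\beta^*$, their difference is $(\bproject - \groundtruthpi)\beta^*$, so $v(\hat\beta) - v(\beta_{\groundtruthpi}^*) = (I+\Delta)^T(\bproject-\groundtruthpi)\beta^*$. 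Expanding the two quadratic terms and keeping the contribution linear in the perturbation gives $\calR_\newenv(\hat\beta) - \calR_\newenv(\beta_{\groundtruthpi}^*) = \mathcal{O}\big(\twonorm{(\bproject-\groundtruthpi)\beta^*}\big) \le \mathcal{O}\big(\twonorm{\bproject-\groundtruthpi}\big)$, where the hidden constants absorb $\twonorm{\Delta}$, $\twonorm{b_\newenv}$, and $\twonorm{\beta^*}$. This is exactly the reduction promised in the proof sketch and reduces the whole theorem to bounding $\twonorm{\bproject - \groundtruthpi}$.

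For item 1, note that $\bproject = I - \Cmat\Cmat^\dagger$ projects onto the nullspace of the observed means, while $\groundtruthpi$ projects onto the nullspace of $\envbsigma$, i.e. onto the orthogonal complement of $\text{span}(\envbsigma) = \text{span}(I-\groundtruthpi)$. Because the $E$ means are i.i.d. Gaussian with covariance $\envbsigma$, once $E \ge \rank(\envbsigma)$ they almost surely span $\text{span}(\envbsigma)$; then $\Cmat\Cmat^\dagger$ is exactly the projection onto that subspace, so $\bproject = \groundtruthpi$ and $\hat\beta = \beta_{\groundtruthpi}^*$ deterministically. For item 2 I would view $\hat\Sigma_b := \tfrac{1}{E}\Cmat\Cmat^T$ as the empirical covariance of these vectors and invoke an effective-rank covariance concentration bound (Koltchinskii--Lounici / sub-Gaussian matrix Bernstein), giving with probability $\ge 1-\delta$ that $\twonorm{\hat\Sigma_b - \envbsigma} \lesssim \twonorm{\envbsigma}\big(\sqrt{r(\envbsigma)/E} + \max\{\sqrt{\log(1/\delta)/E},\,\log(1/\delta)/E\}\big)$; the hypothesis $E \ge r(\envbsigma)$ is precisely what makes this nontrivial. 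Finally, since $\bproject$ and $\groundtruthpi$ are the bottom eigenspaces of $\hat\Sigma_b$ and $\envbsigma$, a $\sin\Theta$ (Davis--Kahan) bound with eigengap $\xi(\envbsigma)$ converts operator-norm proximity into subspace proximity, $\twonorm{\bproject-\groundtruthpi} \lesssim \twonorm{\hat\Sigma_b - \envbsigma}/\xi(\envbsigma)$, and combining the two displays yields the stated rate.

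The step I expect to be the main obstacle is the spectral perturbation in the rank-deficient regime $r(\envbsigma) \le E < \rank(\envbsigma)$: here $\hat\Sigma_b$ cannot even attain the rank of $\envbsigma$, so exact recovery is impossible and one must argue that the directions \dare fails to see carry negligible weight because they correspond to the small eigenvalues of $\envbsigma$. Making this precise requires (i) the effective-rank form of the concentration bound rather than the crude operator-norm one, so that the rate is governed by $r(\envbsigma)$ and not the ambient dimension, and (ii) a careful application of Davis--Kahan that isolates the correct gap $\xi(\envbsigma)$ and controls the one-sided leakage of the unobserved small-eigenvalue directions into the recovered subspace. I would also verify that the first-step reduction is genuinely of order $\twonorm{\bproject - \groundtruthpi}$ by tracking how the linear term interacts with the $\twonorm{\Delta}$ dependence hidden in the $\mathcal{O}(\cdot)$, since a degenerate linear term could otherwise force a weaker dependence.
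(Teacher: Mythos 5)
Your proposal follows essentially the same route as the paper's proof: the paper likewise reduces the excess-risk gap to $\mathcal{O}(\twonorm{\bproject - \groundtruthpi})$ with constants absorbing $\twonorm{\Delta + I}$, gets item 1 from the almost-sure spanning of $\textrm{span}(\envbsigma)$ once $E \geq \rank(\envbsigma)$, and for item 2 combines the Koltchinskii--Lounici effective-rank covariance concentration bound with a Davis--Kahan variant (Corollary 3 of Yu--Wang--Samworth) using the gap $\xi(\envbsigma)$. The rank-deficient subtlety you flag as the main obstacle is not treated separately in the paper either --- it simply invokes the Davis--Kahan corollary directly --- so your plan matches the published argument in both structure and level of detail.
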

\begin{proof}
\newcommand{\hatpie}{\ensuremath{\hat\Pi_{E}}}
Define $\hatpie$ as the projection onto the nullspace of $\Cmat^T$ after having seen $E$ environments. Item 1 is immediate, since as soon as we observe $E = \rank(\envbsigma)$ linearly independent $\envb$ we have that $\text{span}(\Cmat) = \text{span}(\envbsigma)$ and therefore $\hatpie = \groundtruthpi$ (this occurs almost surely for any absolutely continuous distribution). To prove item 2, we will write the solution learned after seeing $E$ environments as $\hat\beta_E := \hatpie \beta^*$. We can write the excess risk of the ground-truth minimax predictor $\beta_{\groundtruthpi}^*$ as
\begin{align*}
    \mathcal{R}_\newenv(\beta_{\groundtruthpi}^*) &= \E[(\beta_{\groundtruthpi}^{*T}\minvsqrt{\sigest} \msqrt{\newsig} \noise - \beta^{*T}\noise)^2] \\
    &= \E[(\beta^{*T} \groundtruthpi \minvsqrt{\sigest} \msqrt{\newsig} \noise - \beta^{*T}\noise)^2],
\end{align*}
and likewise we have
\begin{align*}
    \mathcal{R}_\newenv(\hat\beta_E) &= \E[(\hat\beta_E^T\minvsqrt{\sigest} \msqrt{\newsig} \noise - \beta^{*T}\noise)^2] \\
    &= \E[(\beta^{*T} \hatpie \minvsqrt{\sigest} \msqrt{\newsig} \noise - \beta^{*T}\noise)^2].
\end{align*}
Taking the difference,
\begin{align*}
    \mathcal{R}(\hat\beta_E) - \mathcal{R}(\beta_{\groundtruthpi}^*) &=  \E[(\beta^{*T} \hatpie \overbrace{\minvsqrt{\sigest} \msqrt{\newsig} \noise}^{:= v} - \beta^{*T}\noise)^2 - (\beta^{*T} \groundtruthpi \minvsqrt{\sigest} \msqrt{\newsig} \noise - \beta^{*T}\noise)^2] \\
    &\leq \twonorm{(\groundtruthpi - \hatpie) \E[vv^T] (2I - \groundtruthpi - \hatpie)} + 2\twonorm{(\groundtruthpi-\hatpie) \E[v \noise^T]} \\
    &\leq 2\left[\twonorm{(\groundtruthpi - \hatpie) \E[vv^T]} + \twonorm{(\groundtruthpi-\hatpie) \E[v \noise^T]}\right].
\end{align*}
Now we note that
\begin{align*}
    \E[vv^T] &= \E[(\minvsqrt{\sigest} \msqrt{\newsig} \noise) (\minvsqrt{\sigest} \msqrt{\newsig} \noise)^T] \\
    &= \minvsqrt{\sigest} \newsig \minvsqrt{\sigest}
\end{align*}
and
\begin{align*}
    \E[v\noise^T] &= \minvsqrt{\sigest} \msqrt{\newsig}.
\end{align*}
These matrices are bounded by $\twonormsq{\Delta+I}, \twonorm{\Delta+I}$ respectively and are constant with respect to the training environments we sample.
It follows that we can bound the risk difference as
\begin{align*}
    \mathcal{R}(\hat\beta_E) - \mathcal{R}(\beta_{\groundtruthpi}^*) &\lesssim \twonorm{\hatpie - \groundtruthpi},
\end{align*}
and all that remains is to bound the term $\twonorm{\hatpie - \groundtruthpi}$.


Combining Theorems 4 and 5 from \citet{koltchinskii2017concentration} with the triangle inequality, we have that when $r(\envbsigma) \leq E$, with probability $\geq 1-\delta$,
\begin{align*}
    \twonorm{\sigest - \envbsigma} &\lesssim \twonorm{\envbsigma} \left( \max\left\{ \sqrt{\frac{\log 1/\delta}{E}}, \frac{\log 1/\delta}{E} \right\} + \max\left\{ \sqrt{\frac{r(\envbsigma)}{E}}, \frac{r(\envbsigma)}{E} \right\} \right).
\end{align*}
Since $r(\envbsigma) \leq E$, the first term of the second max dominates.
Further, Corrolary 3 of \citet{yu2014useful}, a variant of the Davis-Kahan theorem, gives us
\begin{align*}
    \twonorm{\hatpie - \groundtruthpi} &\lesssim \frac{\twonorm{\sigest - \envbsigma}}{\xi(\envbsigma)}.
\end{align*}
Combining these facts gives the result.
\end{proof}

\subsection{Proof of \cref{thm:jituda}}
\begin{theorem}[\cref{thm:jituda}, fully stated]
Assume the data follows model (\ref{eq:linear-model}) with $\noise \sim \calN(0, I)$. Observing $n_S$ samples from a source distribution $S$ with covariance $\sigsource$, we use half to estimate $\sigestsource$ and the other half to learn parameters $\beta$ which minimize the unconstrained ($\lambda = 0$), \emph{uncentered} \dare regression objective. At test-time, given $n_T$ samples $\{x_i\}_{i=1}^{n_T}$ from the target distribution $T$ with mean and covariance $\mutarg, \sigtarg$, we predict $f(x;\beta) = \beta^T\minvsqrt{\sigesttarg} x$.

Define $m(\Sigma) := \frac{\lambda_{\max}(\Sigma)}{\lambda_{\min}^{3}(\Sigma)}$, and assume $n_S = \Omega(m(\sigsource) d^2)$, $n_T = \Omega(m(\sigtarg) d^2)$. Then with probability at least $1-3d^{-1}$, the excess squared error of our predictor on the new environment is bounded as
\begin{align*}
    \calR_T(f) &= \mathcal{O}\left( (1+ \twonormsq{\mutarg}) \left(\frac{\E[\eta^2]}{1- \mathcal{O}\left( \sqrt{\frac{d}{n_S}} \right)} \frac{d}{n_S} + d^2 \bigg( \frac{m(\sigsource)}{n_S} + \frac{m(\sigtarg)}{n_T} \bigg) \right)\right).
\end{align*}
\end{theorem}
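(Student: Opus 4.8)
The plan is to obtain an exact expression for the learned $\hat\beta$, decompose the excess target risk into an estimation error for $\hat\beta$ and an error incurred by whitening the target with $\sigesttarg$ in place of $\sigtarg$, and then control each piece with a sample-covariance concentration bound together with a perturbation bound for the matrix (inverse) square root. First I would exploit the sample split: since $\sigestsource$ is computed on a held-out half of the source data, it is independent of the $(x_i,y_i)$ used in the regression. Writing the whitened covariates $w_i := \minvsqrt{\sigestsource} x_i$ and using $y_i = \beta^{*T}\minvsqrt{\sigsource} x_i + \eta_i$ (valid since $\noise = \minvsqrt{\sigsource} x$ in the source under \cref{assumption:svd}), the normal equations for the uncentered OLS solution telescope to
\begin{align*}
\hat\beta = \msqrt{\sigestsource}\minvsqrt{\sigsource}\beta^* + \msqrt{\sigestsource}\, \hat{M}^{-1}\tfrac{1}{n_S}\textstyle\sum_i x_i\eta_i,
\end{align*}
where $\hat{M} := \tfrac{1}{n_S}\sum_i x_i x_i^T$ is the uncentered second moment of the regression half. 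This cleanly splits $\hat\beta-\beta^*$ into a \emph{bias} $(\msqrt{\sigestsource}\minvsqrt{\sigsource}-I)\beta^*$ from mis-estimating the source whitening and a \emph{noise} term that is conditionally linear in the independent $\{\eta_i\}$.

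Next I would decompose the risk. Writing the prediction gap as $\hat\beta^T(\minvsqrt{\sigesttarg}-\minvsqrt{\sigtarg})x - (\hat\beta-\beta^*)^T\minvsqrt{\sigtarg}x$ and applying $(a-b)^2\le 2a^2+2b^2$ gives $\calR_T(f)\le 2T_A + 2T_B$, with $T_A := \E[(\hat\beta^T(\minvsqrt{\sigesttarg}-\minvsqrt{\sigtarg})x)^2]$ isolating the target-whitening error and $T_B := \E[((\hat\beta-\beta^*)^T\minvsqrt{\sigtarg}x)^2]$ the estimation error of $\hat\beta$. Since $\E[xx^T]=\sigtarg+\mutarg\mutarg^T$ and $\minvsqrt{\sigtarg}\mutarg = b_T$, both terms factor as a squared operator norm times $(1+\twonormsq{b_T})\lesssim(1+\twonormsq{\mutarg})$, which produces the common $(1+\twonormsq{\mutarg})$ prefactor of the stated bound.

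Then I bound the three remaining scalar quantities. For the noise part of $T_B$, conditioning on the (independent) design and using $\E[\eta^2]$ together with a lower bound $\lambda_{\min}(\hat{M})\gtrsim(1-\mathcal{O}(\sqrt{d/n_S}))\lambda_{\min}(\sigsource)$ from a Gaussian matrix-concentration estimate yields the $\frac{\E[\eta^2]}{1-\mathcal{O}(\sqrt{d/n_S})}\frac{d}{n_S}$ term. For the two whitening-bias quantities $\twonorm{\msqrt{\sigestsource}\minvsqrt{\sigsource}-I}$ and $\twonorm{\minvsqrt{\sigesttarg}-\minvsqrt{\sigtarg}}$, I would combine operator-norm concentration $\twonorm{\hat\Sigma-\Sigma}\lesssim\lambda_{\max}(\Sigma)\sqrt{d/n}$ for Gaussian samples with a first-order perturbation bound for the map $\Sigma\mapsto\Sigma^{-1/2}$, whose local Lipschitz constant scales as $\lambda_{\min}^{-3/2}$. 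Squaring and carefully tracking the spectral and ambient-dimension factors is exactly what surfaces $m(\Sigma)=\lambda_{\max}(\Sigma)/\lambda_{\min}^3(\Sigma)$ and the $d^2$ contributions $d^2 m(\sigsource)/n_S$ and $d^2 m(\sigtarg)/n_T$; the sample-size assumptions $n_S=\Omega(m(\sigsource)d^2)$, $n_T=\Omega(m(\sigtarg)d^2)$ are precisely what keeps both sample covariances invertible and in the perturbative regime. A union bound over the three $\mathcal{O}(d^{-1})$ failure events (source- and target-covariance concentration and the Gram-matrix lower bound) gives the claimed probability $1-3d^{-1}$.

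The main obstacle is the spectral bookkeeping in this last step: obtaining the sharp $\lambda_{\min}^{-3}$ dependence requires the correct perturbation bound for the inverse square root rather than a crude Lipschitz estimate, and one must keep $\sigestsource$ independent of the regression sample so that the OLS noise term can be handled conditionally and does not interact with the bias term. The target-whitening error $T_A$ is comparatively routine once the perturbation bound is in hand, since $\hat\beta$ has already been shown to concentrate near $\beta^*$.
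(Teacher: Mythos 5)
Your proposal is correct and follows essentially the same route as the paper's proof: the same decomposition of the excess risk into a source-whitening bias $\twonorm{\msqrt{\sigestsource}\minvsqrt{\sigsource}-I}$, an OLS noise term of order $\frac{\E[\eta^2]}{1-\mathcal{O}(\sqrt{d/n_S})}\frac{d}{n_S}$, and a target-whitening error, each controlled by Gaussian covariance concentration plus a first-order (Daleckii--Krein-type) perturbation bound for $\Sigma\mapsto\minvsqrt{\Sigma}$ whose $\lambda_{\min}^{-3/2}$ constant is exactly what produces $m(\Sigma)$ and the $d^2$ factors, finished with a union bound over three $d^{-1}$-probability events. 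Your exact normal-equations identity for $\hat\beta$ is a marginally cleaner variant of the paper's corresponding step, which instead identifies the population solution $\msqrt{\sigestsource}\minvsqrt{\sigsource}\beta^*$ of the modified objective and invokes the classical sampling distribution of OLS around it; otherwise the two arguments coincide.
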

\begin{proof}
We begin by bounding the error of our solution $\twonorm{\beta-\beta^*}$. Observe that with our estimate $\sigestsource$ of the source environment moments, we are solving linear regression with targets $\beta^{*T}\noise_i + \eta_i$ and covariates $\hat x_i = \minvsqrt{\sigestsource} x_i = \minvsqrt{\sigestsource} \msqrt{\sigsource} \epsilon_i$.
Thus, if we had access to the true gradient of the modified least-squares objective (which is not the same as assuming $n_S\to\infty$, because in that case we would have $\sigestsource\to\sigsource$), the solution would be
\begin{align*}
    \E[\hat x \hat x^T]^{-1} \E[\hat x y] &= \left(\minvsqrt{\sigestsource} \msqrt{\sigsource} (I + \mutarg \mutarg^T) \msqrt{\sigsource} \minvsqrt{\sigestsource}\right)^{-1} \left( \minvsqrt{\sigestsource} \msqrt{\sigsource} (I + \mutarg \mutarg^T) \beta^*\right) \\
    &= \msqrt{\sigestsource} \minvsqrt{\sigsource} \beta^*.
\end{align*}
Moving forward we denote this solution to the modified objective as $\hat\beta := \msqrt{\sigestsource} \minvsqrt{\sigsource} \beta^*$, and further define $\Delta_S := \msqrt{\sigsource} \minvsqrt{\sigestsource}$, with $\Delta_T$ defined analogously. A classical result tells us that the OLS solution is distributed as $\calN\left(\hat\beta, \frac{\sigma_y^2}{n_S} M^{-1} \right)$, where $M$ is the modified covariance $\Delta_S^T \Delta_S$ and $\sigma_y^2 := \E[\eta^2]$. To show a rate of convergence to the OLS solution, we need to solve for the minimum eigenvalue $\lambda_{\min}(M)$---this will suffice since the above fact implies finite-sample convergence of the OLS estimator to the population solution. The well-known bound for sub-Gaussian random vectors tells us that with probability $\geq 1-\delta_1$,
\begin{align*}
    \twonorm{\beta - \hat\beta} &\lesssim \sqrt{\lambda_{\max}(M^{-1}) \sigma_y^2} \left( \sqrt{\frac{d}{n_S}} + \sqrt{\frac{\log 1/\delta_1}{n_S}} \right).
\end{align*}
and moving forward we condition on this event. Now let $\gamma_S := \sqrt{d/n_S}$, with $\gamma_T$ defined analogously. Since $M \succeq 0$, it follows that
\begin{align*}
    \lambda_{\max}(M^{-1}) &= \lambda_{\min}(\Delta_S^T \Delta_S)^{-1},
\end{align*}
and further,
\begin{align*}
    \lambda_{\min}(\Delta_S^T \Delta_S) &\geq 1- \twonorm{\Delta_S^T \Delta_S - I}.
\end{align*}
By \cref{lemma:sample-precision}, we have that with probability $\geq 1-d^{-1}$
\begin{align*}
    \twonorm{\Delta_S^T \Delta_S - I} &\lesssim \gamma_S.
\end{align*}
This implies that our solution's error can be bounded as
\begin{align*}
    \twonorm{\beta-\beta^*} &= \twonorm{(\beta - \hat\beta) + (\hat\beta - \beta^*)} \\
    &\leq \sqrt{\frac{\sigma_y^2}{1 - O(\gamma_S)}} \left( \sqrt{\frac{d}{n_S}} + \sqrt{\frac{\log 1/\delta_1}{n_S}} \right) + \twonorm{\msqrt{\sigestsource} \minvsqrt{\sigsource} - I} \twonorm{\beta^*}
\end{align*}
We assume $\twonorm{\beta^*} = 1$ so we can avoid carrying it throughout the rest of the proof.
Bounding the second term with \cref{lemma:delta-diff} gives
\begin{align*}
    \twonorm{\beta - \beta^*} &\lesssim \sqrt{\frac{\sigma_y^2}{1 - O(\gamma_S))}} \left( \sqrt{\frac{d}{n_S}} + \sqrt{\frac{\log 1/\delta_1}{n_S}} \right) + \gamma_S \sqrt{d\;m(\sigsource)}.
\end{align*}
On the target distribution, our excess risk with a predictor $\beta$ is
\begin{align*}
    \calR_{\newenv}(\beta) &= \E[(\beta^T \minvsqrt{\sigesttarg} x - \beta^{*T} \minvsqrt{\sigtarg} x)^2] \\
    &= \E\left[ \left((\overbrace{\msqrt{\sigtarg} \minvsqrt{\sigesttarg}}^{\Delta_T} \beta - \beta^*)^T\noise \right)^2 \right] \\
    &= (\Delta_T \beta - \beta^*)^T (I + \mutarg \mutarg^T) (\Delta_T \beta - \beta^*) \\
    &\leq (1+\twonormsq{\mutarg}) \twonormsq{\Delta_T \beta - \beta^*}
\end{align*}
Now, we have
\begin{align*}
    \twonorm{\Delta_T \beta - \beta^*} &= \twonorm{(\Delta_T \beta - \Delta_T \beta^*) + (\Delta_T \beta^* - \beta^*)} \\
    &\leq \twonorm{\Delta_T (\beta - \beta^*)} + \twonorm{(\Delta_T-I) \beta^*} \\
    &\leq (1+\twonorm{\Delta_T-I}) \twonorm{\beta - \beta^*} + \twonorm{\Delta_T-I}.
\end{align*}
Once again invoking \cref{lemma:delta-diff}, with probability $\geq 1-d^{-1}$,
\begin{align*}
    \twonorm{\Delta_T - I} &\lesssim \gamma_T \sqrt{d\; m(\sigtarg)},
\end{align*}
and using this plus the previous result, the triangle inequality, and $(a+b)^2 \leq 2(a^2+b^2)$ gives
\begin{align*}
    \twonormsq{\Delta_T \beta - \beta^*} &\lesssim (1+\twonorm{\Delta_T-I})^2 \twonormsq{\beta - \beta^*} + \twonormsq{\Delta_T-I} \\
    &\lesssim (1+\gamma_T \sqrt{d\; m(\sigtarg)})^2 \twonormsq{\beta - \beta^*} + (\twonorm{\msqrt{\Sigma}} \twonorm{\minv{\Sigma}}^{3/2} \sqrt{d}\gamma_T)^2 \\
    &\lesssim \twonormsq{\beta - \beta^*} + \gamma_T^2 d\; m(\sigtarg),
\end{align*}
where the lower bound on $n_T$ enforces $\gamma_T \sqrt{d\; m(\sigtarg)} \leq 1$. It follows that the excess risk can be bounded as
\begin{align*}
    \calR_{\newenv} &\lesssim (1+ \twonormsq{\mutarg}) \twonormsq{\Delta_T \beta - \beta^*} \\
    &\lesssim (1+ \twonormsq{\mutarg}) \left( \twonormsq{\beta - \beta^*} + \gamma_T^2 d\; m(\sigtarg) \right).
\end{align*}
Letting $\delta_1 = 1/d$, combining all of the above via union bound, and eliminating lower-order terms, we get
\begin{align*}
    \calR_{\newenv}(\beta) &\lesssim (1+ \twonormsq{\mutarg}) \left( \frac{\sigma_y^2}{1-\mathcal{O}(\gamma_S)} \frac{d}{n_S} + \gamma_S^2 d\; m(\sigsource) + \gamma_T^2 d\;m(\sigtarg) \right) \\
    &= (1+ \twonormsq{\mutarg}) \left(\frac{\sigma_y^2}{1- \mathcal{O}\left( \sqrt{\frac{d}{n_S}} \right)} \frac{d}{n_S} + d^2 \bigg( \frac{m(\sigsource)}{n_S} + \frac{m(\sigtarg)}{n_T} \bigg) \right)
\end{align*}
with probability $\geq 1-3d^{-1}$.
\end{proof}

\section{Technical Lemmas}
\begin{lemma}
\label{lemma:sample-precision}
Suppose we observe $n\in\Omega(d + \log 1/\delta)$ samples $X\sim\calN(\mu, \Sigma)$ with $\Sigma\succeq 0$ and estimate the inverse covariance matrix $\minv{\Sigma}$ with the inverse of the sample covariance matrix $\minv{\sigest}$. Then with probability $\geq 1-\delta$, it holds that
\begin{align*}
    \twonorm{\minvsqrt{\sigest} \Sigma \minvsqrt{\sigest} - I} &\lesssim \sqrt{\frac{d + \log 1/\delta}{n}}.
\end{align*}
\end{lemma}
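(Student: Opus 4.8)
The plan is to reduce the claim to the standard concentration of a \emph{whitened} sample covariance and then transfer that bound through a reciprocal-eigenvalue identity. First I would pass to whitened coordinates: set $Z_i := \minvsqrt{\Sigma}(X_i - \mu) \sim \calN(0, I)$ and define $W := \minvsqrt{\Sigma}\,\sigest\,\minvsqrt{\Sigma}$. Since centering the $X_i$ by their sample mean $\hat\mu$ corresponds to centering the $Z_i$ by $\bar Z := \minvsqrt{\Sigma}(\hat\mu - \mu)$, the matrix $W$ is exactly the sample covariance of the standard-Gaussian vectors $Z_i$ (the rank-one mean correction has size $\twonorm{\bar Z}^2 \lesssim d/n$ and is absorbed into the same rate). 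The well-known sub-Gaussian covariance bound --- the same one invoked in the proof of \cref{thm:jituda} --- then gives, with probability $\geq 1-\delta$,
\[
    \twonorm{W - I} \;\lesssim\; \sqrt{\frac{d + \log 1/\delta}{n}} \;=:\; \epsilon ,
\]
and the hypothesis $n \in \Omega(d + \log 1/\delta)$ lets me take the hidden constant large enough that $\epsilon \leq 1/2$. I condition on this event for the remainder.

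The key step is a purely algebraic observation: the target matrix $M := \minvsqrt{\sigest}\,\Sigma\,\minvsqrt{\sigest}$ has eigenvalues equal to the \emph{reciprocals} of those of $W$. Indeed, $M$ is symmetric positive definite (a congruence of $\Sigma \succ 0$), it is similar to $\minv{\sigest}\Sigma$ (conjugate by $\minvsqrt{\sigest}$), and $W$ is similar to $\minv{\Sigma}\,\sigest = (\minv{\sigest}\Sigma)^{-1}$ (conjugate by $\msqrt{\Sigma}$). Hence, writing the eigenvalues of $W$ as $\lambda_1, \dots, \lambda_d$, the eigenvalues of $M$ are exactly $\lambda_1^{-1}, \dots, \lambda_d^{-1}$. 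Invertibility of $\sigest$, needed throughout, holds almost surely once $n \geq d$.

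Finally I would translate the spectral bound through this reciprocal map. The bound $\twonorm{W - I} \leq \epsilon$ places every $\lambda_i$ in $[1-\epsilon,\, 1+\epsilon]$, so every eigenvalue of $M$ lies in $[(1+\epsilon)^{-1},\, (1-\epsilon)^{-1}]$ and, since $M - I$ is symmetric,
\[
    \twonorm{M - I} \;=\; \max_{i} \left| \frac{1}{\lambda_i} - 1 \right| \;=\; \max_i \frac{|1 - \lambda_i|}{\lambda_i} \;\leq\; \frac{\epsilon}{1-\epsilon} \;\leq\; 2\epsilon \;\lesssim\; \sqrt{\frac{d + \log 1/\delta}{n}},
\]
using $\epsilon \leq 1/2$ in the penultimate step. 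This is the claimed bound.

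I do not expect a serious obstacle, since the concentration inequality is off-the-shelf; the only genuine content is the passage from the ``forward'' whitening $\minvsqrt{\Sigma}\,\sigest\,\minvsqrt{\Sigma}$ that the standard result controls to the ``reverse'' whitening $\minvsqrt{\sigest}\,\Sigma\,\minvsqrt{\sigest}$ appearing in the statement, which the reciprocal-eigenvalue identity handles cleanly. The two minor points worth verifying with care are that the sample-mean centering contributes only a lower-order $d/n$ term (so the rate is not degraded) and that $\epsilon \leq 1/2$ is enforceable from the sample-size hypothesis (so that $1/(1-\epsilon)$ remains bounded).
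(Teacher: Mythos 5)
Your proposal is correct, and it takes a genuinely different route from the paper's. The paper opens with the same spectral observation you exploit---that $\minvsqrt{\sigest}\Sigma\minvsqrt{\sigest}$ shares its spectrum with $\msqrt{\Sigma}\minv{\sigest}\msqrt{\Sigma}$---but then rewrites the latter as $\msqrt{\Sigma}(\minv{\sigest}-\minv{\Sigma})\msqrt{\Sigma}$ and finishes by citing an off-the-shelf precision-matrix estimation bound (Theorem 10 of \citet{kereta2021estimating} with $A=B=\msqrt{\Sigma}$) as a black box. You instead reduce to the forward-whitened sample covariance $W=\minvsqrt{\Sigma}\sigest\minvsqrt{\Sigma}$, control $\twonorm{W-I}$ with the standard sub-Gaussian covariance concentration, and transfer the bound to $M=\minvsqrt{\sigest}\Sigma\minvsqrt{\sigest}$ via the reciprocal-eigenvalue identity (your similarity chain $M\sim\minv{\sigest}\Sigma=(\minv{\Sigma}\sigest)^{-1}\sim W^{-1}$ is valid, and since $M-I$ is symmetric the spectral-mapping step $\max_i|\lambda_i^{-1}-1|\leq\epsilon/(1-\epsilon)\leq 2\epsilon$ is exactly right). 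What your route buys is self-containedness: it needs no precision-matrix machinery, makes explicit where the sample-size hypothesis enters (forcing $\epsilon\leq 1/2$ so that inversion does not degrade the rate), and explicitly disposes of the rank-one sample-mean centering correction of order $d/n$, which the paper's citation absorbs silently. What the paper's route buys is brevity, at the cost of outsourcing precisely the inversion step you carry out by hand. One small caveat shared by both arguments: the lemma allows $\Sigma\succeq 0$ under the paper's pseudoinverse convention, while you (and implicitly the paper) work with $\Sigma\succ 0$---in the singular case the argument goes through on the support of $\Sigma$, with $I$ replaced by the orthogonal projection onto that support and $d$ by $\rank(\Sigma)$, and your ``$\sigest$ invertible a.s.\ once $n\geq d$'' should likewise be read on the support.
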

\begin{proof}
As $\minvsqrt{\sigest} \Sigma \minvsqrt{\sigest}$ and $\msqrt{\Sigma} \minv{\sigest} \msqrt{\Sigma}$ have the same spectrum, it suffices to bound the latter. Observe that
\begin{align*}
    \twonorm{\msqrt{\Sigma} \minv{\sigest} \msqrt{\Sigma} - I} &= \twonorm{\msqrt{\Sigma} (\minv{\sigest} - \minv{\Sigma}) \msqrt{\Sigma}}.
\end{align*}
Now applying Theorem 10 of \citet{kereta2021estimating} with $A = B = \msqrt{\Sigma}$ yields the result.
\end{proof}

\begin{lemma}
\label{lemma:delta-diff}
Assume the conditions of \cref{lemma:sample-precision}. Then under the same event as \cref{lemma:sample-precision}, it holds that
\begin{align*}
    \twonorm{\msqrt{\Sigma}\minvsqrt{\sigest}-I} &\lesssim \twonorm{\msqrt{\Sigma}} \left( \twonorm{\minv{\Sigma}}^{3/2} \sqrt{d}\gamma + \twonormsq{\minv{\Sigma}} \gamma^2 \right),
\end{align*}
where $\gamma = \sqrt{\frac{d + \log 1/\delta}{n}}$. In particular, if $\delta = d^{-1}$, then
\begin{align*}
    \twonorm{\msqrt{\Sigma}\minvsqrt{\sigest}-I} &\lesssim \sqrt{\twonorm{\Sigma} \twonorm{\minv{\Sigma}}^{3} \frac{d^2}{n}}
\end{align*}
\end{lemma}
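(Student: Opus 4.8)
The plan is to reduce the claim to a perturbation bound for the inverse matrix square root, drawing \emph{only} on the event already secured by \cref{lemma:sample-precision} so that no fresh randomness is introduced. The organizing identity is the factorization
\[
    \msqrt{\Sigma}\minvsqrt{\sigest}-I \;=\; \msqrt{\Sigma}\bigl(\minvsqrt{\sigest}-\minvsqrt{\Sigma}\bigr),
\]
which gives $\twonorm{\msqrt{\Sigma}\minvsqrt{\sigest}-I}\le \twonorm{\msqrt{\Sigma}}\,\twonorm{\minvsqrt{\sigest}-\minvsqrt{\Sigma}}$ and accounts for the prefactor $\twonorm{\msqrt{\Sigma}}$ in the statement. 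It therefore suffices to control the inverse-square-root perturbation $\twonorm{\minvsqrt{\sigest}-\minvsqrt{\Sigma}}$; everything in the bound except $\twonorm{\msqrt{\Sigma}}$ should come from this term.

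First I would extract the two facts I need from the event of \cref{lemma:sample-precision}. Conjugating $\twonorm{\minvsqrt{\sigest}\Sigma\minvsqrt{\sigest}-I}\lesssim\gamma$ by $\msqrt{\sigest}$ yields the sandwich $(1-\mathcal{O}(\gamma))\sigest\preceq\Sigma\preceq(1+\mathcal{O}(\gamma))\sigest$. This simultaneously gives $\lambda_{\min}(\sigest)\gtrsim\lambda_{\min}(\Sigma)$ (so $\minvsqrt{\sigest}$ is well defined with $\twonorm{\minvsqrt{\sigest}}\lesssim\twonorm{\minv{\Sigma}}^{1/2}$) and the norm control $\twonorm{\Sigma-\sigest}=\twonorm{\msqrt{\sigest}(\minvsqrt{\sigest}\Sigma\minvsqrt{\sigest}-I)\msqrt{\sigest}}\lesssim\twonorm{\sigest}\gamma\lesssim\twonorm{\Sigma}\gamma$, together with its scale-free whitened form $R:=\minvsqrt{\Sigma}\,\sigest\,\minvsqrt{\Sigma}-I$ with $\twonorm{R}\lesssim\gamma$, which is the natural small quantity to expand in.

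The technical core is then a second-order expansion of $A\mapsto A^{-1/2}$ around $\Sigma$. I would route the square-root difference through the Sylvester-equation representation: with $X=\msqrt{\sigest}$, $Y=\msqrt{\Sigma}$, the identity $X(X-Y)+(X-Y)Y=\sigest-\Sigma$ and the integral solution $X-Y=\int_0^\infty e^{-tX}(\sigest-\Sigma)e^{-tY}\,dt$ turn the difference of square roots into a genuinely Lipschitz function of $\sigest-\Sigma$, with modulus $1/(\lambda_{\min}(X)+\lambda_{\min}(Y))\lesssim\twonorm{\minv{\Sigma}}^{1/2}$; combining with $\minvsqrt{\sigest}-\minvsqrt{\Sigma}=\minvsqrt{\sigest}(\msqrt{\Sigma}-\msqrt{\sigest})\minvsqrt{\Sigma}$ and the spectral lower bound above then feeds back the extra powers of $\twonorm{\minv{\Sigma}}$. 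Expanding in $R$ rather than in $\sigest-\Sigma$ separates a first-order term, linear in $\gamma$ and carrying $\twonorm{\minv{\Sigma}}^{3/2}$, from a second-order remainder, quadratic in $\gamma$ and carrying $\twonormsq{\minv{\Sigma}}$, matching the two-term form of the claim; the dimensional factor $\sqrt{d}$ enters when the first-order piece is measured through the Frobenius norm of the scale-free perturbation, via $\twonorm{\cdot}\le\twonorm{\cdot}_F\le\sqrt{d}\,\twonorm{\cdot}$. The stated special case $\delta=d^{-1}$ then follows by substituting $\gamma\asymp\sqrt{d/n}$ and discarding the $\gamma^2$ term as lower order.

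The hard part is precisely this square-root step: because $\msqrt{\Sigma}$ and $\msqrt{\sigest}$ do not commute, one cannot treat $A\mapsto\msqrt{A}$ as the scalar square root, and the soft estimate $\twonorm{\msqrt{\Sigma}-\msqrt{\sigest}}\le\twonorm{\Sigma-\sigest}^{1/2}$ from operator monotonicity loses a square root and is far too weak to give a rate linear in $\gamma$. Equivalently, in the polar view $\msqrt{\Sigma}\minvsqrt{\sigest}=OP$ the symmetric factor $P=(\minvsqrt{\sigest}\Sigma\minvsqrt{\sigest})^{1/2}$ is handed to us directly by \cref{lemma:sample-precision}, so the real obstacle is the orthogonal factor $O$, whose deviation from $I$ is governed by the commutator $[\msqrt{\Sigma},\minvsqrt{\sigest}]$ --- exactly the quantity the Sylvester representation is designed to bound, and the reason a one-line application of a scalar intuition does not suffice here.
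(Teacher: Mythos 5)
Your opening factorization $\msqrt{\Sigma}\minvsqrt{\sigest}-I=\msqrt{\Sigma}\bigl(\minvsqrt{\sigest}-\minvsqrt{\Sigma}\bigr)$ is exactly how the paper's proof ends, but your technical core is genuinely different from the paper's, and as sketched it does not reach the stated inequality. Track the norms through your chain: the Sylvester representation gives $\twonorm{\msqrt{\sigest}-\msqrt{\Sigma}}\leq\twonorm{\sigest-\Sigma}/\bigl(\lambda_{\min}(\msqrt{\sigest})+\lambda_{\min}(\msqrt{\Sigma})\bigr)\lesssim\twonorm{\minv{\Sigma}}^{1/2}\twonorm{\Sigma}\gamma$, since converting your scale-free $R$ back to the raw perturbation via $\sigest-\Sigma=\msqrt{\Sigma}R\msqrt{\Sigma}$ unavoidably costs a factor $\twonorm{\Sigma}$. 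Feeding this through your identity $\minvsqrt{\sigest}-\minvsqrt{\Sigma}=\minvsqrt{\sigest}(\msqrt{\Sigma}-\msqrt{\sigest})\minvsqrt{\Sigma}$ (with $\twonorm{\minvsqrt{\sigest}}\lesssim\twonorm{\minv{\Sigma}}^{1/2}$ from your sandwich) yields a first-order term $\twonorm{\msqrt{\Sigma}}\twonorm{\minv{\Sigma}}^{3/2}\twonorm{\Sigma}\,\gamma$, whereas the lemma claims $\twonorm{\msqrt{\Sigma}}\twonorm{\minv{\Sigma}}^{3/2}\sqrt{d}\,\gamma$. Your proposed mechanism for the $\sqrt{d}$ is the concrete gap: passing the first-order piece through $\twonorm{\cdot}\leq\|\cdot\|_F\leq\sqrt{d}\twonorm{\cdot}$ can only \emph{add} a factor $\sqrt{d}$ to a bound you already have; it cannot trade away the $\twonorm{\Sigma}$. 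Executed honestly, your argument proves a different (single-term) inequality, with $\twonorm{\Sigma}$ where the statement has $\sqrt{d}$, and with no $\gamma^2$ remainder term because a one-shot Lipschitz estimate has no first/second-order split.

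For comparison, the paper instead sets $E=\minv{\sigest}-\minv{\Sigma}$, bounds $\twonorm{E}\lesssim\twonorm{\minv{\Sigma}}\gamma$ by conjugating the event of \cref{lemma:sample-precision}, and then applies the Daleckii--Krein theorem to the matrix square root: the first-order term is a Hadamard product of $E$ with the divided-difference (Loewner) matrix, whose spectral norm the paper bounds by $\sqrt{d\twonorm{\minv{\Sigma}}}$---that single step is the source of both the $\sqrt{d}$ and the extra half-power of $\twonorm{\minv{\Sigma}}$---while the $O(\twonormsq{E})$ remainder produces the second displayed term $\twonormsq{\minv{\Sigma}}\gamma^2$. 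In your defence, there is a reason you could not land on the stated form: your bound equals $(\lambda_{\max}(\Sigma)/\lambda_{\min}(\Sigma))^{3/2}\gamma$ and is invariant under $\Sigma\mapsto c\Sigma$, as the left-hand side is, whereas the lemma's right-hand side scales like $c^{-1}$; no scale-consistent argument can prove it, and indeed the paper's own Daleckii--Krein step evaluates the divided differences of $\sqrt{\cdot}$ at the spectrum of $\Sigma$ (entries $1/(\sqrt{D_{ii}}+\sqrt{D_{jj}})$) when the matrix being perturbed is $\minv{\Sigma}$ (entries $\sqrt{D_{ii}D_{jj}}/(\sqrt{D_{ii}}+\sqrt{D_{jj}})$, bounded by $\twonorm{\msqrt{\Sigma}}/2$). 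So the fair summary is: your route is sound and yields a correct, scale-consistent bound of a different shape, but it does not deliver the lemma as written, and the $\sqrt{d}$ step in your sketch is unsupported.
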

\begin{proof}
We begin by deriving a bound for  $\twonorm{\minvsqrt{\sigest} - \minvsqrt{\Sigma}}$. Define $E = \minv{\sigest} - \minv{\Sigma}$. Observe that
\begin{align*}
    \twonorm{E} &= \twonorm{\minvsqrt{\Sigma} \msqrt{\Sigma} E \msqrt{\Sigma} \minvsqrt{\Sigma}} \\
    &= \twonorm{\minvsqrt{\Sigma} (\msqrt{\Sigma} \minv{\sigest} \msqrt{\Sigma} - I)  \minvsqrt{\Sigma}} \\
    &\leq \twonorm{\minv{\Sigma}} \twonorm{\msqrt{\Sigma} \minv{\sigest} \msqrt{\Sigma} - I},
\end{align*}
and now apply \cref{lemma:sample-precision} (since $\minvsqrt{\sigest} \Sigma \minvsqrt{\sigest}$ and $\msqrt{\Sigma} \minv{\sigest} \msqrt{\Sigma}$ have the same spectrum), giving
\begin{align*}
    \twonorm{E} &\lesssim \twonorm{\minv{\Sigma}} \gamma.
\end{align*}
Let $UDU^T$ be the eigendecomposition of $\Sigma$, and define the matrix $[\sqrt\cdot, \alpha]_{i,j} = \frac{1}{\sqrt{D_{ii}}+\sqrt{D_{jj}}}$ as in \citet{carlsson2018perturbation}. The Daleckii-Krein theorem
\citep{daleckii1965integration} tells us that
\begin{align*}
    \twonorm{\minvsqrt{\sigest} - \minvsqrt{\Sigma}} &= \twonorm{U([\sqrt\cdot, \alpha] \circ E)U^T} + O(\twonormsq{E}).
\end{align*}
Note that $\max_{i,j} |[\sqrt\cdot, \alpha]_{i,j}| = 1/2\sqrt{\lambda_{\min}(\Sigma)} \implies \twonorm{[\sqrt\cdot, \alpha]} \leq \sqrt{d \twonorm{\minv{\Sigma}}}$, and therefore by sub-multiplicativity of spectral norm under Hadamard product,
\begin{align*}
    \twonorm{\minvsqrt{\sigest} - \minvsqrt{\Sigma}} &\lesssim \sqrt{d \twonorm{\minv{\Sigma}}} \twonorm{E} + \twonormsq{E} \\
    &\lesssim \twonorm{\minv{\Sigma}}^{3/2} \sqrt{d} \gamma + \twonormsq{\minv{\Sigma}} \gamma^2.
\end{align*}
Finally, noting that
\begin{align*}
    \twonorm{\msqrt{\Sigma}\minvsqrt{\sigest}-I} &= \twonorm{\msqrt{\Sigma} (\minvsqrt{\sigest} - \minvsqrt{\Sigma})} \\
    &\leq \twonorm{\msqrt{\Sigma}} \twonorm{\minvsqrt{\sigest} - \minvsqrt{\Sigma}}
\end{align*}
completes the main proof. To see the second claim, note that $n \geq 2\twonorm{\minv{\Sigma}}$ implies $\twonorm{\minv{\Sigma}}^{3/2} \sqrt{d} \geq \twonormsq{\minv{\Sigma}} \gamma$, meaning the first of the two terms dominates.
\end{proof}

\begin{lemma}
\label{lemma:assumption-implication}
Let $\sigest$ be fixed. Then \cref{assumption:trivial-risk} is satisfied if and only if $\twonormsq{\Delta\beta_{\bproject}^*} < \twonormsq{\beta_{\bproject}^*}$.
\end{lemma}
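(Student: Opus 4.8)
The plan is to prove the claimed equivalence by directly evaluating both subspace risks appearing in \cref{assumption:trivial-risk}; each is an elementary quadratic form, and because the whole reduction consists of exact identities, the ``if and only if'' will follow immediately once the two risks are computed.

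First I would expand the risk of the trivial predictor. Since its prediction is identically zero, the definition of the subspace-restricted risk gives $\calR^\bproject_\newenv(\mathbf{0}) = \E_{p_\newenv}[(\beta_\bproject^{*T}\noise)^2]$, and under model~(\ref{eq:linear-model}) with $\E[\noise_0\noise_0^T] = I$ this equals $\twonormsq{\beta_\bproject^*}$. Next I would expand the risk of the oracle regressor. Substituting $x = \msqrt{\newsig}\noise$ and using that $\minvsqrt{\newsig}$ inverts $\msqrt{\newsig}$, the residual between the prediction $\beta_\bproject^{*T}\minvsqrt{\sigest}x$ and the target $\beta_\bproject^{*T}\noise$ becomes $\beta_\bproject^{*T}(\msqrt{\newsig}\minvsqrt{\sigest} - I)\noise = \beta_\bproject^{*T}\Delta\noise$, where $\Delta = \msqrt{\newsig}\minvsqrt{\sigest} - I$ is exactly the adjustment error from \cref{thm:minimax}. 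Taking the expectation and again using $\E[\noise_0\noise_0^T] = I$ yields $\calR^\bproject_\newenv(\beta^*) = \twonormsq{\Delta\beta_\bproject^*}$.

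With both quantities computed, the defining inequality $\calR^\bproject_\newenv(\beta^*) < \calR^\bproject_\newenv(\mathbf{0})$ is literally $\twonormsq{\Delta\beta_\bproject^*} < \twonormsq{\beta_\bproject^*}$. Since each intermediate step is an equality rather than a bound, this establishes both implications of the lemma simultaneously.

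I do not expect a substantive obstacle here; the only care required is in the bookkeeping of the whitening matrices. One must verify that $\minvsqrt{\sigest}x$ acts on the latent $\noise$ through precisely the factor $\msqrt{\newsig}\minvsqrt{\sigest} = \Delta + I$, matching the convention fixed in the proof of \cref{thm:minimax}, and that no environment-mean term survives the comparison. The latter is consistent with the lemma holding $\sigest$ fixed and constraining only $\Delta$, whereas the mean $b_\newenv$ is controlled separately through the radius $\rho$ in \cref{thm:minimax}.
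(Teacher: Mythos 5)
Your proposal is correct and is essentially identical to the paper's own proof: both compute $\calR^{\bproject}_{p_\newenv}(\beta^*) = \twonormsq{\Delta\beta^*_{\bproject}}$ and $\calR^{\bproject}_{p_\newenv}(\mathbf{0}) = \twonormsq{\beta^*_{\bproject}}$ by direct substitution of $\obs = \msqrt{\newsig}\noise$ and read off the equivalence from these exact identities. The one bookkeeping point you flagged---that the substitution literally produces the factor $\minvsqrt{\sigest}\msqrt{\newsig} - I$ rather than $\Delta = \msqrt{\newsig}\minvsqrt{\sigest} - I$, so one identifies it with $\Delta^T$ under the square-root convention---is handled the same way (and equally implicitly) in the paper's proof, so there is no substantive divergence.
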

\begin{proof}
The claim follows by rewriting the risk terms. Recall that $\Delta = \msqrt{\newsig} \minvsqrt{\sigest} - I$. Writing out the excess risk of the ground truth $\beta^*$ in the subspace $\bproject$,
\begin{align*}
    \calR_{p_\newenv}^\bproject(\beta^*) &= \E_{p_\newenv}\left[(\beta^{*T}_{\bproject} \minvsqrt{\sigest} \obs - \beta^{*T}_{\bproject}  \minvsqrt{\newsig} \obs)^2\right] \\
    &= \E_{p_\newenv}\left[(\beta^{*T} \bproject \minvsqrt{\sigest} \msqrt{\newsig} \noise - \beta^{*T} \bproject \noise)^2\right] \\
    &= \E_{p_\newenv}\left[(\beta^{*T} \bproject \Delta^T \noise)^2\right] \\
    &= \twonormsq{\Delta \bproject \beta^*}.
\end{align*}
Next, the excess risk of the null vector $\hat\beta = \mathbf{0}$ in the same subspace is
\begin{align*}
    \calR_{p_\newenv}^\bproject(\mathbf{0}) &= \E_{p_\newenv}[(\beta^{*T}_{\bproject} \minvsqrt{\newsig} \obs)^2] \\
    &= \E_{p_\newenv}[(\beta^{*T} \bproject \noise)^2] \\
    &= \twonormsq{\bproject\beta^*}. \qedhere
\end{align*}
\end{proof}

\begin{lemma}
\label{lemma:choose-deltas-directly}
For a fixed $\sigest$, choosing an environmental covariance $\newsig$ is equivalent to directly choosing the error terms $\Delta_1, \Delta_2, \Delta_{12}, \Delta_{21}$.
\end{lemma}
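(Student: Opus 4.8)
The plan is to prove the statement as a change of variables, by exhibiting an explicit invertible correspondence $\newsig \leftrightarrow \Delta \leftrightarrow (\Delta_1,\Delta_2,\Delta_{12},\Delta_{21})$ and checking that each object produced along the way is admissible. Recall that $\Delta = \msqrt{\newsig}\minvsqrt{\sigest} - I$ and that $\sigest$ is held fixed, so $\msqrt{\sigest}$ is a fixed matrix (invertible on its support; the degenerate case is handled with pseudoinverses on the common support). The first step is to invert this defining relation: it reads $\msqrt{\newsig} = (I + \Delta)\msqrt{\sigest}$, so the map $\msqrt{\newsig} \mapsto \Delta$ is an affine isomorphism of $\R^{d\times d}$ with inverse $\Delta \mapsto (I+\Delta)\msqrt{\sigest}$. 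Hence specifying the test-time transformation $\msqrt{\newsig}$ is the same as specifying the full matrix $\Delta$.

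Next I would pass from $\Delta$ back to a genuine covariance and confirm admissibility. Given any target $\Delta$, set $\msqrt{\newsig} := (I+\Delta)\msqrt{\sigest}$ and $\newsig := (I+\Delta)\,\sigest\,(I+\Delta)^{T}$. This is automatically positive semidefinite, being of the form $M\sigest M^{T}$ with $\sigest \succeq 0$, so it is a legitimate environmental covariance. Conversely, any admissible $\newsig$ yields a unique $\Delta$ through the recovery of $\msqrt{\newsig}$. Finally, since $\bproject = U_{\bproject}S_{\bproject}U_{\bproject}^{T}$ with $U_{\bproject}$ orthogonal, conjugation by $U_{\bproject}$ is itself a bijection of $\R^{d\times d}$, and writing $U_{\bproject}^{T}\Delta U_{\bproject} = \begin{bmatrix}\Delta_1 & \Delta_{12}\\ \Delta_{21} & \Delta_2\end{bmatrix}$ partitions $\Delta$ into the four blocks of sizes determined by $\rank(\bproject)$ and $\rank(I-\bproject)$. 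Because $U_{\bproject}$ is fixed and orthogonal, any conformable quadruple $(\Delta_1,\Delta_2,\Delta_{12},\Delta_{21})$ corresponds to exactly one $\Delta$, hence to exactly one $\newsig$ by the previous step. Composing the three bijections gives the claimed equivalence.

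The step requiring the most care—and the main obstacle—is the validity of the reverse construction under the paper's square-root convention. Since $\msqrt{\newsig}$ is not an arbitrary factor of $\newsig$ but the one recovered from its eigendecomposition (as pinned down by \cref{assumption:svd}), I must verify that recovering the square root of the constructed $\newsig = (I+\Delta)\sigest(I+\Delta)^{T}$ returns exactly the factor $(I+\Delta)\msqrt{\sigest}$, so that feeding $\newsig$ back through $\msqrt{\newsig}\minvsqrt{\sigest}-I$ reproduces the chosen $\Delta$ rather than an orthogonally rotated version of it. The clean way to sidestep this is to let the adversary specify the environment by its transformation $A_\newenv := (I+\Delta)\msqrt{\sigest}$ directly—this is precisely a point of $\calA_\rho$, which is parametrized by the pair $(A_\newenv, b_\newenv)$—and to note that the excess risk enters only through $A_\newenv\minvsqrt{\sigest} = I + \Delta$, so the induced covariance $\newsig = A_\newenv A_\newenv^{T}$ plays only a bookkeeping role. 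Checking this consistency, and that the construction introduces no hidden constraint beyond positive semidefiniteness, is the crux; the remaining content is elementary linear algebra about orthogonal conjugation preserving the block structure.
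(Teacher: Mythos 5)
Your proposal is correct and follows the same skeleton as the paper's proof, which likewise factors the claim into the correspondence $\newsig \leftrightarrow \Delta$ (asserted to be one-to-one ``due to the unique definition of square root'' under \cref{assumption:svd}) and the correspondence $\Delta \leftrightarrow (\Delta_1,\Delta_2,\Delta_{12},\Delta_{21})$ given by conjugation with the fixed orthogonal $U_{\bproject}$, then concludes by composing bijections. Where you differ is precisely on the point you call the crux, and there your version is \emph{more} careful than the published one: the paper's three-line proof asserts injectivity of $\newsig \mapsto \msqrt{\newsig}\minvsqrt{\sigest}-I$ but never verifies surjectivity, i.e.\ that every block quadruple is realized by an admissible covariance. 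Under the paper's eigendecomposition convention this is genuinely nontrivial: the recovered square root of your constructed $\newsig = (I+\Delta)\,\sigest\,(I+\Delta)^T$ has the form $U_\newenv S_\newenv$ and equals $(I+\Delta)\msqrt{\sigest}$ only up to a right orthogonal factor, and matrices of the form $US$ (equivalently, $M$ with $M^TM$ diagonal) have dimension $d(d+1)/2 < d^2$, so not every $\Delta$ is reachable if the convention is enforced on the test environment. Your remedy---parametrizing the adversary by $A_\newenv := (I+\Delta)\msqrt{\sigest}$ directly, noting that $\calA_\rho$ in \cref{thm:minimax} is already defined as a set of pairs $(A_\newenv,b_\newenv)$ and that the excess risk enters only through $A_\newenv\minvsqrt{\sigest} = I+\Delta$---is faithful to how the lemma is actually used in the minimax proof (where the adversary sets, e.g., $\Delta_1 = \lambda \hat\beta_{I-\bproject}\hat\beta_{I-\bproject}^T/\twonormsq{\hat\beta_{I-\bproject}}$, choices that themselves need this reparametrization to be realizable). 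In short: same decomposition and same composition-of-bijections conclusion, but your treatment patches a gloss in the paper's own argument rather than containing a gap itself.
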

\begin{proof}
For a fixed $\sigest$, due to the unique definition of square root as a result of \cref{assumption:svd}, the map $\newsig \to \msqrt{\newsig} \minvsqrt{\sigest} - I$ is one-to-one. Recall that we can write
\begin{align*}
    U_{\hat\Pi}^T \Delta U_{\hat\Pi} &= 
    \begin{bmatrix}
    \Delta_1 & \Delta_{12} \\
    \Delta_{21} & \Delta_2
    \end{bmatrix}
\end{align*}
which defines a one-to-one map from $\Delta$ to the error terms. Since the composition of bijective functions is bijective, the claim follows.
\end{proof}


\section{Implementation Details}
\label{app:implementation}

\subsection{Evaluation pipeline}

\begin{figure}[ht]
    \centering
    \includegraphics[height=1.2in]{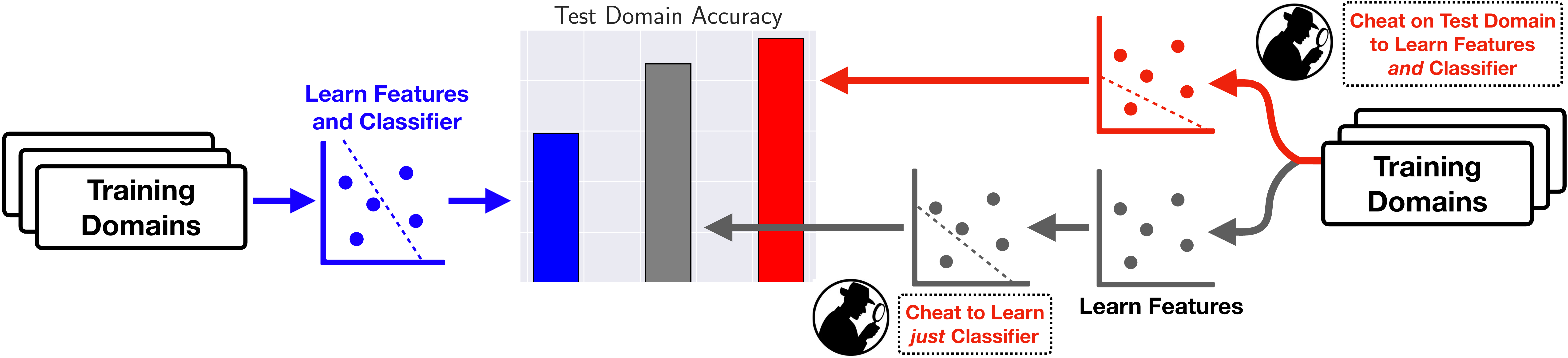}
    \caption{Depiction of evaluation pipeline. Standard training on train domains leads to poor performance. Cheating while training the full network (features and classifier) does substantially better. However, cheating on \emph{just} the linear classifier does almost as well, implying that the features learned without cheating are already good enough for massive improvements over SOTA.}
    \label{fig:eval-pipeline}
\end{figure}

\cref{fig:eval-pipeline} depicts the overall pipeline for evaluation of the different approaches we describe in \cref{sec:erm-conjecture}. Our baselines are three distinct pipelines, each slightly different:

\begin{enumerate}
\item The first pipeline (dark blue in \cref{fig:eval-pipeline}) is the standard method of evaluating ERM on a new domain. We train the entire network (the combination of a feature embedder comprising all layers except the last linear layer, and the last layer linear classifier) end-to-end on the training domains, simultaneously learning the features and the linear classifier on the training domains via backpropagation. When evaluated on a new domain, this achieves state-of-the-art accuracy \citep{gulrajani2021search}, but it still performs quite poorly.
\item The second pipeline (red) is very similar to the first, but we include the test domain among the domains on which the network is trained end-to-end. Because this means that the test distribution is one of the training domains, this simulates an ``ideal'' setting where no distribution shift occurs from train-time to test-time. As such, it is unsurprising that this approach performs substantially better (though it still leaves a bit to be desired---this raises a separate question about failures of in-distribution learning with multiple domains). As this approach requires cheating, it is completely unrealistic to expect current methods to even begin to approach this baseline, but it gives a good sense of what would be the best performance to hope for.
\item The final pipeline (grey) is our main experimental contribution. Here, the \emph{features} (all but the last layer) are learned without cheating, as in the first pipeline. Next, we freeze the features and cheat only while learning a linear classifier on top of these features. Not only does this method do significantly better than the first baseline, it actually performs almost as well as---and sometimes better than---the second pipeline. Thus, we find that standard features learned via ERM without cheating are \emph{already good enough for generalization} and that the main bottleneck to reaching the accuracy of the second baseline is in learning a good linear classifier \emph{only}. This has several important advantages to current methods which attempt to change the entire end-to-end process, which we explicate in \cref{sec:erm-conjecture} in the main body.
\end{enumerate}

\subsection{Code details}
All features were learned by finetuning a ResNet-50 using the default settings and hyperparameter sweeps of the \textsc{DomainBed} benchmark \citep{gulrajani2021search}. We extracted features from 3 trials, with 5 random hyperparameter choices per trial, picking the one with the best training domain validation accuracy. We used the default random splits of 80\% train / 20\% test for each domain.

Using frozen features, the cheating linear classifier was trained by minimizing the multinomial logistic loss with full-batch SGD with momentum for 3000 steps. We did not use a validation set.

For the main experiments,  all algorithms were trained using full-batch L-BFGS \citep{Liu1989lbfgs}. We used the exact same optimization hyperparameters for all methods; the default learning rate of 1 was unstable when optimizing IRM, frequently diverging, so we lowered the learning rate until IRM consistently converged (which occurred at a learning rate of 0.2). Since the IRM penalty only makes sense with both a feature embedder \emph{and} a classification vector, we used an additional linear layer for IRM, making the objective non-convex. Presumably due to their convexity for linear classifiers, all other methods were unaffected by this change. For all methods, we halted optimization once 20 epochs occurred with no increase in training domain validation accuracy; the maximum validation accuracy typically occurred within the first 5 epochs.

For stability when whitening (and because the number of samples per domain was often less than the feature dimension), in estimating $\hat\Sigma_e$ for each environment we shrank the sample covariance towards the identity. Specifically, we define $\hat\Sigma_e = (1-\rho) \frac{1}{n} \sum_{i=1}^n x_i x_i^T + \rho I$, with $\rho = 0.1$, and $\rho=0.01$ for DomainNet due to its much larger size. We found that increasing $\lambda$ beyond $\sim$1 had little-to-no effect on the accuracy, loss, \emph{or} penalty of the \dare solution (see \cref{app:addl-exps} for ablations). However, we did observe that choosing a very large value for $\lambda$ (e.g., $10^5$ or higher) could result in poor conditioning of the objective, with the result being that the L-BFGS optimizer took several epochs for the loss to begin going down.

\section{Additional Experiments}
\label{app:addl-exps}
Here we present additional experimental results. All reported accuracies represent the mean of three trials, and all error bars (where present) display 90\% confidence intervals calculated as $\pm 1.645 \frac{\hat\sigma}{\sqrt{n}}$. Note though that \textbf{the results are not independent across methods, so simply checking if the error bars overlap is overly conservative in identifying methods which perform better.}

\subsection{Accuracy of the Test Covariance Whitener}

As we do not have access to the true test-time covariance, we estimate the adjustment with the average of the train-time adjustments. \cref{table:covariance-similarity} reports the normalized squared Frobenius norm of our estimate's error to the sample covariance adjustment, defined as $\frac{\|\hat W - W\|_F^2}{\|W\|_F^2}$, where $W := \minvsqrt{\hat\Sigma_\newenv}$ is the sample covariance adjustment and $\hat W := \frac{1}{|\calE|}\sum_{e\in\calE} \minvsqrt{\Sigma_e}$ is our averaging estimate. We find that this normalized error is consistently small, meaning our estimated adjustment is reasonably close to the ``true'' adjustment, relative to its spectrum (we put ``true'' in quotation marks because the best we can do is estimate it via the sample covariance). This helps explain why our averaging adjustment performs so well in practice, though we expect future methods could improve on this estimate (particularly on the last domain of PACS).

\begin{table}[ht!]
    \centering
    \begin{tabular}{lc}
        \toprule
        \textbf{Dataset} & \textbf{Normalized Error}\\
        \midrule
        \textbf{Office-Home} &\\
	\quad A &$0.033$\\
	\quad C &$0.040$\\
	\quad P &$0.022$\\
	\quad R &$0.019$\\
        \midrule
        \textbf{PACS} &\\
	\quad A &$0.052$\\
	\quad C &$0.016$\\
	\quad P &$0.025$\\
	\quad S &$0.217$\\
        \midrule
        \textbf{VLCS} &\\
	\quad C &$0.094$\\
	\quad L &$0.039$\\
	\quad S &$0.051$\\
	\quad V &$0.029$\\
        \bottomrule
    \end{tabular}
    \vspace{12pt}
    \caption{Mean normalized adjustment estimation error for each dataset and each train-domain/test-domain split.}
    \label{table:covariance-similarity}
\end{table}

\subsection{Alignment of Domain-Specific Optimal Classifiers}

As discussed in \cref{sec:dare-objective}, \dare does not project out varying subspaces but rather aligns them such that the adjusted domains have similar optimal classifiers simultaneously. To verify that this is actually happening, we learn the optimal linear classifier for each domain individually and evaluate the inter-domain cosine similarity for these vectors for each class. We see that without adjustment, different domains' optimal linear decision boundaries have normals with small alignment on average, which explains why trying to learn a single linear classifier which does well on all domains simultaneously performs poorly in most cases. After alignment, the individually optimal classifiers are more aligned, which allows a single classifier to perform better on all domains. Furthermore, this is done without throwing away the varying component (as would be done by invariant prediction \citep{peters2016causal}), allowing \dare to use more information and resulting in higher test accuracy.

\begin{table}[ht!]
    \centering
    \begin{tabular}{lcc}
        \toprule
        \textbf{Dataset} & \textbf{With Adjustement} & \textbf{Without Adjustement}\\
        \midrule
        \textbf{Office-Home} &&\\
	\quad A &$0.693
	$ &$0.596
	$\\
	\quad C &$0.710
	$ &$0.691
	$\\
	\quad P &$0.695
	$ &$0.636
	$\\
	\quad R &$0.642
	$ &$0.590
	$\\
        \midrule
        \textbf{PACS} &&\\
	\quad A &$0.903
	$ &$0.863
	$\\
	\quad C &$0.896
	$ &$0.776
	$\\
	\quad P &$0.905
	$ &$0.827
	$\\
	\quad S &$0.903
	$ &$0.791
	$\\
        \midrule
        \textbf{VLCS} &&\\
	\quad C &$0.598
	$ &$0.200
	$\\
	\quad L &$0.723
	$ &$0.436
	$\\
	\quad S &$0.674
	$ &$0.231
	$\\
	\quad V &$0.591
	$ &$0.210
	$\\
        \bottomrule
    \end{tabular}
    \vspace{12pt}
    \caption{Mean cosine similarity between linear classification vectors which are individually optimal for their respective domains (learned via logistic regression). For each dataset and each train-domain/test-domain split we report the average similarity across all class normal vectors and all domain pairs.}
    \label{table:classifier-alignment}
\end{table}

\subsection{Ablations}

\begin{figure}[ht!]
    \centering
    \includegraphics[width=0.7\linewidth]{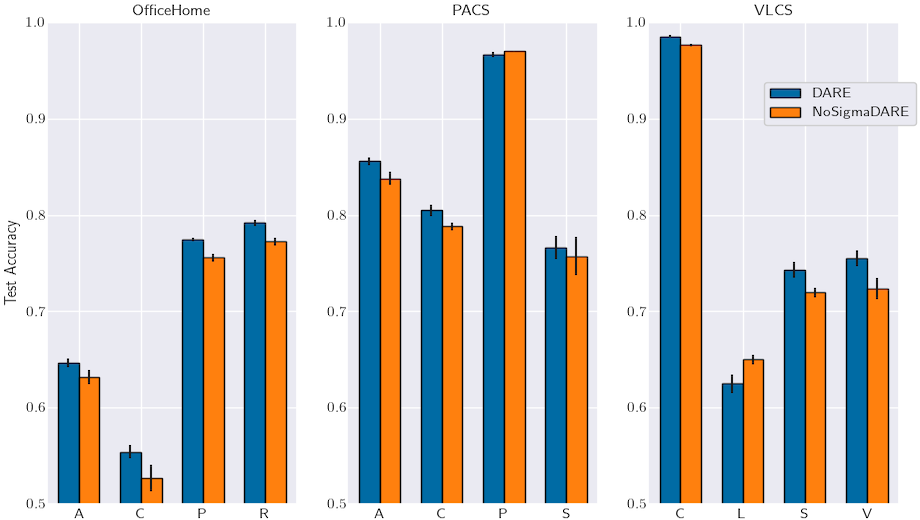}
    \caption{Demonstration of the effect of whitening. NoSigmaDARE is the exact same algorithm as \dare but with no covariance whitening. In almost all cases, covariance whitening + guessing at test-time results in better performance. We expect under much larger distribution shift that this pattern may reverse.}
\end{figure}

\begin{figure}[ht!]
    \centering
    \includegraphics[width=0.7\linewidth]{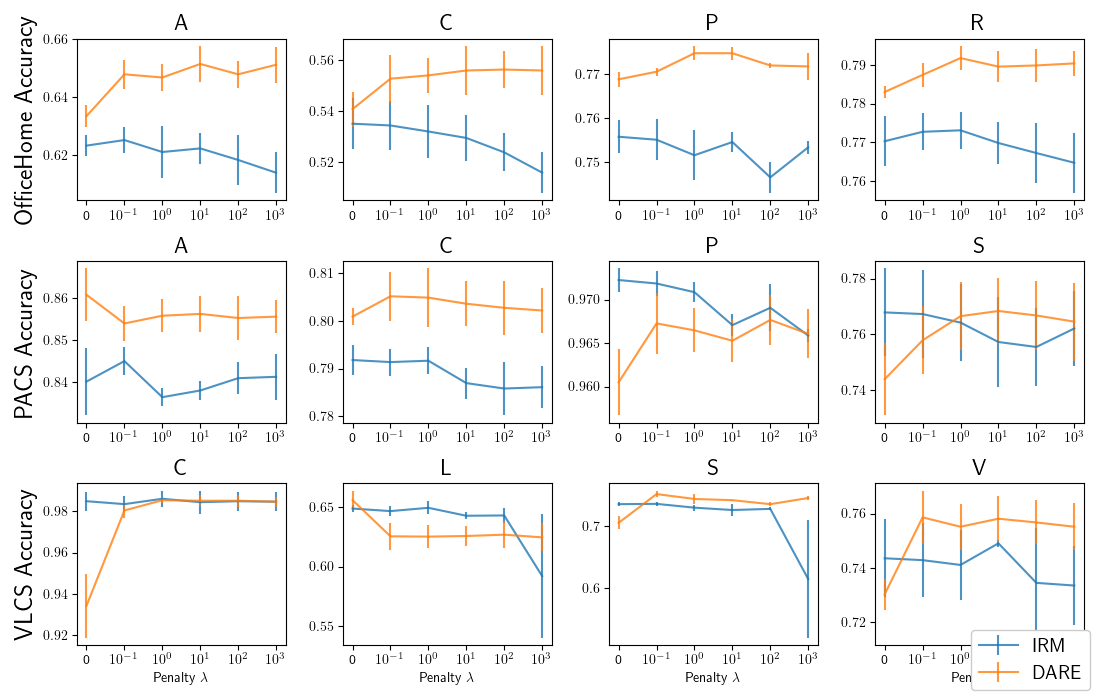}
    \caption{Effect of penalty term $\lambda$ on the two algorithms which use it. $\lambda=0$ corresponds to no constraint, and the lower performance demonstrates that this invariance requirement is essential to the quality of the learned classifier. For $\lambda\neq0$, \dare accuracy is extremely robust, effectively constant for all $\lambda \geq 1$; in practice we also found the penalty term itself to always be $\sim$0. In contrast, IRM accuracy appears to \emph{decrease} with increasing $\lambda$, implying that the observed benefit of IRM primarily comes from the domain reweighting as in our Reweighted ERM method.}
    \label{fig:ablate-lambda}
\end{figure}

\begin{figure}[ht!]
    \centering
    \includegraphics[width=0.7\linewidth]{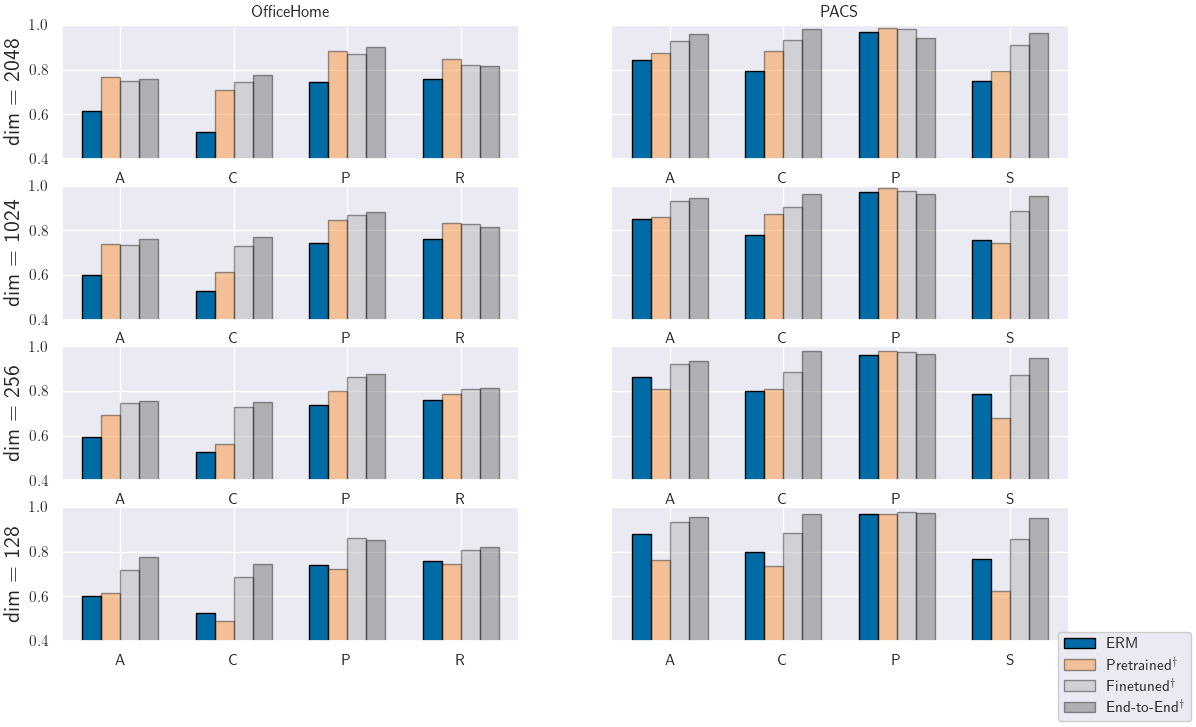}
    \caption{Effect of final feature bottleneck dimensionality on cheating accuracy. Reducing the dimensionality reduces accuracy of all methods to varying degrees, though in some cases it actually \emph{increases} test accuracy. We observe that the main pattern persists, though the gap between cheating on finetuned features and traditional ERM shrinks as the dimensionality is reduced substantially. To reduce dimensionality of the pretrained features we use a random projection; unsurprisingly, the quality dramatically falls as the number of dimensions is reduced.}
\end{figure}

\end{document}